\DeclarePairedDelimiter\paren{\lparen}{\rparen}
\DeclarePairedDelimiter\set{\{}{\}}
\definecolor{orange}{rgb}{1,0.5,0}
\def\bydef{:=}
\def\F{\mathcal{F}}
\def\G{\mathcal{G}}
\def\X{\mathcal{X}}
\def\Z{\mathcal{Z}}
\def\R{\mathcal{R}}
\def\G{\mathcal{G}}
\def\M{\mathcal{M}}
\def\P{\mathcal{P}}
\def\I{\mathcal{I}}
\def\J{\mathcal{J}}
\newcommand{\oo}[1]{\text{\sf 1}_{[#1]}}
\def\P{\mathbb{P}}
\newcommand{\e}[1]{\mathbb{E}\left[#1 \right]}
\newcommand{\ee}[2]{\mathbb{E}_{#1}\left[#2 \right]}
\newtheorem{remark}{Remark}
\newtheorem{theorem}{Theorem}
\newtheorem{lemma}{Lemma}
\newtheorem{corollary}{Corollary}
\newcommand{\halmos}{\rule[-0.4mm]{2.0mm}{3.2mm}}
\newenvironment{proof}{\par\noindent{\bf Proof\ }}{\hfill\halmos\\[2mm]}
\title{AdaGAN: Boosting Generative Models}
\author[1]{Ilya Tolstikhin}
\author[2]{Sylvain Gelly}
\author[2]{Olivier Bousquet}
\author[1]{Carl-Johann Simon-Gabriel}
\author[1]{Bernhard Sch\"olkopf}
\affil[1]{Max Planck Institute for Intelligent Systems}
\affil[2]{Google Brain}
\date{}
\begin{document}
\maketitle
\begin{abstract}
Generative Adversarial Networks (GAN) \cite{goodfellow2014generative}
are an effective method for training generative models of complex data
such as natural images. However, they are notoriously
hard to train and can suffer from the problem of {\em missing modes}
where the model is not able to produce examples in certain regions of the
space.
We propose an iterative procedure, called {\em AdaGAN},
where at every step we add a new component
into a mixture model by running a GAN algorithm on a reweighted sample.
This is inspired by {\em boosting} algorithms,
where many potentially weak individual predictors are greedily aggregated to
form a strong composite predictor.
We prove that such an incremental
procedure leads to convergence to the
true distribution in a finite number of steps if each step is optimal,
and convergence at an exponential rate otherwise.
We also illustrate experimentally that this procedure addresses
the problem of missing modes.
\end{abstract}


\section{Introduction}
Imagine we have a large corpus, containing unlabeled pictures of animals, and our task is to build a generative probabilistic model of the data.
We run a recently proposed algorithm and end up with a model which produces impressive pictures of cats and dogs, but not a single giraffe.
A natural way to fix this would be to manually remove all cats and dogs from the training set and run the algorithm on the updated corpus.
The algorithm would then have no choice but to produce new animals and, by iterating this process until there's only giraffes left in the training set, we would arrive at a model generating giraffes (assuming sufficient sample size). 
At the end, we aggregate the models obtained by building a mixture model.
Unfortunately, the described meta-algorithm requires manual work for removing certain pictures from the \emph{unlabeled} training set at every iteration.

Let us turn this into an automatic approach, and rather than including or excluding a picture, put continuous weights on them. To this end, we train a binary classifier to separate ``true'' pictures of the original corpus from the set of ``synthetic'' pictures generated by the mixture of \emph{all the models} trained so far.
We would expect the classifier to make \emph{confident} predictions for the true pictures of animals missed by the model (giraffes), because there are no synthetic pictures nearby to be confused with them.
By a similar argument, the classifier should make less confident predictions for the true pictures containing animals already generated by one of the trained models (cats and dogs).
For each picture in the corpus, we can thus use the classifier's confidence to compute a weight which we use for that picture in the next iteration, to be performed on the re-weighted dataset.

The present work provides a principled way to perform this re-weighting, with theoretical guarantees showing that the resulting mixture 
 models indeed approach the true data distribution.\footnote{Note that the term ``mixture'' should not be interpreted to imply
that each component models only one mode: the models to be combined into a
mixture can themselves cover multiple modes.}

Before discussing how to build the mixture, let us consider the question of building a single
generative model.
A recent trend in modelling high dimensional data such as natural images is
to use neural networks~\cite{KW14, goodfellow2014generative}.
One popular approach are {\em Generative Adversarial Networks}~(GAN)~\cite{goodfellow2014generative}, where the generator is trained adversarially
against a classifier, which tries to differentiate the true from the generated data.
While the original GAN algorithm often produces realistically looking data, 
several issues were reported in the literature,
among which the \emph{missing modes problem}, where the generator converges to only one or a few modes of the data
distribution, thus not providing enough variability in the generated data.
This seems to match the situation described earlier, which is why we will most often illustrate our algorithm with a GAN as the underlying base generator.
We call it \emph{AdaGAN}, for Adaptive GAN, but
we could actually use any other generator: a Gaussian mixture model, a VAE \cite{KW14}, a WGAN \cite{wgan},  or even an unrolled \cite{MPPS2017}
or mode-regularized GAN~\cite{che2016mode},
which were both already specifically developed to tackle the missing mode problem.
Thus, we do not aim at improving the original GAN or any other generative algorithm.
We rather propose and analyse a meta-algorithm that can be used on top of any of them.
This meta-algorithm is similar in spirit to AdaBoost \cite{FS97} in the sense
that each iteration corresponds to learning a ``weak'' generative model (e.g., GAN) with respect to
a re-weighted data distribution. The weights change over time to focus on the ``hard''
examples, i.e.\:those that the mixture has not been able to properly generate
so far.

\subsection{Boosting via Additive Mixtures}
\label{sec:intro-our-approach}
Motivated by the problem of missing modes, in this work we propose to use multiple generative models
combined into a mixture. These generative models are trained iteratively
by adding, at each step, another model to the mixture that should hopefully
cover the areas of the space not covered by the previous mixture components.\footnote{Note that the term ``mixture'' should not be interpreted to imply
that each component models only one mode: the models to be combined into a
mixture can themselves cover multiple modes already.}
We show analytically that the optimal next mixture component can be obtained by
reweighting the true data, and thus propose to use the reweighted data distribution
as the target for the optimization of the next mixture components.
This leads us naturally to a \emph{meta-algorithm}, which is similar in spirit to AdaBoost in the sense
that each iteration corresponds to learning a ``weak'' generative model (e.g., GAN) with respect to
a reweighted data distribution. The latter adapts over time to focus on the ``hard''
examples, i.e.\:those that the mixture has not been able to properly generate
thus far.


Before diving into the technical details we provide an informal intuitive discussion of our new meta-algorithm, which we call \emph{AdaGAN} (a shorthand for Adaptive GAN, similar to AdaBoost).
The pseudocode is presented in Algorithm \ref{alg:AdaGAN}.

On the first step we run the GAN algorithm (or some other generative model) in the usual way and initialize our generative model with the resulting generator $G_1$.
On every $t$-th step we 
(a)~pick the mixture weight $\beta_t$ for the next component,
(b)~update weights $W_t$ of examples from the training set in such a way to bias the next component towards ``hard'' ones, not covered by the current mixture of generators $G_{t-1}$,
(c)~run the GAN algorithm, this time importance sampling mini-batches according to the updated weights~$W_t$, resulting in a new generator $G^c_t$, 
and finally (d) update our mixture of generators $G_t = (1-\beta_t) G_{t-1} + \beta_t G^c_t$ (notation expressing the mixture of $G_{t-1}$ and $G^c_t$ with probabilities $1-\beta_t$ and $\beta_t$).
This procedure outputs $T$ generator functions $G_1^c,\dots,G_T^c$ and $T$ corresponding non-negative weights $\alpha_1,\dots,\alpha_T$, which sum to one.
For sampling from the resulting model we first define a generator $G_i^c$, by sampling the index $i$ from a multinomial distribution with parameters $\alpha_1,\dots,\alpha_T$, 
and then we return $G^c_i(Z)$, where $Z\sim P_Z$ is a standard latent noise variable used in the GAN literature.

\begin{algorithm}\captionsetup{labelfont={sc,bf}, labelsep=newline}
\caption{AdaGAN, a meta-algorithm to construct a ``strong'' mixture of $T$ individual GANs, trained sequentially.
The mixture weight schedule ChooseMixtureWeight and the training set
reweighting schedule UpdateTrainingWeights should be provided by the user.
Section \ref{sec:adaGAN} gives a complete instance of this family.}
\label{alg:AdaGAN}
\begin{spacing}{1.4}
\renewcommand{\algorithmicrequire}{\textbf{Input:}}
\renewcommand{\algorithmicensure}{\textbf{Output:}}
  \begin{algorithmic}
  \REQUIRE{Training sample $S_N:=\{X_1,\dots,X_N\}$.}
  \ENSURE{Mixture generative model $G=G_T$.}
  \STATE{
  Train vanilla GAN:
  
  $W_1 = (1/N, \dots, 1/N)$  
  
  $G_1 = \mathrm{GAN}(S_N, W_t)$
  }
  \FOR{$t=2,\dots,T$}
  \STATE{
  \emph{\#Choose a mixture weight for the next component}

  $\beta_t = \mathrm{ChooseMixtureWeight}(t)$

  \emph{\#Update weights of training examples}

  $W_t = \mathrm{UpdateTrainingWeights}(G_{t-1}, S_N, \beta_t)$

  \emph{\#Train $t$-th ``weak'' component generator $G^c_t$}

  $G^c_t = \mathrm{GAN}(S_N, W_t)$

  \emph{\#Update the overall generative model}

  \emph{\#Notation below means forming a mixture of $G_{t-1}$ and $G^c_t$.}

  $G_t = (1-\beta_t) G_{t-1} + \beta_t G^c_t$
  }
  \ENDFOR
  \end{algorithmic}
\end{spacing}
\end{algorithm}

The effect of the described procedure is illustrated in a toy example in Figure~\ref{fig:missing_modes}.
On the~left images, the~red dots are the training (true data) points,
the~blue dots are points sampled from the model mixture of generators~$G_t$. 
The~background colour gives the density of the distribution corresponding to~$G_t$, non zero around the generated points, (almost) zero everywhere else.
On the right images, the color corresponds to the weights of training points, following the reweighting scheme proposed in this work.
The top row corresponds to the first iteration of AdaGAN, and the bottom row to the second iteration.
After the first iteration (the result of the vanilla GAN), we see that only the top left mode is covered,
while the three other modes are not covered at all. 
The~new weights (top right) show that the examples from covered mode
are aggressively downweighted. 
After the second iteration (bottom left), the combined generator can then generate two modes.

Although motivated by GANs, we cast our results in the general framework of
the minimization of an \hbox{$f$-divergence} (cf.\ \cite{nowozin2016f}) with respect to an additive mixture of
distributions. 
We also note that our approach may be combined with different ``weak'' generative models, including but not limited to GAN.

\begin{figure}[h]
    \centering
    \begin{minipage}[t]{0.45\textwidth}
    \includegraphics[width=\linewidth]{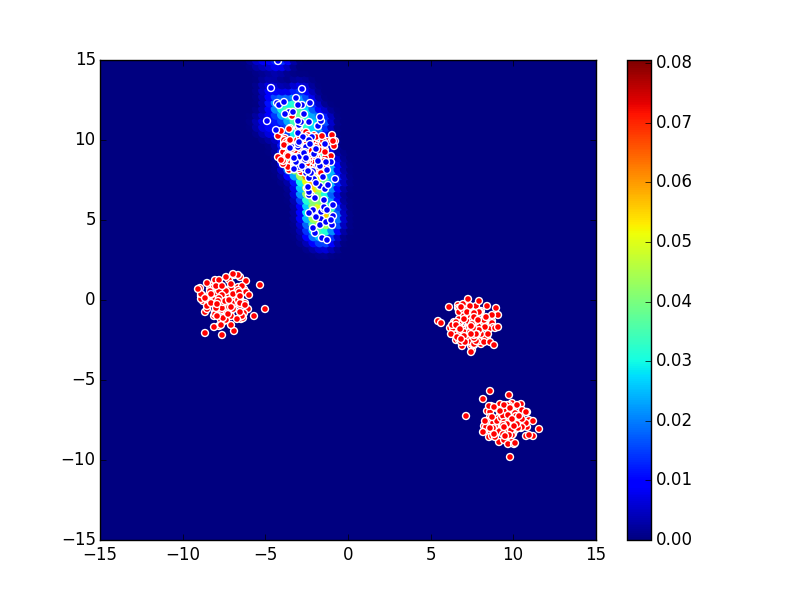}
    \end{minipage}
    \begin{minipage}[t]{0.45\textwidth}
    \includegraphics[width=\linewidth]{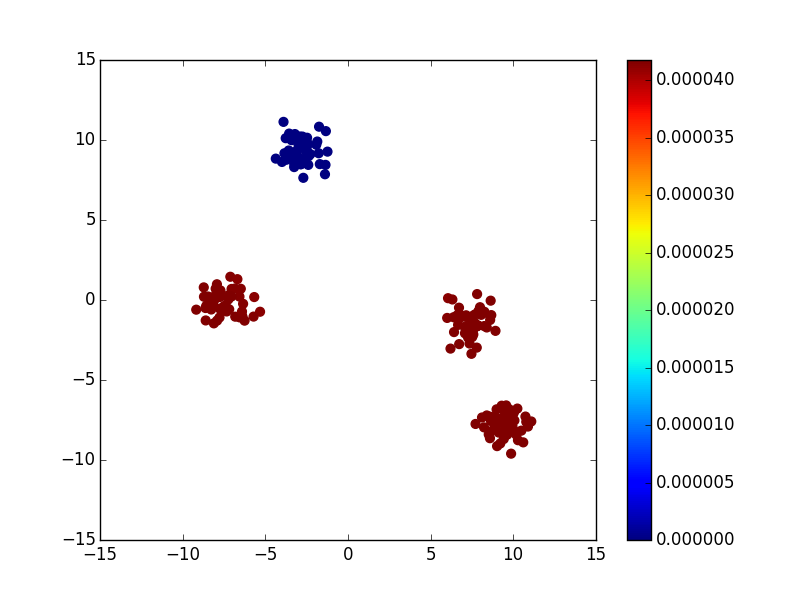}
    \end{minipage}
    
    \vspace{.1cm}
    \begin{minipage}[t]{0.45\textwidth}
    \includegraphics[width=\linewidth]{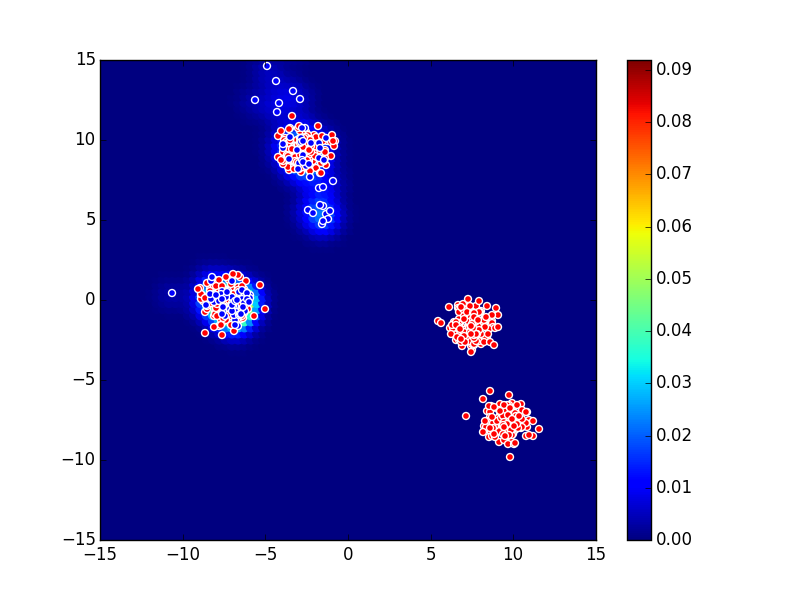}
    \end{minipage}
    \begin{minipage}[t]{0.45\textwidth}
    \includegraphics[width=\linewidth]{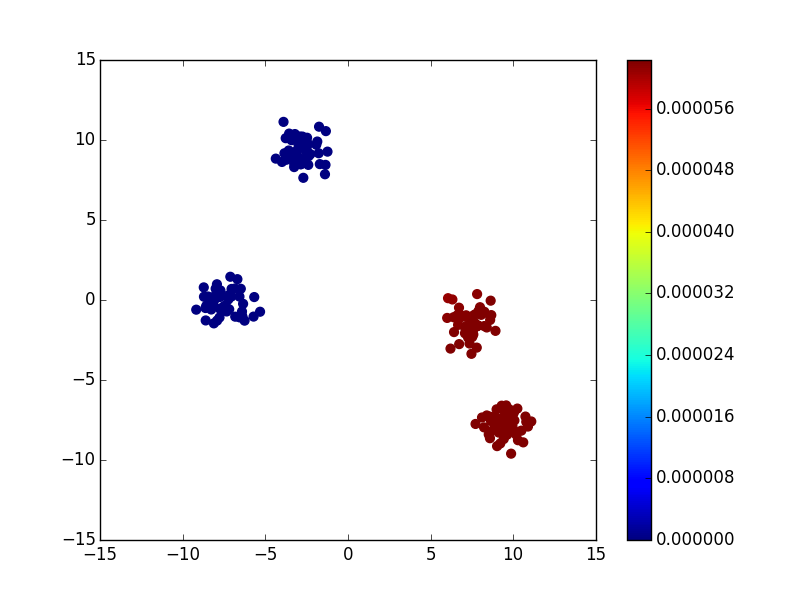}
    \end{minipage}
    \caption{A toy illustration of the missing mode problem and the effect of sample
reweighting, following the discussion in Section \ref{sec:intro-our-approach}. On the~left images, the~red dots are the training (true data) points,
the~blue dots are points sampled from the model mixture of generators~$G_t$. On the right images, the color corresponds to the weights of training points, following the reweighting scheme proposed in this work.
The top row corresponds to the first iteration of AdaGAN, and the bottom row to the second iteration.}
    \label{fig:missing_modes}
\end{figure}

\subsection{Related Work}
Several authors \cite{welling2002self, tu2007learning, grover2016boosted}
have proposed to use boosting techniques in the context of
density estimation by incrementally adding components in the log domain.
In particular, the work of Grover and Ermon \cite{grover2016boosted}, done in parallel to and independent of ours, is applying this idea to GANs.
A major downside of these approaches is that the resulting mixture is
a product of components and sampling from such a model is nontrivial
(at least when applied to GANs where the model density is not expressed
analytically) and
requires to use techniques such as Annealed Importance Sampling \cite{N01} for the
normalization.

Rosset and Segal \cite{rosset2002boosting} proposed to use an
additive mixture model in the case where the log likelihood can be computed.
They derived the update rule via computing the
steepest descent direction when adding a component with
infinitesimal weight. This leads to an update rule which is degenerate if the
generative model can produce arbitrarily concentrated distributions (indeed
the optimal component is just a Dirac distribution) which is thus not suitable
for the GAN setting.
Moreover, their results do not apply once the weight $\beta$ becomes non-infinitesimal.
In contrast, for any fixed weight of the new component our approach gives the
overall optimal update (rather than just the best
direction), and applies to any $f$-divergence.
Remarkably, in both theories, improvements of the mixture are guaranteed only if the new ``weak'' learner is still good enough (see Conditions \ref{eq:variational-surrogate}\&\ref{eq:gamma})

Similarly, Barron and Li \cite{barron1997mixture} studied the construction of mixtures minimizing the Kullback divergence and proposed a greedy procedure for doing so. They also proved that under certain conditions, finite mixtures can approximate arbitrary mixtures at a rate $1/k$ where $k$ is the number of components in the mixture when the weight of each newly added component is $1/k$. These results are specific to the Kullback divergence but are consistent with our more general results.

Wang et al.\:\cite{wang2016ensembles} propose an additive procedure similar to
ours but with a different reweighting scheme, which is not motivated by a theoretical analysis of optimality conditions.
On every new iteration the authors propose to run GAN on the top $k$ training examples with maximum value of the discriminator from the last iteration.
Empirical results of Section \ref{sec:experiments} show that this heuristic often fails to address the missing modes problem.

Finally, many papers investigate completely different
approaches for addressing the same issue by directly modifying the training objective of an individual GAN.
For instance, Che et al.\:\cite{che2016mode} add an autoencoding cost to the training objective of GAN,
while Metz et al.\:\cite{MPPS2017} allow the generator to ``look few steps ahead'' when making a gradient step.

\bigskip
The paper is organized as follows.
In Section \ref{sec:theory} we present our main theoretical results regarding optimization of mixture models under general $f$-divergences.
In particular we show that it is possible to build an optimal mixture in
an incremental fashion, where each additional component is obtained by
applying a GAN-style procedure with a reweighted distribution. 
In Section \ref{sec:optimal-convergence-analysis} we show that if the GAN optimization at each step is perfect, the process converges to the true data distribution at exponential rate (or even in a \emph{finite number of steps}, for which we provide a necessary and sufficient condition).
Then we show in Section \ref{sec:weak-to-strong} that imperfect GAN solutions still lead to the exponential rate of convergence under certain ``weak learnability'' conditions.
These results naturally lead us to a new boosting-style iterative procedure for constructing generative models, which is combined with GAN in Section~\ref{sec:adaGAN}, resulting in a new algorithm called \emph{AdaGAN}.
Finally, we report initial empirical results in Section \ref{sec:experiments}, where we compare AdaGAN with several benchmarks, including original GAN, uniform mixture of multiple independently trained GANs, and iterative procedure of Wang et al.\:\cite{wang2016ensembles}.

\section{Minimizing $f$-divergence with  Additive Mixtures}
\label{sec:theory}
In this section we derive a general result on the minimization of
$f$-divergences over mixture models.

\subsection{Preliminaries and notations}
In this work we will write $P_{d}$ and $P_{model}$ to denote a
real data distribution and our approximate model distribution, respectively,
both defined over the data space $\X$.

\paragraph{Generative Density Estimation}
In the generative approach to density estimation, instead of building a
probabilistic model of the data directly, one builds a function $G:\Z\to\X$
that transforms a fixed probability distribution $P_Z$
(often called the {\em noise} distribution)
over a latent space $\Z$ into a distribution over $\X$.
Hence $P_{model}$ is the pushforward of $P_Z$, i.e.\:$P_{model}(A)=P_Z(G^{-1}(A))$.
Because of this definition, it is generally impossible to compute the density
$dP_{model}(x)$, hence it is not possible to compute the log-likelihood of the
training data under the model.
However, if $P_Z$ is a distribution from which one can sample, it is easy to
also sample from $P_{model}$ (simply sampling from $P_Z$ and applying $G$ to
each example gives a sample from $P_{model}$).

So the problem of generative density estimation becomes a problem of finding
a function $G$ such that $P_{model}$ looks like $P_d$ in the sense that samples
from $P_{model}$ and from $P_d$ look similar.
Another way to state this problem is to say that we are given a measure of
similarity between distributions $D(P_{model} \| P_d)$ which can be estimated
from samples of those distributions, and thus approximately minimized over a
class $\G$ of functions.

\paragraph{$f$-Divergences}
In order to measure the agreement between the model distribution and the true
distribution of the data we will use an $f$-divergence defined in the following
way:
\begin{equation}
\label{eq:f-divergence}
D_f(Q \| P)\bydef \int f\left(\frac{dQ}{dP}(x)\right)dP(x)
\end{equation}
for any pair of distributions $P,Q$ with densities $dP$, $dQ$ with respect to
some dominating reference measure~$\mu$.
In this work we assume that the function $f$ is convex, defined on $(0,\infty)$, and satisfies
$f(1)=0$.
The definition of $D_f$ holds for both continuous and discrete probability measures and does not depend on specific choice of $\mu$.\footnote{
The integral in \eqref{eq:f-divergence} is well defined (but may take infinite values) even if $P(dQ = 0) > 0$ or $Q(dP = 0) > 0$. In this case the integral is understood as $D_f(Q \| P) = \int f(dQ/dP)\oo{dP(x) > 0, dQ(x) > 0}dP(x) + f(0) P(dQ = 0) + f^{\circ}(0) Q(dP = 0)$, where both $f(0)$ and $f^{\circ}(0)$ may take value $\infty$ \citep{LM08}.
This is especially important in case of GAN, where it is impossible to constrain $P_{model}$ to be absolutely continuous with respect to $P_d$ or vice versa.
}
It is easy to verify that $D_f\ge 0$ and it is equal to $0$ when $P=Q$.
Note that $D_f$ is not symmetric, but $D_f(P\|Q) = D_{f^{\circ}}(Q\|P)$ for $f^{\circ}(x):=xf(1/x)$ and any $P$ and $Q$.
The $f$-divergence is {\em symmetric} when $f(x)=f^{\circ}(x)$ for all $x\in(0,\infty)$, as in this case $D_f(P,Q)=D_f(Q,P)$.

We also note that the divergences corresponding to $f(x)$ and $f(x)+ C\cdot (x-1)$ are identical for any constant~$C$.
In some cases, it is thus convenient to work with $f_0(x) := f(x) - (x-1)f'(1)$, (where $f'(1)$ is any subderivative
of $f$ at $1$) as $D_f(Q\|P) = D_{f_0}(Q\|P)$ for all $Q$ and $P$, while $f_0$ is nonnegative, nonincreasing on $(0,1]$, and nondecreasing on $(1,\infty)$.
In the remainder, we will denote by $\F$ the set of functions that are suitable
for $f$-divergences, i.e. the set of functions of the form $f_0$ for any convex
$f$ with $f(1)=0$.

Classical examples of $f$-divergences include the Kullback-Leibler divergence
(obtained for $f(x)=-\log x$, $f_0(x)=-\log x+x-1$),
the reverse Kullback-Leibler divergence (obtained
for $f(x)=x\log x$, $f_0(x)=x\log x-x+1$), the Total Variation distance
($f(x)=f_0(x)=|x-1|$), and the
Jensen-Shannon divergence ($f(x)=f_0(x)=-(x+1)\log \frac{x+1}{2}+x\log x$).
More details can be found in Appendix \ref{appendix:f-div}.
Other examples can be found in
\cite{nowozin2016f}.
For further details on $f$-divergences we refer to Section 1.3 of~\citep{LM08} and~\cite{RW11}.

\paragraph{GAN and $f$-divergences}
We now explain the connection between the GAN algorithm and $f$-divergences.
The original GAN algorithm \cite{goodfellow2014generative} consists in optimizing the following criterion:
\begin{equation}\label{eq:gan}
 \min_{G}\max_{D} \ee{P_d}{\log D(X)} + \ee{P_Z}{\log \paren*{1-D(G(Z))}}\,,
\end{equation}
where $D$ and $G$ are two functions represented by neural networks, and this optimization is actually performed on a pair of samples
(one being the training sample, the other one being created from the chosen distribution~$P_Z$), which corresponds to approximating
the above criterion by using the empirical distributions.
For a fixed $G$, it has been shown in \cite{goodfellow2014generative} that the optimal $D$ for \eqref{eq:gan} is given by
$D^*(x)=\frac{dP_d(x)}{dP_d(x)+dP_g(x)}$ and plugging this optimal value into \eqref{eq:gan} gives the following:
\begin{equation}\label{eq:gan2}
 \min_{G} -\log(4) + 2JS(P_d\,\|\, P_g)\,,
\end{equation}
where $JS$ is the Jensen-Shannon divergence.
Of course, the actual GAN algorithm uses an approximation to $D^*$ which is computed by training a neural network on a sample,
which means that the GAN algorithm can be considered to minimize an approximation of \eqref{eq:gan2}\footnote{Actually the criterion that is
minimized is an empirical version of a lower bound of the Jensen-Shannon divergence.}.
This point of view can be generalized by plugging another $f$-divergence into \eqref{eq:gan2}, and it turns out that
other $f$-divergences can be written as the solution to a maximization of a criterion similar to \eqref{eq:gan}.
Indeed, as demonstrated in \cite{nowozin2016f}, any $f$-divergence between $P_d$ and $P_g$ 
can be seen as the optimal value of a quantity of the form $\ee{P_d}{f_1(D(X))}+\ee{P_g}{f_2(D(G(Z)))}$ for appropriate $f_1$ and $f_2$,
and thus can be optimized by the same adversarial training technique.

There is thus a strong connection between adversarial training of generative models and minimization of $f$-divergences, and this is why
we cast the results of this section in the context of general $f$-divergences.

\paragraph{Hilbertian Metrics}
As demonstrated in \cite{fuglede2004jensen, hein2005hilbertian}, several commonly used
symmetric $f$-divergences
are Hilbertian metrics, which in particular means that their square root
satisfies the triangle inequality.
This is true for the Jensen-Shannon divergence\footnote{which means such a property
can be used in the context of the original GAN algorithm.}
as well as for the Hellinger
distance and the Total Variation among others.
We will denote by $\F_H$ the set of $f$ functions such that $D_f$ is a Hilbertian
metric. For those divergences, we have $D_f(P\|Q)\le (\sqrt{D_f(P\|R)}+\sqrt{D_f(R\|Q)})^2$.

\paragraph{Generative Mixture Models}
In order to model complex data distributions, it can be convenient to use
a mixture model of the following form:
\begin{equation}\label{eq:mixture}
P^T_{model} := \sum_{i=1}^T \alpha_i P_{i},
\end{equation}
where $\alpha_i \geq 0$, $\sum_i \alpha_i = 1$, and each of the $T$ components
is a generative density model.
This is very natural in the generative context, since sampling
from a mixture corresponds to a two-step sampling, where one first picks the
mixture component (according to the multinomial distribution whose parameters
are the $\alpha_i$) and then samples from it.
Also, this allows to construct complex models from simpler ones.

\subsection{Incremental Mixture Building}
As discussed earlier, in the context of generative modeling, we are given a
measure of similarity between distributions. We will restrict ourselves to the
case of $f$-divergences. Indeed, for any $f$-divergence, it is possible
(as explained for example in \cite{nowozin2016f}) to
estimate $D_f(Q\,\|\,P)$ from two samples (one from $Q$, one from~$P$) by
training a ``discriminator'' function, i.e. by solving an optimization problem
(which is a binary classification problem in the case where the divergence
is symmetric\footnote{
One example of such a setting is running GANs, which are known to approximately
minimize the Jensen-Shannon divergence.}).
It turns out that the empirical estimate $\hat{D}$ of $D_f(Q\,\|\,P)$
thus obtained provides a criterion for optimizing $Q$ itself.
Indeed, $\hat{D}$ is a function of $Y_1,\ldots,Y_n\sim Q$ and
$X_1,\ldots,X_n\sim P$, where $Y_i=G(Z_i)$ for some mapping function $G$.
Hence it is possible to optimize $\hat{D}$ with respect to $G$ (and in particular
compute gradients with respect to the parameters of $G$ if $G$ comes from a
smoothly parametrized model such as a neural network).

In this work we thus assume that, given an i.i.d.\:sample from any unknown
distribution $P$ we can construct a simple model $Q\in\G$ which approximately minimizes
\begin{equation}
\label{eq:what-we-can-do}
	\min_{Q \in \G} D_f(Q\, \| \, P).
\end{equation}

Instead of just modelling the data with a single distribution, we now want to
model it with a mixture of the form \eqref{eq:mixture} where each $P_i$
is obtained by a training procedure of the form \eqref{eq:what-we-can-do}
with (possibly) different target distributions~$P$ for each $i$.

A natural way to build a mixture is to do it incrementally:
we train the first model $P_1$ to minimize $D_f(P_1 \, \| \, P_d)$
and set the corresponding weight to $\alpha_1 = 1$, leading to $P^1_{model} = P_1$.
Then after having trained $t$ components $P_1,\dots, P_t \in \G$ we can form
the $(t+1)$-st mixture model by adding a new component $Q$ with weight $\beta$ as
follows:
\begin{equation}
\label{eq:mixture-model}
P^{t+1}_{model} := \sum_{i=1}^t (1-\beta)\alpha_i P_{i} + \beta Q.
\end{equation}
We are going to choose $\beta \in [0,1]$ and $Q\in\G$ greedily, while keeping all the
other parameters of the generative model fixed, so as to minimize
\begin{equation}\label{eq:idealObjective}
D_f((1-\beta)P_g + \beta Q \, \| \, P_d),
\end{equation}
where we denoted $P_g:= P^t_{model}$ the current generative mixture model before adding the new component.

We do not necessarily need to find the optimal $Q$ that minimizes \eqref{eq:idealObjective}
at each step. Indeed, it would be sufficient to find some $Q$ which allows to
build a slightly better approximation of $P_d$.
This means that a more modest goal could be to find $Q$ such that, for some
$c<1$,
\begin{equation}\label{eq:modestObjective}
D_f((1-\beta)P_g + \beta Q \, \| \, P_d) \le c\cdot D_f(P_g \, \| \, P_d)\,.
\end{equation}

However, we observe that this greedy approach has a significant drawback in
practice. Indeed, as we build up the mixture, we need to make $\beta$ decrease
(as $P_{model}^t$ approximates $P_d$ better and better, one should
make the correction at each step smaller and smaller). Since we are approximating
\eqref{eq:idealObjective} using samples from both distributions, this means that
the sample from the mixture will only contain a fraction $\beta$ of examples
from $Q$. So, as $t$ increases, getting meaningful information from a sample
so as to tune $Q$ becomes harder and harder (the information is ``diluted'').


To address this issue, we propose to optimize an upper bound on \eqref{eq:idealObjective}
which involves a term of the form $D_f(Q\,\|\, Q_0)$ for some distribution $Q_0$,
which can be computed as a reweighting of the original data distribution~$P_d$.

In the following sections we will analyze the properties of \eqref{eq:idealObjective}
(Section \ref{sect:direct}) and derive upper bounds that provide practical
optimization criteria for building the mixture (Section \ref{sect:upperbound}).
We will also show that under certain assumptions, the minimization of the
upper bound will lead to the optimum of the original criterion.

This procedure is reminiscent of the AdaBoost algorithm \citep{FS97}, which combines multiple \emph{weak} predictors into one very accurate \emph{strong} composition.
On each step AdaBoost adds one new predictor to the current composition, which is trained to minimize the binary loss on the reweighted training set.
The weights are constantly updated in order to bias the next weak learner towards ``hard'' examples, which were incorrectly classified during previous stages.

\subsection{Upper Bounds}
\label{sect:upperbound}

Next lemma provides two upper bounds on the divergence of the mixture
in terms of the divergence of the additive component $Q$ with respect to
some reference distribution $R$.

\begin{lemma}\label{le:aux-joint-convexity}
Let $f\in\F$.
Given two distributions $P_d,P_g$ and some $\beta\in[0,1]$,
for any distribution $Q$ and any distribution $R$ such that $\beta dR\le dP_d$, we have
\begin{equation}\label{up1}
D_f\paren*{(1-\beta)P_g +\beta Q\,\|\,P_d} \le \beta D(Q\,\|\,R) +
(1-\beta) D_f\paren*{P_g\,\|\, \frac{P_d-\beta R}{1-\beta}}\,.
\end{equation}
If furthermore $f\in\F_H$, then, for any $R$, we have
\begin{equation}\label{up2}
D_f\paren*{(1-\beta)P_g +\beta Q\,\|\,P_d} \le \paren*{
\sqrt{\beta D_f(Q\,\|\,R)} +
\sqrt{D_f\paren*{(1-\beta)P_g +\beta R\,\|\,P_d}} }^2\,.
\end{equation}
\end{lemma}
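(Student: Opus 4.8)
The plan is to obtain both inequalities from two ingredients: the joint convexity of the map $(Q,P)\mapsto D_f(Q\,\|\,P)$, and, for \eqref{up2}, the triangle inequality satisfied by $\sqrt{D_f}$ when $f\in\F_H$. I would first record joint convexity in the form needed here: for every $\lambda\in[0,1]$ and all distributions $Q_1,Q_2,P_1,P_2$,
\[
D_f\paren*{\lambda Q_1+(1-\lambda)Q_2 \,\|\, \lambda P_1+(1-\lambda)P_2}\ \le\ \lambda D_f(Q_1\,\|\,P_1)+(1-\lambda)D_f(Q_2\,\|\,P_2),
\]
which follows from convexity of the perspective function $(s,r)\mapsto r\,f(s/r)$ on $(0,\infty)^2$, extended lower-semicontinuously to the boundary in accordance with the conventions for $f(0)$ and $f^{\circ}(0)$ recalled after \eqref{eq:f-divergence}.

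For \eqref{up1}, the key step will be to notice that the hypothesis $\beta\,dR\le dP_d$ makes $R_0\bydef (P_d-\beta R)/(1-\beta)$ a genuine probability distribution when $\beta<1$ (its density is nonnegative and integrates to one); the case $\beta=1$ is trivial, since then $R=P_d$ and both sides equal $D_f(Q\,\|\,R)$. Assuming $\beta<1$, I would then write the numerator and the denominator of the left-hand divergence as convex combinations with the \emph{same} mixing weights,
\[
(1-\beta)P_g+\beta Q\qquad\text{and}\qquad P_d=(1-\beta)R_0+\beta R,
\]
and apply joint convexity with $\lambda=\beta$, $(Q_1,P_1)=(Q,R)$ and $(Q_2,P_2)=(P_g,R_0)$. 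This produces exactly \eqref{up1}.

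For \eqref{up2}, I would take $f\in\F_H$ so that $\sqrt{D_f}$ is a metric, insert the intermediate distribution $S\bydef (1-\beta)P_g+\beta R$ (a valid mixture for arbitrary $R$), and use the triangle inequality:
\[
\sqrt{D_f\paren*{(1-\beta)P_g+\beta Q\,\|\,P_d}}\ \le\ \sqrt{D_f\paren*{(1-\beta)P_g+\beta Q\,\|\,S}}+\sqrt{D_f\paren*{S\,\|\,P_d}}.
\]
It then remains to bound the first term on the right, and here numerator and denominator already share the component $(1-\beta)P_g$; joint convexity with $\lambda=1-\beta$ gives
\[
D_f\paren*{(1-\beta)P_g+\beta Q\,\|\,(1-\beta)P_g+\beta R}\ \le\ (1-\beta)D_f(P_g\,\|\,P_g)+\beta D_f(Q\,\|\,R)=\beta D_f(Q\,\|\,R),
\]
since $D_f(P_g\,\|\,P_g)=0$. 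Substituting back and squaring both sides yields \eqref{up2}.

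Most of the work is bookkeeping; the two points that need genuine care are (i) checking that $R_0=(P_d-\beta R)/(1-\beta)$ is well defined as a probability measure, which is precisely what $\beta\,dR\le dP_d$ buys, and (ii) invoking joint convexity of $D_f$ at the level of generality required here — arbitrary dominating reference measure, possibly mutually singular pieces, possibly infinite values — for which the perspective-function argument together with the boundary conventions of \eqref{eq:f-divergence} is enough. The triangle inequality for $\sqrt{D_f}$ when $f\in\F_H$ I would simply quote from the references cited for Hilbertian metrics above.
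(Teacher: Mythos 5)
Your proof is correct and follows essentially the same route as the paper: for \eqref{up1} you rewrite $P_d=(1-\beta)\frac{P_d-\beta R}{1-\beta}+\beta R$ and invoke joint convexity of $D_f$, and for \eqref{up2} you insert the pivot $(1-\beta)P_g+\beta R$, apply the Hilbertian triangle inequality for $\sqrt{D_f}$, and bound the remaining term via joint convexity (which the paper describes, slightly loosely, as convexity in the first argument). The only (minor) additions on your side are the explicit treatment of the $\beta=1$ degenerate case and the spelled-out verification that $R_0$ is a probability measure; both are sound.
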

\begin{proof}
For the first inequality, we use the fact that $D_f$ is jointly convex.
We write
$P_d = (1-\beta)\frac{P_d-\beta R}{1-\beta} + \beta R$ which is a convex
combination of two distributions when the assumptions are satisfied.

The second inequality follows from using the triangle inequality for the square
root of the Hilbertian metric $D_f$ and using convexity of $D_f$ in its first
argument.
\end{proof}

We can exploit the upper bounds of Lemma \ref{le:aux-joint-convexity}
by introducing some well-chosen distribution $R$ and minimizing with respect to
$Q$. A natural choice for $R$ is a distribution that minimizes the last term
of the upper bound (which does not depend on $Q$).

\subsection{Optimal Upper Bounds}
\label{sect:direct}

In this section we provide general theorems about the optimization of the
right-most terms in the upper bounds of Lemma \ref{le:aux-joint-convexity}.

For the upper bound \eqref{up2}, this means we need to find $R$ minimizing
$D_f\paren*{(1-\beta)P_g +\beta R\,\|\,P_d}$.
The solution for this problem is given in the following
theorem.
\begin{theorem}\label{main}
For any $f$-divergence $D_f$, with $f\in\F$ and $f$ differentiable,
any fixed distributions $P_d,P_g$, and any
$\beta\in(0,1]$, the solution to the following minimization problem:
\[
\min_{Q\in \P} D_f((1-\beta)P_g + \beta Q \, \| \, P_d),
\]
where $\P$ is a class of all probability distributions,
has the density
\[
dQ^*_{\beta}(x) = \frac{1}{\beta}\left(\lambda^* dP_d(x) - (1-\beta) dP_g(x) \right)_+
\]
for some unique $\lambda^*$ satisfying
$
\int dQ^*_{\beta} = 1.
$
Furthermore, $\beta\leq\lambda^*\leq \min(1, \beta/\delta)$, where $\delta := P_d(dP_g = 0)$.
Also, $\lambda^*=1$ if and only if $P_d((1-\beta)dP_g > dP_d)=0$, which is
equivalent to $\beta dQ_\beta^*= dP_d-(1-\beta)dP_g$.
\end{theorem}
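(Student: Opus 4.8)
The plan is to treat this as a constrained convex optimization problem: minimize $F(Q) := D_f((1-\beta)P_g + \beta Q \,\|\, P_d)$ over the convex set $\P$ of probability distributions. Since $f$ is convex, $F$ is convex in $Q$ (the map $Q \mapsto (1-\beta)P_g + \beta Q$ is affine and $D_f(\cdot\,\|\,P_d)$ is jointly convex, hence convex in its first argument). The constraints are $dQ \ge 0$ and $\int dQ = 1$. First I would introduce a density $q = dQ$ with respect to a dominating measure $\mu$ (say $\mu = P_d + P_g$), so that the objective becomes $\int f\bigl(\frac{(1-\beta)p_g(x) + \beta q(x)}{p_d(x)}\bigr) p_d(x)\, d\mu(x)$, and write the Lagrangian with a multiplier $\lambda$ for the normalization constraint and the nonnegativity constraint handled via KKT. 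Pointwise stationarity gives $\beta f'\bigl(\frac{(1-\beta)p_g + \beta q}{p_d}\bigr) = \lambda$ on the set where $q > 0$, and $\le$ the corresponding boundary condition where $q = 0$; inverting $f'$ (using convexity, so $f'$ is nondecreasing) and solving for $q$ yields $q(x) = \frac{1}{\beta}\bigl(\tilde\lambda\, p_d(x) - (1-\beta) p_g(x)\bigr)_+$ for an appropriate reparametrization $\tilde\lambda$ of the multiplier. Existence and uniqueness of $\lambda^*$ then follow because $\lambda \mapsto \int \frac{1}{\beta}(\lambda p_d - (1-\beta)p_g)_+\, d\mu$ is continuous, nondecreasing, equals $0$ at $\lambda = 0$, and exceeds $1$ for $\lambda$ large (it is at least $\frac{1}{\beta}(\lambda - (1-\beta))$), so by the intermediate value theorem there is a unique root $\lambda^*$ wherever the function is strictly increasing at that level — strictness holds because $p_d > 0$ on a set of positive measure near the threshold.

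**Establishing the bounds on $\lambda^*$.** For the inequality $\lambda^* \ge \beta$: if $\lambda^* < \beta$ then $\beta dQ^*_\beta \le \lambda^* dP_d < \beta dP_d$ pointwise wherever $Q^*_\beta$ has mass, forcing $\int dQ^*_\beta < \int dP_d = 1$, a contradiction. For $\lambda^* \le 1$: note $\beta dQ^*_\beta \le \lambda^* dP_d$ everywhere, so $1 = \int dQ^*_\beta \le \frac{\lambda^*}{\beta}\int dP_d = \frac{\lambda^*}{\beta}$ would only give $\lambda^* \ge \beta$ again; instead I would argue that if $\lambda^* > 1$, then on the set $\{dP_g = 0\}$ we have $dQ^*_\beta = \frac{\lambda^*}{\beta} dP_d$, and combined with the fact that on $\{dP_g > 0\}$ the truncation $(\cdot)_+$ can only decrease mass relative to $\frac{\lambda^*}{\beta} dP_d - \frac{1-\beta}{\beta} dP_g$, one gets $1 = \int dQ^*_\beta \le \frac{\lambda^*}{\beta} - \frac{1-\beta}{\beta}\int_{dP_g>0} dP_g \cdot(\text{something})$; more cleanly, observe $\beta dQ^*_\beta = (\lambda^* dP_d - (1-\beta)dP_g)_+ \ge \lambda^* dP_d - (1-\beta)dP_g$, so integrating gives $\beta \ge \lambda^* - (1-\beta)$, i.e.\ $\lambda^* \le 1$. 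For the upper bound $\lambda^* \le \beta/\delta$ with $\delta = P_d(dP_g = 0)$: restrict the integral defining the normalization to the set $\{dP_g = 0\}$, where $dQ^*_\beta = \frac{\lambda^*}{\beta}dP_d$, giving $1 = \int dQ^*_\beta \ge \int_{\{dP_g=0\}} \frac{\lambda^*}{\beta}dP_d = \frac{\lambda^*}{\beta}\delta$, hence $\lambda^* \le \beta/\delta$.

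**The characterization of $\lambda^* = 1$.** When $\lambda^* = 1$, the density becomes $\beta dQ^*_\beta = (dP_d - (1-\beta)dP_g)_+$, and the normalization $\int (dP_d - (1-\beta)dP_g)_+ = \beta = \int (dP_d - (1-\beta)dP_g)$ forces $\int (dP_d - (1-\beta)dP_g)_- = 0$, i.e.\ $(dP_d - (1-\beta)dP_g)_- = 0$ $\mu$-a.e., which is exactly $P_d((1-\beta)dP_g > dP_d) = 0$; in that case the positive-part truncation is vacuous and $\beta dQ^*_\beta = dP_d - (1-\beta)dP_g$. Conversely, if $P_d((1-\beta)dP_g > dP_d) = 0$, then $dP_d - (1-\beta)dP_g \ge 0$ $\mu$-a.e., so its positive part integrates to $\beta$, meaning $\lambda = 1$ already satisfies the normalization constraint and by uniqueness $\lambda^* = 1$.

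**Main obstacle.** The routine part is the Lagrangian/KKT computation and the monotonicity arguments for the multiplier bounds; the delicate point is justifying that the pointwise first-order conditions are both necessary and sufficient for the global optimum in this infinite-dimensional setting — in particular verifying that no regularity pathology prevents the KKT characterization, handling the case $\beta = 1$ (where $P_g$ drops out and $Q^* = P_d$, $\lambda^* = 1$ trivially, consistent with the formula), and carefully treating the measure-zero sets $\{dP_d = 0\}$ and $\{dP_g = 0\}$ where the ratio $dQ/dP_d$ in the $f$-divergence must be interpreted via the convention in the footnote (using $f(0)$ and $f^\circ(0)$). Since $D_f$ is convex and the feasible set is convex, I expect a direct verification argument to be cleanest: given the candidate $Q^*_\beta$, show directly that $D_f((1-\beta)P_g + \beta Q\,\|\,P_d) \ge D_f((1-\beta)P_g + \beta Q^*_\beta\,\|\,P_d)$ for any competitor $Q$ by a convexity (subgradient) inequality applied pointwise with the subgradient value pinned down by $\lambda^*$, thereby sidestepping abstract Lagrangian duality altogether.
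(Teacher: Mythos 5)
Your final paragraph lands on exactly the approach the paper uses: rather than pursuing the Lagrangian/KKT derivation, the paper proves optimality of $Q^*_\beta$ by a direct pointwise subgradient (convexity) comparison. Concretely, the paper first establishes existence, uniqueness and bounds for $\lambda^*$ in Lemma~\ref{le:lambda} (studying the convex nondecreasing function $g(\lambda)=\int(\lambda-\gamma\,dQ/dP)_+\,dP$, with the lower bound $\lambda^*\ge 1-\gamma$ coming from a convexity/secant argument and the $\delta$ bound from monotonicity of $g'_+$), and then proves optimality via Lemma~\ref{le:imp}: setting $X=\beta\,dQ/dP_d$, $Y=(1-\beta)\,dP_g/dP_d$, $c=\lambda^*$, it notes $X+Y$ and $\max(c,Y)$ have the same $P_d$-expectation and that $\E[f(\max(c,Y))]\le\E[f(X+Y)]$ by convexity of $f$ --- that is precisely the ``subgradient inequality applied pointwise with the subgradient pinned down by $\lambda^*$'' you mention. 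Your bound arguments for $\lambda^*$ are clean and correct and in fact more elementary than the paper's (you use $\beta\,dQ^*_\beta\le\lambda^*\,dP_d$ and $\beta\,dQ^*_\beta\ge\lambda^*\,dP_d-(1-\beta)\,dP_g$ and integrate, avoiding the right-derivative/Markov machinery of Lemma~\ref{le:lambda}), and your proof of the $\lambda^*=1$ equivalence is correct and actually more explicit than the paper, whose proof does not spell this out.

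One caution about the KKT route you sketch first: the stationarity condition involves $(f')^{-1}$, which is well-defined only if $f$ is strictly convex, and the reparametrization by $\tilde\lambda$ is what makes the answer $f$-independent --- but that reparametrization is only clean when $f'$ is invertible. The direct verification argument works for any convex differentiable $f$ (it only uses the subgradient inequality $f(a)-f(b)\ge f'(b)(a-b)$), which is why the paper takes that route. Since you identify this as the preferred argument yourself, there is no real gap; I would just encourage you to actually carry out the direct verification rather than present it as a footnote, as it is shorter than the KKT outline and is the part of the proof that actually needs to be checked in the infinite-dimensional setting.
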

\begin{proof}
See Appendix \ref{sec:proof-main}.
\end{proof}
\begin{remark}
The form of $Q^*_{\beta}$ may look unexpected at first glance: why not setting $dQ:=\bigl( dP_d - (1-\beta) dP_g\bigr)/\beta$, which would make arguments of the $f$-divergence identical?
Unfortunately, it may be the case that $dP_d(X) < (1-\beta) dP_g(X)$ for some of $X\in\X$, leading to the negative values of $dQ$.
\end{remark}

For the upper bound \eqref{up1}, we need to minimize
$D_f\paren*{P_g\,\|\, \frac{P_d-\beta R}{1-\beta}}$.
The solution is given in the next theorem.
\begin{theorem}\label{th:opt2}
Given two distributions $P_d,P_g$ and some $\beta\in(0,1]$, assume
\[
P_d\left(dP_g= 0\right) < \beta.
\]
Let $f\in\F$. The solution to the minimization problem
\[
\min_{Q:\beta dQ\le dP_d}
D_f\paren*{P_g\,\|\, \frac{P_d-\beta Q}{1-\beta}}
\]
is given by the distribution
\[
dQ^{\dagger}_{\beta}(x) = \frac{1}{\beta}\paren*{dP_d(x)-\lambda^\dagger (1-\beta)dP_g(x)}_+
\]
for a unique $\lambda^\dagger\geq1$ satisfying $\int dQ^{\dagger}_{\beta} = 1$.
\end{theorem}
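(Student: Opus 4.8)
My plan is to turn this into a convex program over a single auxiliary distribution and then verify the candidate by a first-order (subgradient) optimality condition, paralleling the proof of Theorem~\ref{main}. \emph{Reparametrization:} for $\beta\in(0,1)$ the affine map $Q\mapsto R:=(P_d-\beta Q)/(1-\beta)$ is a bijection between $\{Q:\beta dQ\le dP_d\}$ and $\R_\beta:=\{R\ \text{a probability distribution}:(1-\beta)dR\le dP_d\}$, with inverse $dQ=(dP_d-(1-\beta)dR)/\beta$. So the problem is equivalent to $\min_{R\in\R_\beta}D_f(P_g\,\|\,R)$, which is convex since $D_f$ is jointly convex and $\R_\beta$ is convex; it thus suffices to exhibit a feasible $R^\dagger$ and check a first-order inequality against every $R\in\R_\beta$. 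I will also use $D_f(P_g\|R)=D_{f^\circ}(R\|P_g)=\int \phi(dR/dP_g)\,dP_g$ on $\{dP_g>0\}$ plus a boundary term, where $\phi:=f^\circ$ is convex with $\phi(1)=0$. (The endpoint $\beta=1$ is degenerate and needs separate, trivial, handling.)

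\emph{The multiplier and the candidate.} Let $N(\lambda):=\int(dP_d-\lambda(1-\beta)dP_g)_+\,d\mu$. Then $N$ is continuous, nonincreasing, $N(1)\ge\int(dP_d-(1-\beta)dP_g)=\beta$, and $N(\lambda)\to P_d(dP_g=0)=\delta$ as $\lambda\to\infty$ by monotone convergence. The hypothesis $\delta<\beta$ is precisely what produces a root $\lambda^\dagger\ge 1$ of $N(\lambda)=\beta$; uniqueness follows because $N$ constant on an interval would force $N\equiv\delta<\beta$ there. Then $dQ^\dagger_\beta=\tfrac1\beta(dP_d-\lambda^\dagger(1-\beta)dP_g)_+$ satisfies $\beta dQ^\dagger_\beta\le dP_d$ and $\int dQ^\dagger_\beta=1$, and the associated $R^\dagger$ has $dR^\dagger=\min(\lambda^\dagger dP_g,\tfrac{dP_d}{1-\beta})$ on $\{dP_g>0\}$ and $dR^\dagger=0$ on $\{dP_g=0\}$; in particular $R^\dagger\in\R_\beta$.

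\emph{Optimality.} Fix $R\in\R_\beta$. Split $\{dP_g>0\}$ into the uncapped set $A=\{\lambda^\dagger dP_g\le\frac{dP_d}{1-\beta}\}$, where $dR^\dagger/dP_g=\lambda^\dagger$, and the capped set $B=\{\lambda^\dagger dP_g>\frac{dP_d}{1-\beta}\}$, where $dR^\dagger/dP_g=\frac{dP_d}{(1-\beta)dP_g}<\lambda^\dagger$. Fix a subgradient $s^\star$ of $\phi$ at $\lambda^\dagger$. The subgradient inequality for $\phi$, multiplied by $dP_g$ and integrated over $\{dP_g>0\}$, gives $\int\phi(dR/dP_g)dP_g\ge\int\phi(dR^\dagger/dP_g)dP_g+\int_{\{dP_g>0\}}h\,(dR-dR^\dagger)$ for a subgradient field $h$; take $h=s^\star$ on $A$, and note $h\le s^\star$ on $B$ by monotonicity of $\phi'$, while feasibility forces $dR\le\frac{dP_d}{1-\beta}=dR^\dagger$ on $B$. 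Hence $h\,(dR-dR^\dagger)\ge s^\star(dR-dR^\dagger)$ pointwise, so the cross term is $\ge s^\star\int_{\{dP_g>0\}}(dR-dR^\dagger)=-s^\star R(dP_g=0)$ (using $R^\dagger(dP_g=0)=0$). Adding the $\{dP_g=0\}$ contributions, which are $f(0)R(dP_g=0)\ge0$ to $D_{f^\circ}(R\|P_g)$ and $0$ to $D_{f^\circ}(R^\dagger\|P_g)$, and using that $s^\star$ is at most the asymptotic slope $\lim_{y\to\infty}f(y)/y=f(0)$ of $\phi$, we get $D_{f^\circ}(R\|P_g)-D_{f^\circ}(R^\dagger\|P_g)\ge(f(0)-s^\star)R(dP_g=0)\ge0$ (trivially so when $f(0)=\infty$). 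Thus $R^\dagger$, equivalently $Q^\dagger_\beta$, is a minimizer.

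\emph{Main obstacle.} The delicate part is the support bookkeeping in the optimality step: bounding the subgradient on the capped region $B$ by its value at $\lambda^\dagger$, and eliminating the $\{dP_g=0\}$ region through the asymptotic-slope identity $s^\star\le f(0)$; the hypothesis $P_d(dP_g=0)<\beta$ enters only — but crucially — to make $\lambda^\dagger$ exist. The remaining pieces (existence/uniqueness of $\lambda^\dagger$, feasibility of $Q^\dagger_\beta$, and that $R^\dagger$ indeed has the stated capped form) are the same kind of monotonicity computation as in Theorem~\ref{main}.
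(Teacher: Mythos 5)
Your proposal is correct and follows the same broad strategy as the paper's: the change of variables $R:=(P_d-\beta Q)/(1-\beta)$ (the paper's $T$ in Lemma~\ref{le:aux}), the determination of $\lambda^\dagger$ via the same monotone convex function (the paper's Lemma~\ref{le:lambda2}; your $N(\lambda)$ is a rescaling of their $h(\lambda)$), and a convexity/subgradient optimality check (the paper's Lemma~\ref{le:imp2}). Where you diverge---and genuinely tighten the argument---is in the optimality step: by inlining the subgradient computation rather than invoking Lemma~\ref{le:imp2}, you explicitly account for the region $\{dP_g=0\}$, comparing the subgradient $s^\star$ of $\phi=f^\circ$ at $\lambda^\dagger$ with the recession slope of $\phi$ and matching it against the boundary term $f(0)\,R(dP_g=0)$ from the paper's footnote convention for $f$-divergences. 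The paper evaluates Lemma~\ref{le:imp2} as an expectation under $P_g$, so its hypothesis $\E[\min(\lambda^\dagger,Y)]=\E[X]$ forces $\E_{P_g}[dT/dP_g]=T(dP_g>0)=1$, i.e.\ $T(dP_g=0)=0$; feasible $T\in\M(P_d,\beta)$ with $T(dP_g=0)>0$ are not excluded by $(1-\beta)dT\le dP_d$, and that case is left implicit there. Your argument closes it. One small slip: the recession slope of $\phi=f^\circ$ is $\lim_{y\to\infty}\phi(y)/y=\lim_{y\to\infty}f(1/y)=f(0)$; the expression you wrote, $\lim_{y\to\infty}f(y)/y$, equals $f^\circ(0)$ rather than $f(0)$, though the value $f(0)$ you then use in the bound $s^\star\le f(0)$ is the correct one.
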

\begin{proof}
See Appendix \ref{sec:proof-opt2}.
\end{proof}

\begin{remark}
Notice that the term that we optimized in upper bound \eqref{up2} is exactly
the initial objective \eqref{eq:idealObjective}. So that Theorem \ref{main}
also tells us what the form of the optimal distribution is for the initial
objective.
\end{remark}
\begin{remark}
Surprisingly, in both Theorem \ref{main} and \ref{th:opt2},
the solution does not depend on the choice of the function~$f$, which means
that the solution is the same for {\em any} $f$-divergence.
This also means that by replacing $f$ by $f^\circ$, we get similar results for
the criterion written in the other direction, with again the same solution.
Hence the order in which we write the divergence does not matter and the
optimal solution is optimal for both orders.
\end{remark}
\begin{remark}
Note that $\lambda^*$ is implicitly defined by a fixed-point equation.
In Section \ref{sec:lambda-compute} we will show how it can be computed
efficiently in the case of empirical distributions.
\end{remark}
\begin{remark}
Obviously, $\lambda^\dagger \geq \lambda^*$, where $\lambda^*$ was defined in Theorem \ref{main}.
Moreover, we have $\lambda^* \leq 1/\lambda^\dagger$.
Indeed, it is enough to insert $\lambda^\dagger = 1/\lambda^*$ into definition of $Q^\dagger_{\beta}$ and check that in this case $Q^\dagger_{\beta} \geq 1$.
\end{remark}

\subsection{Convergence Analysis for Optimal Updates}
In previous section we derived analytical expressions for the distributions $R$ minimizing last terms in upper bounds \eqref{up1} and \eqref{up2}.
Assuming $Q$ can perfectly match $R$, i.e.\:$D_f(Q\,\|\,R) = 0$, we are now interested in the convergence of the mixture \eqref{eq:mixture-model} to the true data distribution $P_d$ for $Q=Q^*_{\beta}$ or $Q=Q^\dagger_{\beta}$.

We start with simple results showing that
adding $Q_\beta^*$ or $Q_\beta^\dagger$ to the current mixture would yield
a strict improvement of the divergence.
\begin{lemma}\label{optimal-imp} 
Under the conditions of Theorem \ref{main}, we have
\begin{equation}
\label{eq:optimal-improvement-1}
D_f\bigl((1-\beta)P_g + \beta Q^*_{\beta} \,\big\|\, P_d \bigr) \leq
D_f\bigl((1-\beta)P_g + \beta P_d \,\big\|\, P_d \bigr) \leq
(1-\beta) D_f(P_g \,\|\, P_d).
\end{equation}
Under the conditions of Theorem \ref{th:opt2}, we have
\[
D_f\paren*{P_g \,\big\|\, \frac{P_d - \beta Q_\beta^\dagger}{1-\beta}} \leq
D_f(P_g \,\|\, P_d)\,,
\]
and
\[
D_f\bigl((1-\beta)P_g + \beta Q^\dagger_{\beta} \,\big\|\, P_d \bigr) \leq
(1-\beta) D_f(P_g \,\|\, P_d).
\]
\end{lemma}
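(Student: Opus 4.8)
The plan is to get all three inequalities as immediate consequences of the optimality statements in Theorems~\ref{main} and~\ref{th:opt2}, the joint convexity of $D_f$ (the same fact already used in the proof of Lemma~\ref{le:aux-joint-convexity}), and the upper bound \eqref{up1}; no new machinery is needed, and I would simply take the three claims in the order stated. The case $\beta=1$ is degenerate — the factors $1/(1-\beta)$ are vacuous and $Q^*_1=Q^\dagger_1=P_d$, so the first and third inequalities both read $0\le 0$ — so I dispose of it at the start and assume $\beta\in(0,1)$ throughout.

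For the first chain, the leftmost inequality is immediate from Theorem~\ref{main}: $Q^*_\beta$ minimizes $Q\mapsto D_f\bigl((1-\beta)P_g+\beta Q\,\|\,P_d\bigr)$ over \emph{all} probability distributions, and $P_d$ is one of them, so evaluating the objective at $Q=P_d$ upper-bounds the minimum. For the second inequality I invoke joint convexity of $D_f$ in the pair of its arguments: the pair $\bigl((1-\beta)P_g+\beta P_d,\ P_d\bigr)$ is the convex combination $(1-\beta)(P_g,P_d)+\beta(P_d,P_d)$, so $D_f\bigl((1-\beta)P_g+\beta P_d\,\|\,P_d\bigr)\le(1-\beta)D_f(P_g\,\|\,P_d)+\beta D_f(P_d\,\|\,P_d)=(1-\beta)D_f(P_g\,\|\,P_d)$, using $D_f(P_d\,\|\,P_d)=0$.

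For the second claim, Theorem~\ref{th:opt2} states that $Q^\dagger_\beta$ minimizes $Q\mapsto D_f\bigl(P_g\,\|\,(P_d-\beta Q)/(1-\beta)\bigr)$ over all $Q$ with $\beta\,dQ\le dP_d$. Since $\beta\le 1$, the distribution $Q=P_d$ is feasible ($\beta\,dP_d\le dP_d$) and yields $(P_d-\beta P_d)/(1-\beta)=P_d$, so the minimum is at most $D_f(P_g\,\|\,P_d)$, which is exactly the asserted bound. For the third claim I combine \eqref{up1} with what was just shown: first one checks that $R:=Q^\dagger_\beta$ meets the hypothesis $\beta\,dR\le dP_d$ of \eqref{up1}, which is immediate from the closed form $dQ^\dagger_\beta=\frac1\beta\bigl(dP_d-\lambda^\dagger(1-\beta)dP_g\bigr)_+$ with $\lambda^\dagger\ge 1$, since $\beta\,dQ^\dagger_\beta=\bigl(dP_d-\lambda^\dagger(1-\beta)dP_g\bigr)_+\le dP_d$ is the positive part of $dP_d$ minus a nonnegative quantity. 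Applying \eqref{up1} with $R=Q=Q^\dagger_\beta$ kills the term $\beta D(Q\,\|\,R)$ and leaves $D_f\bigl((1-\beta)P_g+\beta Q^\dagger_\beta\,\|\,P_d\bigr)\le(1-\beta)D_f\bigl(P_g\,\|\,(P_d-\beta Q^\dagger_\beta)/(1-\beta)\bigr)$, whose right-hand side is at most $(1-\beta)D_f(P_g\,\|\,P_d)$ by the second claim.

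I do not anticipate a genuine obstacle; the lemma is essentially bookkeeping around three already-established facts. The only points needing care are confirming that the competitor $Q=P_d$ is admissible in each variational problem and that the positivity/support hypotheses behind Theorems~\ref{main}, \ref{th:opt2} and behind \eqref{up1} are actually in force — in particular the condition $P_d(dP_g=0)<\beta$ required by Theorem~\ref{th:opt2}, which is part of what we are allowed to assume for the second and third claims.
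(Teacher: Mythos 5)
Your proof is correct and takes essentially the same route as the paper: optimality of $Q^*_\beta$ against the competitor $P_d$ together with convexity of $D_f$ for the first chain, optimality of $Q^\dagger_\beta$ with $P_d$ as a feasible competitor for the second, and \eqref{up1} with $R=Q=Q^\dagger_\beta$ combined with the second claim for the third. Your explicit check that $\beta\,dQ^\dagger_\beta\le dP_d$ and the separate treatment of $\beta=1$ are details the paper leaves implicit, but they do not change the argument.
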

\begin{proof}
The first inequality follows immediately from the optimality of $Q_\beta^*$
(hence the value of the objective at $Q_\beta^*$ is smaller than at $P_d$),
and the fact that $D_f$ is convex in its first argument and $D_f(P_d\|P_d)=0$.
The~second inequality follows from the optimality of $Q_\beta^\dagger$ (hence
the value of the objective at $Q_\beta^\dagger$ is smaller than its value
at $P_d$ which itself satisfies the condition $\beta dP_d \le dP_d$).
For the third inequality, we combine the second inequality with the first
inequality of Lemma \ref{le:aux-joint-convexity} (with $Q=R=Q_\beta^\dagger$).
\end{proof}

The upper bound \eqref{eq:optimal-improvement-1} of Lemma \ref{optimal-imp} can be refined if the ratio $dP_g/dP_d$ is almost surely bounded:
\begin{lemma}
Under the conditions of Theorem \ref{main}, if there exists $M>1$
such that
\[
P_d((1-\beta)dP_g > MdP_d) = 0
\]
then
\[
D_f\bigl((1-\beta)P_g + \beta Q^*_{\beta} \,\big\|\, P_d \bigr) \leq
f(\lambda^*) + \frac{f(M)(1-\lambda^*)}{M-1}.
\]
\end{lemma}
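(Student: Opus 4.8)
The plan is to bypass any explicit manipulation of $f$ and instead work with the single Radon--Nikodym derivative
\[
g \bydef \frac{d\bigl((1-\beta)P_g + \beta Q^*_{\beta}\bigr)}{dP_d},
\]
using convexity of $f$ to bound $D_f = \ee{P_d}{f(g)}$. First I would plug the explicit form $dQ^*_{\beta} = \tfrac1\beta(\lambda^* dP_d - (1-\beta)dP_g)_+$ from Theorem \ref{main} into the mixture density. On the set where $\lambda^* dP_d \ge (1-\beta)dP_g$ the positive part equals $\lambda^* dP_d - (1-\beta)dP_g$ and cancels the $(1-\beta)dP_g$ coming from $P_g$, so the mixture density is $\lambda^* dP_d$; on the complementary set the positive part vanishes and the mixture density is $(1-\beta)dP_g$. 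Hence
\[
g(x) = \max\Bigl\{\lambda^*,\ (1-\beta)\tfrac{dP_g}{dP_d}(x)\Bigr\} \qquad P_d\text{-a.s.}
\]
In particular $g \ge \lambda^*$, and combining the hypothesis $(1-\beta)dP_g \le M\,dP_d$ (which is exactly $P_d((1-\beta)dP_g > M dP_d)=0$) with the bound $\lambda^* \le 1 \le M$ from Theorem \ref{main} gives $g \le M$; so $g$ takes values in $[\lambda^*, M]$, $P_d$-a.s.

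Next I would note that $\ee{P_d}{g} = 1$. Indeed $Q^*_{\beta} \ll P_d$ always, since on $\{dP_d = 0\}$ the argument of the positive part is $\le 0$; and in the regime relevant to the convergence analysis, $D_f(P_g\|P_d) < \infty$, which (when $f^{\circ}(0)=\infty$, and trivially otherwise after accounting for a boundary term) means the mixture places no mass where $dP_d = 0$, so $\int g\,dP_d$ equals the total mass $1$ of the mixture. Now convexity of $f$ finishes the job: for $x \in [\lambda^*, M]$ write $x = t\lambda^* + (1-t)M$ with $t = \frac{M-x}{M-\lambda^*}$, so that $f(x) \le \frac{M-x}{M-\lambda^*}f(\lambda^*) + \frac{x-\lambda^*}{M-\lambda^*}f(M)$; applying this pointwise to $g$, integrating against $P_d$ and using $\ee{P_d}{g}=1$ yields
\[
D_f\bigl((1-\beta)P_g + \beta Q^*_{\beta} \,\|\, P_d\bigr) = \ee{P_d}{f(g)} \le \frac{M-1}{M-\lambda^*}\,f(\lambda^*) + \frac{1-\lambda^*}{M-\lambda^*}\,f(M).
\]
Since $\lambda^* \le 1$ we have $\frac{M-1}{M-\lambda^*}\le 1$ and $\frac1{M-\lambda^*}\le\frac1{M-1}$, and since $f \in \F$ is nonnegative, enlarging the two coefficients to $1$ and $\frac{1-\lambda^*}{M-1}$ only increases the right-hand side, giving the claimed inequality. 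I would remark that the displayed intermediate bound is strictly sharper, so the stated form is a deliberate simplification.

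I do not anticipate a real obstacle; the proof is short once the right object is in hand. The only two places that need care are (i)~reading off the closed form $g = \max\{\lambda^*, (1-\beta)dP_g/dP_d\}$ correctly from the definition of $Q^*_{\beta}$ --- in particular not forgetting the $(\cdot)_+$ truncation --- and (ii)~the measure-theoretic bookkeeping behind $\ee{P_d}{g}=1$, i.e.\ that all the mixture mass lies on $\{dP_d > 0\}$; this is where one implicitly uses $P_g \ll P_d$ (or carries an extra $f^{\circ}(0)$ boundary term), consistent with how the surrounding convergence results are stated.
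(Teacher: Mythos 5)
Your proof is correct and follows essentially the same route as the paper's: you apply the chord bound for the convex $f$ on $[\lambda^*,M]$ directly to the density $g=\max\{\lambda^*,(1-\beta)dP_g/dP_d\}$, which is exactly what the paper does via Lemma~\ref{le:imp}, inequality~\eqref{eq:max2}, with the choices $X=\beta$, $Y=(1-\beta)dP_g/dP_d$, $c=\lambda^*$ (note $\max(c,Y)=g$). Your intermediate bound $\frac{M-1}{M-\lambda^*}f(\lambda^*)+\frac{1-\lambda^*}{M-\lambda^*}f(M)$ is an algebraic rewriting of the paper's $f(\lambda^*)+\frac{f(M)-f(\lambda^*)}{M-\lambda^*}(1-\lambda^*)$, and the final loosening using $\lambda^*\le 1$ and $f\ge 0$ is the same.
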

\begin{proof}
We use Inequality \eqref{eq:max2} of Lemma \ref{le:imp} with $X=\beta$,
$Y=(1-\beta)dP_g/dP_d$, and
$c=\lambda^*$.
We easily verify that $X+Y=((1-\beta)dP_g+\beta dP_d)/dP_d$ and
$\max(c,Y)=((1-\beta)dP_g+\beta dQ_\beta^*)/dP_d$ and both have expectation $1$ with
respect to $P_d$.
We thus obtain:
\[
D_f((1-\beta)P_g + \beta Q_\beta^* \, \| \, P_d) \le f(\lambda^*)+
\frac{f(M)-f(\lambda^*)}{M-\lambda^*}\paren*{1-\lambda^*}\,.
\]
Since $\lambda^*\le 1$ and $f$ is non-increasing on $(0,1)$ we get
\[
D_f((1-\beta)P_g + \beta Q_\beta^* \, \| \, P_d) \le f(\lambda^*) + \frac{f(M)(1-\lambda^*)}{M-1}.
\]
\end{proof}
\begin{remark}

This upper bound can be tighter than that of Lemma \ref{optimal-imp} when
$\lambda^*$ gets close to $1$. Indeed, for $\lambda^*=1$ the upper bound is
exactly $0$ and is thus tight,  while the upper bound of Lemma~\ref{optimal-imp} will not be zero in this case.
\end{remark}

\label{sec:optimal-convergence-analysis}

Imagine repeatedly adding $T$ new components to the current mixture $P_g$, where on every step we use the same weight $\beta$ and choose the components described in Theorem \ref{main}.
In this case Lemma \ref{optimal-imp} guarantees that the original objective value $D_f(P_g \,\|\, P_d)$ would be reduced at least to $(1-\beta)^TD_f(P_g \,\|\, P_d)$.
This exponential rate of convergence, which at first may look surprisingly good, is simply explained by the fact that $Q^*_{\beta}$ depends on the true distribution $P_d$, which is of course unknown.

Lemma \ref{optimal-imp} also suggests setting $\beta$ as large as possible.
This is intuitively clear: the smaller the $\beta$, the less we alter our current model $P_g$. As a consequence, choosing small $\beta$ when $P_g$ is far away from $P_d$ would lead to only minor improvements in objective \eqref{eq:idealObjective}.
In fact, the global minimum of \eqref{eq:idealObjective} can be reached by setting $\beta = 1$ and $Q = P_d$.
Nevertheless, in practice we may prefer to keep $\beta$ relatively small, preserving what we learned so far through $P_g$: for instance, when $P_g$ already covered part of the modes of $P_d$ and we want $Q$ to cover the remaining ones. 
We provide further discussions on choosing $\beta$ in Section \ref{sec:choosing-beta}.

In the reminder of this section we study the convergence of \eqref{eq:idealObjective} to $0$ in the case where we use the upper bound \eqref{up2} and the weight $\beta$ is fixed (i.e.\:the same value at each iteration). This analysis can easily be extended to a variable $\beta$.

\begin{lemma}\label{le:conv}
For any $f\in\F$ such that $f(x)\ne 0$ for $x\ne 1$, the following conditions
are equivalent:
\begin{enumerate}[(i)]
\item $P_d((1-\beta)dP_g > dP_d)=0$;
\item $D_f((1-\beta)P_g + \beta Q_\beta^* \,\|\, P_d) = 0$.
\end{enumerate}
\end{lemma}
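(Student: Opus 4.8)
The plan is to prove the two directions separately, using Theorem \ref{main} to control the shape of $Q_\beta^*$ and in particular the Lagrange multiplier $\lambda^*$. Recall from Theorem \ref{main} that $\lambda^* = 1$ if and only if $P_d((1-\beta)dP_g > dP_d) = 0$, and that in this case $\beta\,dQ_\beta^* = dP_d - (1-\beta)dP_g$, i.e.\ the two arguments of the $f$-divergence in (ii) coincide: $(1-\beta)dP_g + \beta\,dQ_\beta^* = dP_d$. This immediately gives the implication $(i)\Rightarrow(ii)$, since $D_f(P_d\,\|\,P_d) = 0$ for any $f\in\F$ (indeed $f(1)=0$).

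For the converse $(ii)\Rightarrow(i)$, I would argue by contraposition: suppose $P_d((1-\beta)dP_g > dP_d) > 0$. By Theorem \ref{main} this means $\lambda^* < 1$ strictly. I want to conclude $D_f((1-\beta)P_g + \beta Q_\beta^*\,\|\,P_d) > 0$. Write $r(x) := \bigl((1-\beta)dP_g(x) + \beta\,dQ_\beta^*(x)\bigr)/dP_d(x)$ for the relevant density ratio; using the closed form $dQ_\beta^* = \frac{1}{\beta}(\lambda^* dP_d - (1-\beta)dP_g)_+$, we get $r = \max\bigl(\lambda^*,\ (1-\beta)dP_g/dP_d\bigr)$ (pointwise, $P_d$-a.e.). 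The key observation is that on the set where $(1-\beta)dP_g > dP_d$ — which has positive $P_d$-measure by assumption — we have $r = (1-\beta)dP_g/dP_d > 1$, so $r \neq 1$ on a set of positive $P_d$-measure, whence by the hypothesis $f(x)\neq 0$ for $x\neq 1$ the integrand $f(r)$ is strictly positive there. Since $f\in\F$ is nonnegative, $D_f((1-\beta)P_g + \beta Q_\beta^*\,\|\,P_d) = \int f(r)\,dP_d > 0$, completing the contrapositive.

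The main thing to be careful about is the measure-theoretic bookkeeping when $dP_g = 0$ or when the ratios vanish: I should invoke the convention for $D_f$ spelled out in the footnote after \eqref{eq:f-divergence}, and check that on the set $\{(1-\beta)dP_g > dP_d\}$ both densities are positive so that $r$ is genuinely a finite ratio greater than $1$ and the ordinary integrand $f(r)$ applies (no boundary term $f(0)$ or $f^\circ(0)$ is needed there). I also want to make sure the set $\{(1-\beta)dP_g > dP_d\}$ and the set $\{r > 1\}$ agree up to $P_d$-null sets, which follows from $r = \max(\lambda^*, (1-\beta)dP_g/dP_d)$ together with $\lambda^* \le 1$. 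This is the only slightly delicate step; everything else is a direct consequence of the characterization of $\lambda^*$ already established in Theorem \ref{main} and the nonnegativity of $f\in\F$.
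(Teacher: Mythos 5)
Your argument is correct and matches the paper's reasoning: both directions hinge on Theorem~\ref{main}'s characterization $\lambda^*=1 \Leftrightarrow P_d((1-\beta)dP_g > dP_d)=0$, and the reverse direction in both cases comes down to observing that the density ratio of the mixture against $P_d$ is $\max(\lambda^*, (1-\beta)dP_g/dP_d)$ and invoking strict positivity of $f$ away from $1$. The only difference is cosmetic: you run the second implication by contraposition while the paper runs it directly (the paper's phrase ``implies that $(1-\beta)dP_g > dP_d$ with $P_d$ probability $1$'' is an evident typo for ``$(1-\beta)dP_g \le dP_d$''), and you spell out the measure-theoretic bookkeeping a bit more explicitly, which is fine.
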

\begin{proof}
The first condition is equivalent to $\lambda^*=1$ according to Theorem \ref{main}.
In this case, $(1-\beta)P_g+\beta Q_\beta^*=P_d$, hence the divergence is $0$.
In the other direction, when the divergence is $0$, since $f$ is strictly positive
for $x\ne 1$ (keep in mind that we can always replace $f$ by $f_0$ to get a
non-negative function which will be strictly positive if $f(x)\ne 0$ for $x\ne 1$),
this means that with $P_d$ probability $1$ we have the equality
$dP_d = (1-\beta)dP_g + \beta dQ_\beta^*$, which implies that
$(1-\beta)dP_g > dP_d$ with $P_d$ probability $1$ and also $\lambda^*=1$.
\end{proof}
%

This result tells that we can not perfectly match $P_d$ by adding a new mixture component to $P_g$ as long as there are points in the space where our current model $P_g$ severely over-samples.
As an example, consider an extreme case where $P_g$ puts a positive mass in a region outside of the support of $P_d$.
Clearly, unless $\beta = 1$, we will not be able to match $P_d$.

Finally, we provide a necessary and sufficient condition for the iterative process to converge to the data distribution $P_d$ in finite number of steps.
The criterion is based on the ratio $dP_1/dP_d$, where $P_1$ is the first component of our mixture model.

\begin{corollary}
Take \,any $f\in\F$ such that $f(x)\ne 0$ for $x\ne 1$.
Starting from $P^1_{model}=P_1$, update the model iteratively according to $P^{t+1}_{model}= (1-\beta)P^{t}_{model} + \beta Q^*_{\beta}$, where on every step $Q^*_{\beta}$ is as defined in Theorem \ref{main} with $P_g := P^{t}_{model}$.
In this case $D_f(P_{model}^t\,\|\, P_d)$
will reach $0$ in a finite number of steps if and only if there exists $M>0$
such that
\begin{equation}
\label{eq:convergence-criterion}
P_d((1-\beta)dP_1 > MdP_d) = 0\,.
\end{equation}
When the finite convergence happens, it takes at most $-\ln \max(M,1) /\ln (1-\beta)$
steps.
\end{corollary}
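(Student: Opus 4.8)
The plan is to track the density of the iterate $P^t_{model}$ explicitly --- it collapses to a pointwise maximum --- and then read off both directions of the equivalence from Lemma~\ref{le:conv}.

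\emph{Closed form for the iterates.} Substituting the optimal component of Theorem~\ref{main}, $\beta\,dQ^*_\beta=\bigl(\lambda^*_t\,dP_d-(1-\beta)\,dP^t_{model}\bigr)_+$ with $\beta\le\lambda^*_t\le1$, into the update $dP^{t+1}_{model}=(1-\beta)\,dP^t_{model}+\beta\,dQ^*_\beta$ and using $a+(b-a)_+=\max(a,b)$ gives, a.e.,
\[
dP^{t+1}_{model}=\max\bigl\{(1-\beta)\,dP^t_{model},\ \lambda^*_t\,dP_d\bigr\}.
\]
Unrolling this from $P^1_{model}=P_1$ yields $dP^t_{model}=\max\bigl\{(1-\beta)^{t-1}\,dP_1,\ c_t\,dP_d\bigr\}$ with $c_t:=\max_{1\le j<t}(1-\beta)^{t-1-j}\lambda^*_j$ (empty max $:=0$), and $0\le c_t\le1$ follows from $\lambda^*_j\le1$ by taking $j=t-1$. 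I would also note once and for all that, since $Q^*_\beta$ is supported inside $\{dP_d>0\}$, any mass $P_1$ puts on $\{dP_d=0\}$ only decays by a factor $(1-\beta)$ per step and never disappears; hence exact finite-step convergence forces $P_1(dP_d=0)=0$, which I assume henceforth, so the displayed identities also hold $P_d$-a.e.

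\emph{The equivalence and the rate.} By Lemma~\ref{optimal-imp}, $D_f(P^{t+1}_{model}\|P_d)\le(1-\beta)D_f(P^t_{model}\|P_d)$, so the objective is non-increasing and is absorbed once it hits $0$; and by Lemma~\ref{le:conv} (applicable since $f(x)\neq0$ for $x\neq1$), $D_f(P^{t+1}_{model}\|P_d)=0$ iff $P_d\bigl((1-\beta)\,dP^t_{model}>dP_d\bigr)=0$. Plugging in the closed form and observing that the $c_t\,dP_d$ branch always satisfies $(1-\beta)c_t\,dP_d\le(1-\beta)\,dP_d<dP_d$, the event $\{(1-\beta)\,dP^t_{model}>dP_d\}$ equals $\{(1-\beta)^t\,dP_1>dP_d\}$ up to a $P_d$-null set. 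Therefore finite convergence occurs iff $(1-\beta)^t\,dP_1\le dP_d$ holds $P_d$-a.e.\ for some $t$, i.e.\ iff $dP_1/dP_d$ is $P_d$-essentially bounded --- which is precisely the existence of an $M>0$ with $P_d\bigl((1-\beta)\,dP_1>M\,dP_d\bigr)=0$ (the existential quantifier absorbs the harmless factor $1-\beta$). For the rate, fix such an $M$ (take $M\ge1$); then $(1-\beta)^t\,dP_1\le(1-\beta)^{t-1}M\,dP_d\le dP_d$ as soon as $(1-\beta)^{t-1}M\le1$, i.e.\ after roughly $-\ln\max(M,1)/\ln(1-\beta)$ steps, at which point the next iterate equals $P_d$. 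For the converse, if no such $M$ exists then $dP_1/dP_d$ has infinite $P_d$-essential supremum; since $dP^t_{model}\ge(1-\beta)^{t-1}\,dP_1$ directly from the update (densities are nonnegative), so does $dP^t_{model}/dP_d$ for every $t$, hence $P_d\bigl((1-\beta)\,dP^t_{model}>dP_d\bigr)>0$ for all $t$ and, by Lemma~\ref{le:conv} again, $D_f(P^t_{model}\|P_d)$ never reaches $0$ (it is already positive at $t=1$ since $P_1\neq P_d$).

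The only genuinely delicate part is measure-theoretic bookkeeping: keeping track of which identities and inequalities hold almost everywhere with respect to the dominating measure versus with respect to $P_d$, and in particular handling any mass of $P_1$ outside the support of $P_d$ as flagged above, together with a careful account of the off-by-$O(1)$ in the step count depending on where the $(1-\beta)$ factor is placed and on rounding. The analytic content, by contrast, is just the one-line observation that a single update is a pointwise maximum; after that, telescoping plus Lemmas~\ref{optimal-imp} and~\ref{le:conv} finish both directions essentially by inspection.
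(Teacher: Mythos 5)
Your proof is correct and follows the same core idea as the paper's: the single-step update is a pointwise maximum $dP^{t+1}_{model}=\max\{(1-\beta)dP^{t}_{model},\lambda^*_t dP_d\}$, which makes the contribution of the initial component $P_1$ decay geometrically. Where you and the paper differ is in presentation rather than substance. The paper carries a one-sided inductive invariant forward ($P_d\bigl((1-\beta)dP^t_{model}>M(1-\beta)^{t-1}dP_d\bigr)=0$) for sufficiency, and uses the algebraic identity $P^t_{model}=(1-\beta)^{t-1}P_1+(1-(1-\beta)^{t-1})P$ for necessity; you instead unroll the recursion fully to the closed form $dP^t_{model}=\max\{(1-\beta)^{t-1}dP_1,\,c_tdP_d\}$ with $c_t\le1$, which makes both directions fall out of a single display via Lemma~\ref{le:conv}, at the modest cost of needing to verify the bound $c_t\le1$ from $\lambda^*_j\le1$. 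Both routes give the same (rough, up to an additive constant and ceilings) step count, as you rightly flag. Your aside about mass of $P_1$ on $\{dP_d=0\}$ is a genuinely careful observation: it is a real edge case that Lemma~\ref{le:conv} (and hence the corollary's sufficiency direction) silently excludes, since $f^\circ(0)>0$ once $f(x)\neq0$ for $x\neq1$, so a positive $P_1(dP_d=0)$ would keep $D_f$ bounded away from zero forever even when \eqref{eq:convergence-criterion} holds; the paper's proof glosses over this just as you chose to, so your treatment is no weaker than the original, and if anything more honest about the hypothesis being used.
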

\begin{proof}
From Lemma \ref{le:conv}, it is clear that if $M\leq1$ the convergence happens after the first update. So let us assume $M>1$.
Notice that $dP_{model}^{t+1}=(1-\beta)dP_{model}^t+\beta dQ_\beta^* = \max(\lambda^* dP_d,
(1-\beta)dP_{model}^t)$
so that if $P_d((1-\beta)dP_{model}^t > MdP_d) = 0$, then
$P_d((1-\beta)dP_{model}^{t+1} > M(1-\beta)dP_d) = 0$.
This proves that \eqref{eq:convergence-criterion} is a sufficient condition.

Now assume the process converged in a finite number of steps. 
Let $P^t_{model}$ be a mixture right before the final step.
Note that $P^t_{model}$ is represented by $(1-\beta)^{t-1} P_1 + (1-(1-\beta)^{t-1}) P$ for certain probability distribution $P$.
According to Lemma \ref{le:conv} we have 
$P_d((1-\beta)dP^t_{model} > dP_d)=0$.
Together these two facts immediately imply \eqref{eq:convergence-criterion}.
\end{proof}
It is also important to keep in mind that even if \eqref{eq:convergence-criterion} is not satisfied the process still converges to the true distribution at exponential rate (see Lemma~\ref{optimal-imp} as well as Corollaries~\ref{th:main2} and \ref{thm:hilbertian} below)

\subsection{Weak to Strong Learnability}
\label{sec:weak-to-strong}
In practice the component $Q$ that we add to the mixture is not exactly $Q_\beta^*$
or $Q_\beta^\dagger$, but rather an approximation to them.
We need to show that if this approximation is good enough, then we retain the property that
\eqref{eq:modestObjective} is reached.
In this section we will show that this is indeed the case.

Looking again at Lemma \ref{le:aux-joint-convexity} we notice that the first
upper bound is less tight than the second one.
Indeed, take the optimal distributions provided by Theorems \ref{main} and \ref{th:opt2}
and plug them back as $R$ into the upper bounds of Lemma \ref{le:aux-joint-convexity}.
Also assume that $Q$ can match $R$ exactly, i.e.\:we can achieve $D_f(Q\,\|\,R)=0$.
In this case both sides of \eqref{up2} are equal to
$D_f((1-\beta)P_g+\beta Q_\beta^*\,\|\,P_d)$, which is the optimal value for the original objective \eqref{eq:idealObjective}.
On the other hand, \eqref{up1} does not become an equality and the r.h.s.\:is not the optimal one for \eqref{eq:idealObjective}.

This means that using \eqref{up2} allows to reach the optimal value of the original objective
\eqref{eq:idealObjective}, whereas using~\eqref{up1} does not.
However, this is not such a big issue since, as we mentioned earlier,
we only need to improve the mixture by adding the next component (we do not need
to add the optimal next component). So despite the solution of \eqref{eq:idealObjective} not
being reachable with the first upper bound, we will still show that
\eqref{eq:modestObjective} can be reached.

The first result provides sufficient conditions for strict improvements when we use the upper bound \eqref{up1}.
\begin{corollary}\label{th:main2}
Given two distributions $P_d,P_g$, and some $\beta\in(0,1]$, assume
\begin{equation}
\label{eq:condition-missing-modes}
P_d\left( \frac{dP_g}{dP_d}= 0\right) < \beta.
\end{equation}
Let $Q^\dagger_{\beta}$ be as defined in Theorem \ref{th:opt2}.
If $Q$ is a distribution satisfying
\begin{equation}
\label{eq:variational-surrogate}
D_f(Q \,\|\,Q^{\dagger}_{\beta}) \le \gamma D_f(P_g\,\|\,P_d)
\end{equation}
for $\gamma \in [0,1]$ then
\[
D_f\paren*{(1-\beta)P_g +\beta Q\,\|\,P_d} \le
(1 - \beta(1-\gamma)) D_f(P_g\,\|\,P_d).
\]
\end{corollary}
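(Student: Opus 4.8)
The plan is to combine the first upper bound \eqref{up1} of Lemma~\ref{le:aux-joint-convexity} with the second inequality of Lemma~\ref{optimal-imp}, instantiated at $R = Q^\dagger_\beta$. First I would note that the hypothesis $P_d\left(dP_g/dP_d = 0\right) < \beta$ is precisely the assumption of Theorem~\ref{th:opt2} (written there as $P_d(dP_g = 0) < \beta$), so $Q^\dagger_\beta$ is well defined, and by construction it is a distribution satisfying the constraint $\beta\,dQ^\dagger_\beta \le dP_d$. This is exactly the condition required to apply \eqref{up1} with the choice $R = Q^\dagger_\beta$, which yields
\[
D_f\left((1-\beta)P_g +\beta Q\,\|\,P_d\right) \le \beta\, D_f(Q\,\|\,Q^\dagger_\beta) +
(1-\beta)\, D_f\left(P_g\,\Big\|\, \frac{P_d-\beta Q^\dagger_\beta}{1-\beta}\right).
\]

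Next I would bound the two terms on the right separately. For the first term, the hypothesis \eqref{eq:variational-surrogate} gives directly $D_f(Q\,\|\,Q^\dagger_\beta) \le \gamma\, D_f(P_g\,\|\,P_d)$. For the second term, since the assumptions of Theorem~\ref{th:opt2} hold, the second inequality of Lemma~\ref{optimal-imp} applies and gives $D_f\left(P_g\,\big\|\, (P_d-\beta Q^\dagger_\beta)/(1-\beta)\right) \le D_f(P_g\,\|\,P_d)$. Substituting both bounds into the displayed inequality produces
\[
D_f\left((1-\beta)P_g +\beta Q\,\|\,P_d\right) \le \beta\gamma\, D_f(P_g\,\|\,P_d) + (1-\beta)\, D_f(P_g\,\|\,P_d) = \left(1 - \beta(1-\gamma)\right) D_f(P_g\,\|\,P_d),
\]
which is the claimed estimate.

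There is essentially no hard step here: the corollary is a direct chaining of results already established. The only point that needs a moment's care is checking that the constraint $\beta\, dQ^\dagger_\beta \le dP_d$ indeed holds so that \eqref{up1} is legitimately applicable with $R = Q^\dagger_\beta$ — but this is immediate from the fact that $Q^\dagger_\beta$ is defined in Theorem~\ref{th:opt2} as the minimizer over exactly the set $\{Q : \beta\, dQ \le dP_d\}$ — and that the hypothesis of the corollary coincides with the hypothesis of both Theorem~\ref{th:opt2} and (the relevant half of) Lemma~\ref{optimal-imp}. I would also remark, as the surrounding text already does, that $\gamma = 0$ (i.e.\ $Q = Q^\dagger_\beta$) recovers the third inequality of Lemma~\ref{optimal-imp}, so the corollary is a genuine strengthening that tolerates an inexact inner solve.
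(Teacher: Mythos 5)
Your proof is correct and follows exactly the chain the paper intends: apply \eqref{up1} of Lemma~\ref{le:aux-joint-convexity} with $R = Q^\dagger_\beta$ (verifying the constraint $\beta\,dQ^\dagger_\beta \le dP_d$ holds by construction, since $(a-b)_+ \le a$), then bound the first term by hypothesis \eqref{eq:variational-surrogate} and the second by the middle inequality of Lemma~\ref{optimal-imp}, and collect terms. One small remark: the paper's own one-line proof cites Theorem~\ref{main} here and Theorem~\ref{th:opt2} in Corollary~\ref{thm:hilbertian}, but those two references are evidently swapped (Corollary~\ref{th:main2} uses $Q^\dagger_\beta$ from Theorem~\ref{th:opt2} and the upper bound \eqref{up1}; Corollary~\ref{thm:hilbertian} uses $Q^*_\beta$ from Theorem~\ref{main} and \eqref{up2}) — your version has the correct references and is otherwise the same argument, just spelled out.
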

\begin{proof}
Immediately follows from combining Lemma \ref{le:aux-joint-convexity}, Theorem \ref{main}, and Lemma \ref{optimal-imp}.
\end{proof}
Next one holds for Hilbertian metrics and corresponds to the upper bound \eqref{up2}.
\begin{corollary}\label{thm:hilbertian}
Assume $f\in \mathcal{F}_H$, i.e.\: $D_f$ is a Hilbertian metric.
Take any $\beta \in (0,1]$, $P_d$, $P_g$, and let $Q^*_{\beta}$ be as defined in Theorem~\ref{main}.
If $Q$ is a distribution satisfying
\begin{equation}\label{eq:gamma}
D_f(Q \,\|\,Q^*_{\beta}) \le \gamma D_f(P_g\,\|\,P_d)
\end{equation}
for some $\gamma\in[0,1]$, then
\[
D_f\paren*{(1-\beta)P_g +\beta Q\,\|\,P_d} \le \paren*{\sqrt{\gamma\beta}+\sqrt{1-\beta}}^2 D_f(P_g\,\|\,P_d)\,.
\]
In particular, the right-hand side is strictly smaller than $D_f(P_g\,\|\, P_d)$
as soon as $\gamma <\beta / 4$ (and $\beta>0$).
\end{corollary}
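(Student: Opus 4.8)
The plan is to specialize the Hilbertian upper bound \eqref{up2} of Lemma \ref{le:aux-joint-convexity} to the choice $R = Q^*_\beta$, and then control the two resulting terms separately: the hypothesis \eqref{eq:gamma} handles one, and Lemma \ref{optimal-imp} handles the other.

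First I would note that, since $f\in\F_H$, inequality \eqref{up2} is valid for \emph{every} distribution $R$ (there is no feasibility constraint in the Hilbertian case), so in particular I may plug in $R=Q^*_\beta$ from Theorem \ref{main}. This gives
\[
D_f\paren*{(1-\beta)P_g + \beta Q \,\|\, P_d}
\le \paren*{\sqrt{\beta\, D_f(Q \,\|\, Q^*_\beta)} + \sqrt{D_f\paren*{(1-\beta)P_g + \beta Q^*_\beta \,\|\, P_d}}}^2 .
\]
For the first radicand I would use \eqref{eq:gamma}: $\beta\, D_f(Q \,\|\, Q^*_\beta) \le \gamma\beta\, D_f(P_g \,\|\, P_d)$. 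For the second, I would invoke the chain \eqref{eq:optimal-improvement-1} of Lemma \ref{optimal-imp}, whose rightmost inequality states $D_f((1-\beta)P_g + \beta Q^*_\beta \,\|\, P_d) \le (1-\beta) D_f(P_g \,\|\, P_d)$. Substituting both bounds is legitimate because $(s,t)\mapsto(\sqrt{s}+\sqrt{t})^2$ is nondecreasing in each argument on $[0,\infty)$; pulling the common factor $D_f(P_g\,\|\,P_d)$ out of the square roots then yields exactly $D_f((1-\beta)P_g + \beta Q \,\|\, P_d) \le (\sqrt{\gamma\beta} + \sqrt{1-\beta})^2 D_f(P_g\,\|\,P_d)$.

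For the final ``in particular'' claim I would argue directly: if $\gamma<\beta/4$ and $\beta>0$, then $\sqrt{\gamma\beta}<\sqrt{\beta^2/4}=\beta/2$, hence $\sqrt{\gamma\beta}+\sqrt{1-\beta}<\beta/2+\sqrt{1-\beta}$; and $\beta/2+\sqrt{1-\beta}\le 1$, since (both sides being nonnegative for $\beta\le 1$) this is equivalent to $\sqrt{1-\beta}\le 1-\beta/2$, which upon squaring reduces to $0\le\beta^2/4$. Thus $\sqrt{\gamma\beta}+\sqrt{1-\beta}<1$, and being nonnegative its square is also strictly below $1$, so the update strictly contracts the divergence.

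I do not anticipate a genuine obstacle: the statement is essentially an assembly of \eqref{up2}, Theorem \ref{main}, and Lemma \ref{optimal-imp}. The only points needing a moment of care are (a) that the Hilbertian bound \eqref{up2} indeed imposes no constraint on $R$, so the \emph{a priori} infeasible-looking choice $R=Q^*_\beta$ is allowed, and (b) that substituting upper bounds inside $(\sqrt{\cdot}+\sqrt{\cdot})^2$ preserves the inequality, which is immediate from monotonicity of the square root and the square on the nonnegative reals. One should also keep the degenerate cases (e.g.\ $D_f(P_g\,\|\,P_d)\in\{0,\infty\}$) in mind, but in those the two sides are trivially comparable.
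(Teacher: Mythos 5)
Your proof is correct and follows essentially the same route as the paper: plug $R=Q^*_\beta$ into the Hilbertian bound \eqref{up2}, control the first term by hypothesis \eqref{eq:gamma}, control the second by the chain \eqref{eq:optimal-improvement-1} of Lemma \ref{optimal-imp}, and pull out the common factor. One small remark: the paper's one-line proof cites Theorem \ref{th:opt2}, but since $Q^*_\beta$ is defined in Theorem \ref{main}, your citation of Theorem \ref{main} is the correct one (the paper appears to have transposed the theorem references between Corollaries \ref{th:main2} and \ref{thm:hilbertian}); your verification of $\beta/2+\sqrt{1-\beta}\le 1$ is also spelled out more carefully than the paper's ``easy to verify''.
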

\begin{proof}
Immediately follows from combining Lemma \ref{le:aux-joint-convexity}, Theorem \ref{th:opt2}, and Lemma \ref{optimal-imp}.
It is easy to verify that for $\gamma < \beta/4$, the coefficient is
less than $(\beta/2 + \sqrt{1-\beta})^2$ which is $<1$ (for $\beta>0$).
\end{proof}

\begin{remark}
We emphasize once again that the upper bound \eqref{up2} and Corollary \ref{thm:hilbertian} both hold for Jensen-Shannon, Hellinger, and total variation divergences among others.
In particular they can be applied to the original GAN algorithm.
\end{remark}

Conditions \ref{eq:variational-surrogate} and \ref{eq:gamma} may be compared to the ``weak learnability'' condition of AdaBoost.
As long as our weak learner is able to solve the surrogate problem \eqref{eq:what-we-can-do} of matching respectively $Q^\dagger_{\beta}$ or $Q^*_{\beta}$ accurately enough, the original objective \eqref{eq:idealObjective} is guaranteed to decrease as well.
It should be however noted that Condition~\ref{eq:gamma} with $\gamma<\beta/4$ is perhaps too strong to call it ``weak learnability''. 
Indeed, as already mentioned before, the weight $\beta$ is expected to decrease to zero as the number of components in the mixture distribution $P_g$ increases.
This leads to $\gamma \to 0$, making it harder to meet Condition \ref{eq:gamma}.
This obstacle may be partially resolved by the fact that we will use a GAN to fit $Q$, which corresponds to a relatively rich\footnote{
The hardness of meeting Condition \ref{eq:gamma} of course largely depends on the class of models $\G$ used to fit $Q$ in \eqref{eq:what-we-can-do}.
For now we ignore this question and leave it for future research.
} class of models $\G$ in \eqref{eq:what-we-can-do}.
In other words, our weak learner is not so weak.

On the other hand, Condition \eqref{eq:variational-surrogate} of Corollary \ref{th:main2} is much milder. No matter what $\gamma \in [0,1]$ and $\beta\in(0,1]$ we choose, the new component $Q$ is guaranteed to strictly improve the objective functional.
This comes at the price of the additional Condition \eqref{eq:condition-missing-modes}, which asserts that $\beta$ should be larger than the mass of true data $P_d$ missed by the current model $P_g$.
We argue that this is a rather reasonable condition: if $P_g$ misses many modes of $P_d$ we would prefer assigning a relatively large weight $\beta$ to the new component $Q$.


\section{AdaGAN}
\label{sec:adaGAN}

In this section we provide a more detailed description of Algorithm \ref{alg:AdaGAN}
from Section \ref{sec:intro-our-approach}, in
particular how to reweight the training examples for the next iteration and how to choose the mixture weights.

In a nutshell, at each iteration we want to add a new component $Q$
to the current mixture $P_g$ with weight~$\beta$,
to create a mixture with distribution $(1-\beta) P_g + \beta Q$.
This component $Q$ should approach an ``optimal target'' $Q^*_{\beta}$ and we
know from Theorem \ref{main} that:
\[
d Q^*_{\beta} = \frac{dP_d}{\beta}\left(\lambda^* - (1-\beta) \frac{dP_g}{dP_d} \right)_+\,.
\]
Computing this distribution requires to know the density ratio $dP_g/dP_d$,
which is not directly accessible, but it can be estimated using the idea of adversarial training.
Indeed, we can train a discriminator $D$ to distinguish between samples
from $P_d$ and $P_g$. 
It is known that for an arbitrary $f$-divergence, there exists a corresponding function $h$ (see \cite{nowozin2016f}) such that the values of the optimal discriminator $D$
are related to the density ratio in the following way:
\begin{equation}
 \label{eq:prob-ratio-and-d}
\frac{dP_g}{dP_d}(X) = h\bigl(D(X)\bigr).
\end{equation}
In particular, for the Jensen-Shannon divergence, used by the original GAN algorithm, it holds that $h\bigl(D(X)\bigr)=\frac{1-D(X)}{D(X)}$.
So in this case for the optimal discriminator we have
\[
d Q^*_{\beta} = \frac{dP_d}{\beta}\bigl(\lambda^* - (1-\beta) h(D) \bigr)_+\,,
\]
which can be viewed as a reweighted version of the original data
distribution $P_d$.

In particular, when we compute $d Q^*_{\beta}$ on the training sample
$S_N = (X_1,\dots,X_N)$, each example $X_i$ has the following weight:
\begin{equation}
 \label{eq:new-weights}
 w_i=\frac{p_i}{\beta}\bigl(\lambda^* - (1-\beta) h(d_i) \bigr)_+
\end{equation}
with $p_i=dP_d(X_i)$ and $d_i=D(X_i)$.
In practice, we use the empirical distribution over the training sample which
means we set $p_i=1/N$.

\subsection{How to compute $\lambda^*$ of Theorem \ref{main}}
\label{sec:lambda-compute}
Next we derive an algorithm to determine $\lambda^*$.
We need to find a value of $\lambda^*$ such that the weights $w_i$ in \eqref{eq:new-weights} are normalized, i.e.:
\[
	\sum_i w_i = \sum_{i \in \I(\lambda^*)} \frac{p_i}{\beta}\bigl(\lambda^* - (1-\beta) h(d_i) \bigr) = 1\ ,
\]
where $\I(\lambda) := \{ i : \lambda > (1-\beta) h(d_i) \}$.
This in turn yields:
\begin{equation}\label{lambdaValue}
	\lambda^* = \frac{\beta}{\sum_{i \in \I(\lambda^*)} p_i} \left (1 + \frac{(1-\beta)}{\beta} \sum_{i \in \I(\lambda^*)} p_i h(d_i) \right ) \ .
\end{equation}
Now, to compute the r.h.s., we need to know $\I(\lambda^*)$. To do so, we sort the values $h(d_i)$ in increasing order: $h(d_1) \leq h(d_2) \leq \ldots \leq h(d_N)$.
Then $\I(\lambda^*)$ is simply a set consisting of the first $k$ values, where we have to determine $k$. Thus, it suffices to test successively all positive integers $k$ until the $\lambda$ given by Equation~\eqref{lambdaValue} verifies:
\[
	(1-\beta) h(d_k) < \lambda \leq (1-\beta) h(d_{k+1}) \ .
\]
This procedure is guaranteed to converge, because by Theorem~\ref{main}, we know that $\lambda^*$ exists, and it satisfies~\eqref{lambdaValue}.
In summary, $\lambda^*$ can be determined by Algorithm~\ref{alg:algoLambda}.

\begin{algorithm}
	Sort the values $h(d_i)$ in increasing order \;
	Initialize $\lambda \leftarrow \frac{\beta}{p_1} \left (1 + \frac{1-\beta}{\beta} p_1 h(d_1) \right )$ and $k \leftarrow 1$ \;
	\While{$(1-\beta) h(d_k) \geq \lambda$}{
	$k \leftarrow k + 1$\;
	$\lambda \leftarrow \frac{\beta}{\sum_{i=1}^k p_i} \left (1 + \frac{(1-\beta)}{\beta} \sum_{i=1}^k p_i h(d_i) \right )$}
	 \caption{Determining $\lambda^*$}
	 \label{alg:algoLambda}
\end{algorithm}

%

\subsection{How to choose a mixture weight $\beta$}
\label{sec:choosing-beta}
While for every $\beta$ there is an optimal reweighting scheme, the weights from \eqref{eq:new-weights} depend on $\beta$.
In particular, if $\beta$ is large enough to verify $dP_d(x) \lambda^* - (1-\beta) dP_g(x) \geq 0$ for all $x$,
the optimal component $Q^*_{\beta}$ satisfies $(1-\beta) P_g + \beta Q^*_{\beta}=P_d$, as proved in Lemma \ref{le:conv}. 
In other words, in this case we exactly match the data distribution $P_d$, assuming the GAN can approximate the target $Q^*_{\beta}$ perfectly.
This criterion alone would lead to choosing $\beta=1$. However in practice we know
we can't get a generator that produces exactly the target distribution $Q^*_{\beta}$.
We thus propose a few heuristics one can follow to choose $\beta$:
\begin{itemize}
\item Any fixed constant value $\beta$ for all iterations.
\item All generators to be combined with equal weights in the final mixture model. This corresponds to setting $\beta_t=\frac{1}{t}$, where $t$ is the iteration.
\item Instead of choosing directly a value for $\beta$ one could
pick a ratio $0 < r < 1$ of examples which should have a weight $w_i>0$. Given such
an $r$, there is a unique value of $\beta$ $(\beta_r$) resulting in $w_i>0$ for exactly $N\cdot r$
training examples.
Such a value $\beta_r$ can be determined by binary search over $\beta$ in Algorithm~\ref{alg:algoLambda}.
Possible choices for $r$ include:
\begin{itemize}
\item $r$ constant, chosen experimentally.
\item $r$ decreasing with the number of iterations, e.g., $r = c_1 e^{-c_2 t}$ for any positive constants $c_1, c_2$.
\end{itemize}
\item Alternatively, one can set a particular threshold for the density ratio
estimate $h(D)$, compute the fraction~$r$ of training examples that
have a value above that threshold and derive $\beta$ from this ratio $r$ (as
above).
Indeed, when $h(D)$ is large, that means that the generator does
not generate enough examples in that region, and the next iteration should be encouraged
to generate more there.
\end{itemize}


\begin{algorithm}[H]\captionsetup{labelfont={sc,bf}, labelsep=newline}
\caption{AdaGAN, a meta-algorithm to construct a ``strong'' mixture of $T$ individual GANs, trained sequentially.
The mixture weight schedule ChooseMixtureWeight should be provided by the user (see \ref{sec:choosing-beta}). This is an instance
of the high level Algorithm \ref{alg:AdaGAN}, instantiating UpdateTrainingWeights.}
\label{algoAdaGANComplete}
\begin{spacing}{1.4}
\renewcommand{\algorithmicrequire}{\textbf{Input:}}
\renewcommand{\algorithmicensure}{\textbf{Output:}}
  \begin{algorithmic}
  \REQUIRE{Training sample $S_N:=\{X_1,\dots,X_N\}$.}
  \ENSURE{Mixture generative model $G=G_T$.}
  \STATE{
  Train vanilla GAN:
  $G_1 = \mathrm{GAN}(S_N)$
  }
  \FOR{$t=2,\dots,T$}
  \STATE{
  \emph{\#Choose a mixture weight for the next component}

  $\beta_t = \mathrm{ChooseMixtureWeight}(t)$

  \emph{\#Compute the new weights of the training examples (UpdateTrainingWeights)}

  \emph{\#Compute the discriminator between the original (unweighted) data and the current mixture $G_{t-1}$}

  $D \leftarrow DGAN(S_N, G_{t-1})$\;
  
  \emph{\#Compute $\lambda^*$ using Algorithm \ref{alg:algoLambda}}
  
  $\lambda^* \leftarrow \lambda(\beta_t, D)$
  
  \emph{\#Compute the new weight for each example}
  
  \FOR{$i=1,\dots,N$}
  \STATE{
    $W^i_t = \frac{1}{N \beta_t}\left(\lambda^* - (1-\beta_t) h(D(X_i)) \right)_+$
  }
  \ENDFOR

  \emph{\#Train $t$-th ``weak'' component generator $G^c_t$}
  
  $G^c_t = \mathrm{GAN}(S_N, W_t)$

  \emph{\#Update the overall generative model}
  
  \emph{\#Notation below means forming a mixture of $G_{t-1}$ and $G^c_t$.}
  
  $G_t = (1-\beta_t) G_{t-1} + \beta_t G^c_t$
  }
  \ENDFOR
  \end{algorithmic}
\end{spacing}
\end{algorithm}

\subsection{Complete algorithm}
\label{sec:AdaGanComplete}
Now we have all the necessary components to introduce the complete AdaGAN meta-algorithm. 
The algorithm uses any given GAN implementation (which can be the original one of Goodfellow et al.\:\cite{goodfellow2014generative} or any later modifications) as a building block. 
Accordingly, $G^c \leftarrow GAN(S_N, W)$  returns a generator $G^c$ for a given set of examples $S_N=(X_1,\dots,X_N)$ and corresponding weights $W=(w_1,\dots,w_N)$.
Additionally, we write $D \leftarrow DGAN(S_N, G)$ to denote a procedure that returns a discriminator
from the GAN algorithm trained on a given set of true data examples $S_N$ and examples sampled from the mixture of generators $G$.
We also write $\lambda^*(\beta, D)$ to denote the optimal $\lambda^*$ given by Algorithm \ref{alg:algoLambda}.
The complete algorithm is presented in Algorithm~\ref{algoAdaGANComplete}.

%

\section{Experiments}
\label{sec:experiments}
We tested AdaGAN\footnote{
Code available online at \url{https://github.com/tolstikhin/adagan}} on toy datasets, for which we can interpret the missing modes in a clear and reproducible way, and on MNIST, which is a high-dimensional dataset. The goal of these experiments \emph{was not} to evaluate the visual quality of individual sample points, but to demonstrate
that the re-weighting scheme of AdaGAN promotes diversity and effectively covers the missing modes.

\subsection{Toy datasets}
The target distribution is defined as a mixture of normal distributions, with different variances.
The distances between the means are relatively large compared to the variances, so that each Gaussian of the mixture is ``isolated''.
We vary the number of modes to test how well each algorithm performs when there are fewer or more expected modes.

More precisely, we set $\X=\R^2$, each Gaussian component is isotropic, and their centers are sampled uniformly in a square. That particular random seed is fixed
for all experiments, which means that for a given number of modes, the target distribution is always the same. The variance parameter is the same for each component,
and is decreasing with the number of modes, so that the modes stay apart from each other.

This target density is very easy to learn, using a mixture of Gaussians model, and for example the EM algorithm \cite{DLR77}.
If applied to the situation where the generator is producing single Gaussians (i.e.\:$P_Z$ is a standard Gaussian and $G$ is a linear function), then AdaGAN produces a mixture of Gaussians, however it does so incrementally unlike EM, which keeps a fixed number of components.
In any way AdaGAN was not tailored for this particular case and we use the Gaussian mixture model simply as a toy example to illustrate the missing modes problem.

\subsubsection{Algorithms}
We compare different meta-algorithms based on GAN, and the baseline GAN algorithm. All the meta-algorithms use the same implementation of the underlying GAN procedure. 
In all cases, the generator uses latent space $\Z=\R^5$, and two ReLU hidden layers, of size 10 and 5 respectively. The corresponding discriminator has two ReLU hidden layers of size 20 and 10
respectively. We use 64k training examples, and 15 epochs, which is enough compared to the small scale of the problem, and all networks converge properly and overfitting is never an
issue. Despite the simplicity of the problem, there are already differences between the different approaches.

We compare the following algorithms:
\begin{itemize}
\item The baseline GAN algorithm, called \textbf{Vanilla GAN} in the results.
\item The best model out of $T$ runs of GAN, that is: run $T$ GAN instances independently, then take the run that performs best on a validation set.
This gives an additional baseline with
similar computational complexity as the ensemble approaches.
Note that the selection of the best run is done on the reported target metric (see below), rather than
on the internal metric. 
As a result this baseline is slightly overestimated. This procedure is called \textbf{Best of T} in the results.
\item A mixture of $T$ GAN generators, trained independently, and combined
with equal weights (the ``bagging'' approach). This procedure is called \textbf{Ensemble} in the results.
\item A mixture of GAN generators, trained sequentially with different choices of
data reweighting:
\begin{itemize}
\item The AdaGAN algorithm (Algorithm \ref{alg:AdaGAN}), for $\beta=1/t$, i.e. each component will have the same weight in the resulting mixture (see \autoref{sec:choosing-beta}).
This procedure is called \textbf{Boosted} in the results.
\item The AdaGAN algorithm (Algorithm \ref{alg:AdaGAN}), for a constant $\beta$, exploring several values.
This procedure is called for example \textbf{Beta0.3} for $\beta=0.3$ in the results.
\item Reweighting similar to ``Cascade GAN'' from \cite{wang2016ensembles}, i.e.
keeping the top $r$ fraction of examples, based on the discriminator corresponding to the \emph{previous} generator.
This procedure is called for example \textbf{TopKLast0.3} for $r=0.3$.
\item Keep the top $r$ fraction of examples, based on the discriminator corresponding to
\emph{the mixture of all previous} generators. 
This procedure is called for example \textbf{TopK0.3} for $r=0.3$.
\end{itemize}
\end{itemize}

\subsubsection{Metrics}
\label{sec:metrics}
To evaluate how well the generated distribution matches the target distribution,
we use a \emph{coverage} metric~$C$. We compute the probability mass of the true data ``covered''
by the model distribution $P_{model}$. More precisely, we compute ${C:=P_d( dP_{model} > t)}$ with $t$ such that
$P_{model}(dP_{model} > t) = 0.95$. This metric is more interpretable than the likelihood,
making it easier to assess the difference in performance of the algorithms.
To approximate the density of $P_{model}$ we use a kernel density estimation
method, where the bandwidth is chosen by cross validation. Note that we could also use the discriminator $D$
to approximate the coverage as well, using the relation from \eqref{eq:prob-ratio-and-d}.

Another metric is the likelihood of the true data under the generated distribution.
More precisely, we compute $L:=\frac{1}{N} \sum_i \log P_{model}(x_i)$, on a sample of $N$
examples from the data.
Note that \cite{1611.04273} proposes a more general and elegant approach (but
less straightforward to implement) to have an objective measure of GAN. On the simple
problems we tackle here, we can precisely estimate the likelihood.

In the main results we report the metric $C$ and in Appendix \ref{sec:additional_experiments} we report
both $L$ and $C$.
For a given metric, we repeat the run 35 times with the same parameters (but different random seeds).
For each run, the learning rate is optimized using a grid search on a validation set.
We report the median over those multiple runs, and the interval corresponding to the 5\% and 95\% percentiles. Note
this is not a \textit{confidence interval} of the median, which would shrink to a singleton with an infinite number of runs.
Instead, this gives a measure of the stability of each algorithm.
The optimizer is a simple SGD: Adam was also tried but gave slightly less stable results.

\label{experiments}

\subsubsection{Results}
With the vanilla GAN algorithm, we observe that not all the modes are covered
(see Figure \ref{fig:missing_modes} for an illustration).
Different modes (and even different number of modes) are possibly covered at each restart of the algorithm, so restarting the
algorithm with different random seeds and taking the best (``best of $T$'') can improve the results.

Figure \ref{fig:coverage_per_iteration} summarizes the performance of the main algorithms on the $C$ metric, as a function of the
number of iterations $T$. Table \ref{table:simple_comparison} gives more detailed results, varying the number of modes for the target distribution.
Appendix \ref{sec:additional_experiments} contains details on variants for the reweighting heuristics as well as results for the $L$ metric.

As expected, both the ensemble and the boosting approaches significantly outperform the vanilla GAN and the ``best of $T$'' algorithm.
Interestingly, the improvements are significant even after just one or two additional iterations ($T=2$ or $T=3$).
The boosted approach converges much faster. In addition, the variance is much lower, improving the likelihood
that a given run gives good results. On this setup, the vanilla GAN approach has a significant
number of catastrophic failures (visible in the lower bound of the interval).

Empirical results on combining AdaGAN meta-algorithm with the unrolled GANs \cite{MPPS2017} are available in Appendix~\ref{app:experiments}.

\begin{figure}[h!]
    \centering
    \includegraphics[width=0.3\linewidth]{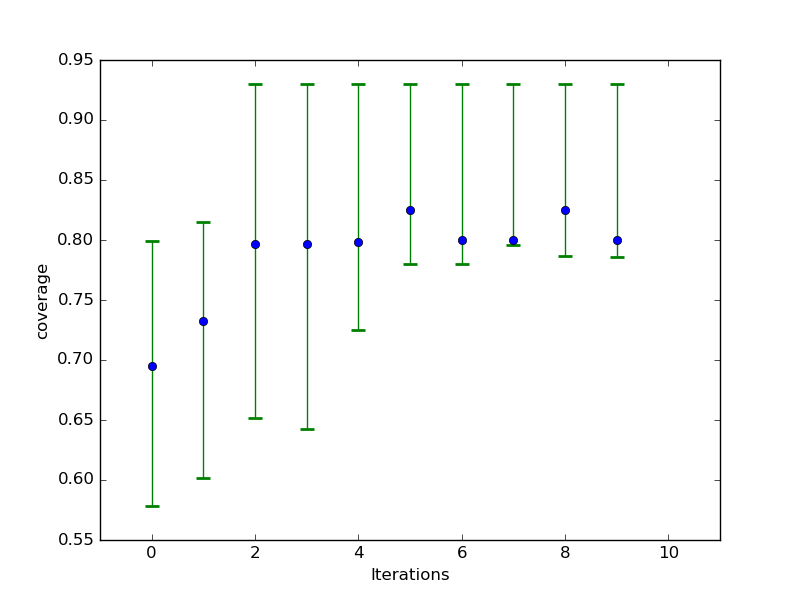}
    \includegraphics[width=0.3\linewidth]{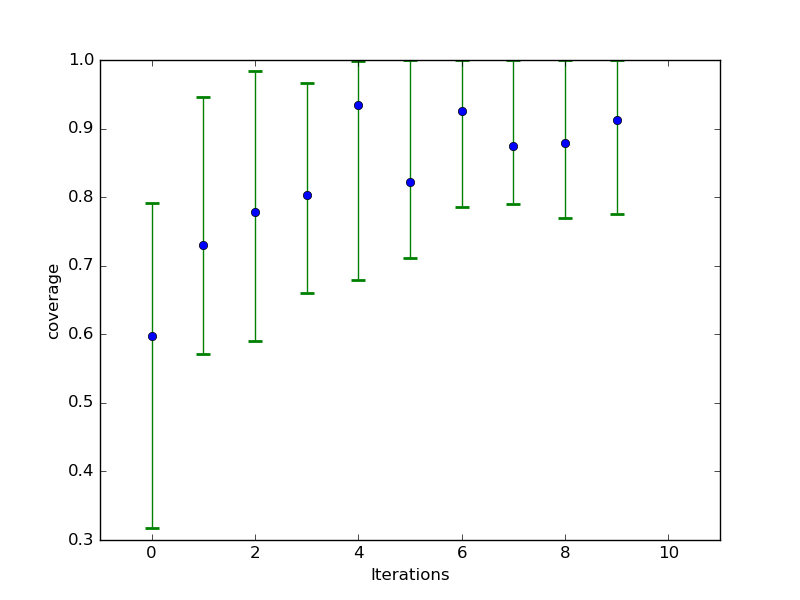}
    \includegraphics[width=0.3\linewidth]{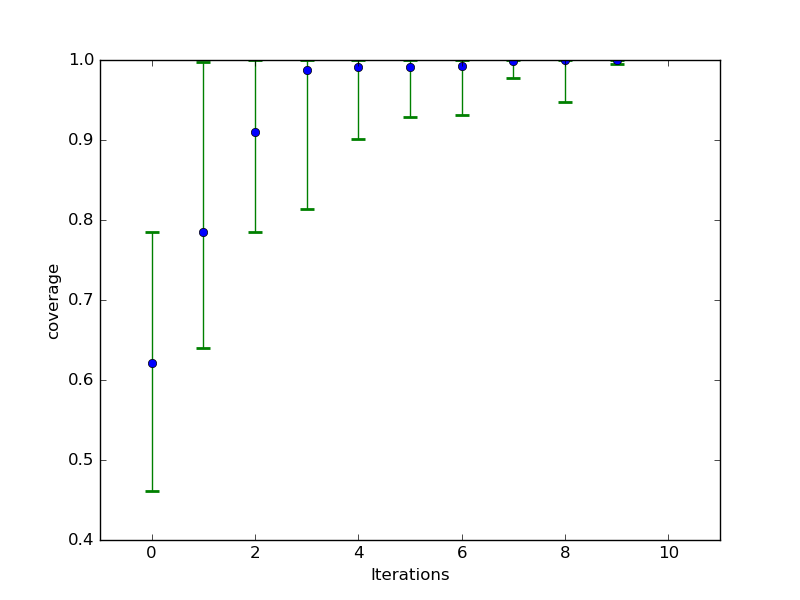}
    \caption{Coverage $C$ of the true data by the model distribution $P_{model}^T$, as a function of  iterations $T$. Experiments correspond to the data distribution with 5 modes.
	     Each blue point is the median over 35 runs.
	     Green intervals are defined by the 5\% and 95\% percentiles (see Section \ref{sec:metrics}).
             Iteration 0 is equivalent to one vanilla GAN.
             The left plot corresponds to taking the best generator out of $T$ runs.
             The middle plot corresponds to the ``ensemble GAN'', simply taking a uniform mixture
             of $T$ independently trained GAN generators. 
             The right plot corresponds to our boosting approach (AdaGAN),
             carefully reweighting the examples based on the previous generators, with $\beta_t=1/t$.
             Both the ensemble and boosting approaches significantly outperform the vanilla approach with few
             additional iterations. They also outperform taking the best out of $T$ runs.
             The boosting outperforms all other approaches. 
             For AdaGAN the variance of the performance is 
             also significantly decreased.
    }
    \label{fig:coverage_per_iteration}
\end{figure}

\begin{table}[h!]
\begin{center}
\ \hspace*{-1.5cm}
{\renewcommand{\arraystretch}{1.7}
\begin{tabular}{ | l | c|c|c|c|c|c | }
  \hline
     &   $Modes: 1$  & $Modes: 2$  & $Modes: 3$  & $Modes: 5$  & $Modes: 7$  & $Modes: 10$  \\ \hline
 Vanilla  &  \begin{minipage}{2cm}$0.97$ \footnotesize{$(0.9;1.0)$}\end{minipage}  &  \begin{minipage}{2cm}$0.88$ \footnotesize{$(0.4;1.0)$}\end{minipage}  &  \begin{minipage}{2cm}$0.63$ \footnotesize{$(0.5;1.0)$}\end{minipage}  &  \begin{minipage}{2cm}$0.72$ \footnotesize{$(0.5;0.8)$}\end{minipage}  &  \begin{minipage}{2cm}$0.58$ \footnotesize{$(0.4;0.8)$}\end{minipage}  &  \begin{minipage}{2cm}$0.59$ \footnotesize{$(0.2;0.7)$}\end{minipage}  \\ \hline
 Best of T (T=3)  &  \begin{minipage}{2cm}$0.99$ \footnotesize{$(1.0;1.0)$}\end{minipage}  &  \begin{minipage}{2cm}$0.96$ \footnotesize{$(0.9;1.0)$}\end{minipage}  &  \begin{minipage}{2cm}$0.91$ \footnotesize{$(0.7;1.0)$}\end{minipage}  &  \begin{minipage}{2cm}$0.80$ \footnotesize{$(0.7;0.9)$}\end{minipage}  &  \begin{minipage}{2cm}$0.84$ \footnotesize{$(0.7;0.9)$}\end{minipage}  &  \begin{minipage}{2cm}$0.70$ \footnotesize{$(0.6;0.8)$}\end{minipage}  \\ \hline
 Best of T (T=10)  &  \begin{minipage}{2cm}$0.99$ \footnotesize{$(1.0;1.0)$}\end{minipage}  &  \begin{minipage}{2cm}$0.99$ \footnotesize{$(1.0;1.0)$}\end{minipage}  &  \begin{minipage}{2cm}$0.98$ \footnotesize{$(0.8;1.0)$}\end{minipage}  &  \begin{minipage}{2cm}$0.80$ \footnotesize{$(0.8;0.9)$}\end{minipage}  &  \begin{minipage}{2cm}$0.87$ \footnotesize{$(0.8;0.9)$}\end{minipage}  &  \begin{minipage}{2cm}$0.71$ \footnotesize{$(0.7;0.8)$}\end{minipage}  \\ \hline
 Ensemble (T=3)  &  \begin{minipage}{2cm}$0.99$ \footnotesize{$(1.0;1.0)$}\end{minipage}  &  \begin{minipage}{2cm}$0.98$ \footnotesize{$(0.9;1.0)$}\end{minipage}  &  \begin{minipage}{2cm}$0.93$ \footnotesize{$(0.8;1.0)$}\end{minipage}  &  \begin{minipage}{2cm}$0.78$ \footnotesize{$(0.6;1.0)$}\end{minipage}  &  \begin{minipage}{2cm}$0.85$ \footnotesize{$(0.6;1.0)$}\end{minipage}  &  \begin{minipage}{2cm}$0.80$ \footnotesize{$(0.6;1.0)$}\end{minipage}  \\ \hline
 Ensemble (T=10)  &  \begin{minipage}{2cm}$1.00$ \footnotesize{$(1.0;1.0)$}\end{minipage}  &  \begin{minipage}{2cm}$0.99$ \footnotesize{$(1.0;1.0)$}\end{minipage}  &  \begin{minipage}{2cm}$1.00$ \footnotesize{$(1.0;1.0)$}\end{minipage}  &  \begin{minipage}{2cm}$0.91$ \footnotesize{$(0.8;1.0)$}\end{minipage}  &  \begin{minipage}{2cm}$0.88$ \footnotesize{$(0.8;1.0)$}\end{minipage}  &  \begin{minipage}{2cm}$0.89$ \footnotesize{$(0.7;1.0)$}\end{minipage}  \\ \hline
 TopKLast0.5 (T=3)  &  \begin{minipage}{2cm}$0.98$ \footnotesize{$(0.9;1.0)$}\end{minipage}  &  \begin{minipage}{2cm}$0.98$ \footnotesize{$(0.9;1.0)$}\end{minipage}  &  \begin{minipage}{2cm}$0.95$ \footnotesize{$(0.9;1.0)$}\end{minipage}  &  \begin{minipage}{2cm}$0.95$ \footnotesize{$(0.8;1.0)$}\end{minipage}  &  \begin{minipage}{2cm}$0.86$ \footnotesize{$(0.7;1.0)$}\end{minipage}  &  \begin{minipage}{2cm}$0.86$ \footnotesize{$(0.6;0.9)$}\end{minipage}  \\ \hline
 TopKLast0.5 (T=10)  &  \begin{minipage}{2cm}$0.99$ \footnotesize{$(1.0;1.0)$}\end{minipage}  &  \begin{minipage}{2cm}$0.98$ \footnotesize{$(0.9;1.0)$}\end{minipage}  &  \begin{minipage}{2cm}$0.98$ \footnotesize{$(1.0;1.0)$}\end{minipage}  &  \begin{minipage}{2cm}$0.99$ \footnotesize{$(0.8;1.0)$}\end{minipage}  &  \begin{minipage}{2cm}$0.99$ \footnotesize{$(0.8;1.0)$}\end{minipage}  &  \begin{minipage}{2cm}$1.00$ \footnotesize{$(0.8;1.0)$}\end{minipage}  \\ \hline
 Boosted (T=3)  &  \begin{minipage}{2cm}$0.99$ \footnotesize{$(1.0;1.0)$}\end{minipage}  &  \begin{minipage}{2cm}$0.99$ \footnotesize{$(0.9;1.0)$}\end{minipage}  &  \begin{minipage}{2cm}$0.98$ \footnotesize{$(0.9;1.0)$}\end{minipage}  &  \begin{minipage}{2cm}$0.91$ \footnotesize{$(0.8;1.0)$}\end{minipage}  &  \begin{minipage}{2cm}$0.91$ \footnotesize{$(0.8;1.0)$}\end{minipage}  &  \begin{minipage}{2cm}$0.86$ \footnotesize{$(0.7;1.0)$}\end{minipage}  \\ \hline
 Boosted (T=10)  &  \begin{minipage}{2cm}$1.00$ \footnotesize{$(1.0;1.0)$}\end{minipage}  &  \begin{minipage}{2cm}$1.00$ \footnotesize{$(1.0;1.0)$}\end{minipage}  &  \begin{minipage}{2cm}$1.00$ \footnotesize{$(1.0;1.0)$}\end{minipage}  &  \begin{minipage}{2cm}$1.00$ \footnotesize{$(1.0;1.0)$}\end{minipage}  &  \begin{minipage}{2cm}$1.00$ \footnotesize{$(1.0;1.0)$}\end{minipage}  &  \begin{minipage}{2cm}$1.00$ \footnotesize{$(1.0;1.0)$}\end{minipage}  \\ \hline
  \end{tabular}}
\end{center}

 \caption{Performance of the different algorithms on varying number of mixtures of Gaussians.
   The reported score is the coverage $C$, probability mass of $P_d$ covered   
   by the $5th$ percentile of $P_g$ defined in Section \ref{sec:metrics}. See Table \ref{table:simple_comparison_complete} for more metrics.
   The reported scores are the median and interval defined by the 5\% and 95\% percentile (in parenthesis) (see Section \ref{sec:metrics}), over 35 runs for each setting.
   Note that the $95\%$ interval is not the usual confidence interval measuring the
   variance of the experiment itself, but rather measures the stability of the
   different algorithms (would remain even if each experiment was run an infinite
   number of times).
   Both the ensemble and the boosting approaches significantly outperform the vanilla GAN even with just three iterations (i.e. just
   two additional components). The boosting approach converges faster to the optimal coverage and with smaller variance.
 }

\label{table:simple_comparison}

\end{table}


\subsection{MNIST and MNIST3}
We ran experiments both on the original MNIST and on the 3-digit MNIST (MNIST3) \cite{che2016mode, MPPS2017} dataset, obtained by concatenating 3 randomly chosen MNIST images to form a 3-digit number between 0 and 999. 
According to \cite{che2016mode, MPPS2017}, MNIST contains 10 modes, while MNIST3 contains 1000 modes, and these modes can be detected using the pre-trained MNIST classifier.
We combined AdaGAN both with simple MLP GANs and DCGANs \cite{RMC16}.
We used $T\in\{5,10\}$, tried models of various sizes and performed a reasonable amount of hyperparameter search. 
For the details we refer to Appendix \ref{app:experiments-mnist}.

Similarly to \cite[Sec 3.3.1]{MPPS2017} we failed to reproduce the missing modes problem for MNIST3 reported in \cite{che2016mode} and found that simple GAN architectures are capable of generating all 1000 numbers.
The authors of~\cite{MPPS2017} proposed to artificially introduce the missing modes again by limiting the generators' flexibility. 
In our experiments, GANs trained with the architectures reported in \cite{MPPS2017} were often generating poorly looking digits.
As a result, the pre-trained MNIST classifier was outputting random labels, which again led to full coverage of the 1000 numbers. We tried to threshold the confidence of the pre-trained classifier, but decided that this metric was too ad-hoc.

\begin{wrapfigure}[19]{r}{0.45\textwidth}
	\vspace{-0pt}
	\includegraphics[width=0.45\textwidth]{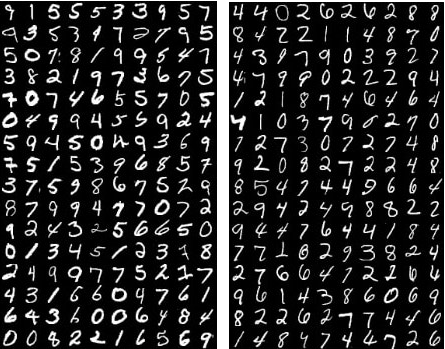}
	\caption{Digits from the MNIST dataset corresponding to the smallest ({\bf left}) and largest ({\bf right}) weights, obtained by the AdaGAN procedure (see Section \ref{sec:adaGAN}) in one of the runs. Bold digits (left) are already covered and next GAN will concentrate on thin (right) digits.}
	\label{fig:coverage_per_iteration}
\end{wrapfigure}


For MNIST we noticed that the re-weighted distribution was often concentrating its mass on digits having very specific strokes: on different rounds it could highlight thick, thin, vertical, or diagonal digits, indicating that these traits were underrepresented in the generated samples (see Figure~\ref{fig:coverage_per_iteration}). This suggests that AdaGAN does a reasonable job at picking up different modes of the dataset, but also that there are more than 10 modes in MNIST (and more than 1000 in MNIST3). 
It is not clear how to evaluate the quality of generative models in this context.

We also tried to use the ``inversion'' metric discussed in Section 3.4.1 of \cite{MPPS2017}. For MNIST3 we noticed that a single GAN was 
capable of reconstructing most of the training points \emph{very} accurately both visually and in the $\ell_2$-reconstruction sense.

\section{Conclusion}

We presented an incremental procedure for constructing an additive
mixture of generative models by minimizing an $f$-divergence criterion.
Based on this, we derived a boosting-style algorithm for GANs, which we call \emph{AdaGAN}.
By incrementally adding new generators into a mixture through the optimization of a GAN criterion on a reweighted data, this algorithm is able to progressively cover all the modes
of the true data distribution. This addresses one of the main practical issues
of training GANs.

We also presented a theoretical analysis of the convergence of this incremental
procedure and showed conditions under which the mixture converges to the true
distribution either exponentially or in a finite number of steps.

Our preliminary experiments (on toy data) show that this algorithm is effectively
addressing the missing modes problem and allows to robustly produce a mixture which 
covers all modes of the data.

However, since the generative model that we obtain is not a single 
neural network but a mixture of such networks, the corresponding latent
representation no longer has a smooth structure. This can be seen
as a disadvantage compared to standard GAN where one can perform smooth
interpolation in latent space. On the other hand it also allows to
have a partitioned latent representation where one component is discrete.
Future work will explore the possibility of leveraging this structure
to model discrete aspects of the dataset, such as the class in object
recognition datasets in a similar spirit to \cite{chen2016infogan}.

\bibliographystyle{unsrt}
\bibliography{adagan}

\newpage

\appendix

\section{Further details on toy experiments}
\label{app:experiments}

To illustrate the 'meta-algorithm aspect' of AdaGAN, we also performed experiments with
 an unrolled GAN \cite{MPPS2017} instead of a GAN as the base generator. We trained the GANs both with the Jensen-Shannon objective \eqref{eq:gan}, and with its modified version proposed in \cite{goodfellow2014generative}
(and often considered as the baseline GAN), where $\log \paren*{1-D(G(Z))}$ is replaced by $-\log \paren*{D(G(Z))}$. We use the same network architecture as in the other toy experiments. Figure~\ref{fig:unrolled} illustrates our results.
We find that AdaGAN works with all underlying GAN algorithms. Note that, where the usual GAN updates the generator and the discriminator once, an unrolled GAN with 5 unrolling steps updates the generator once and the discriminator 1 + 5, i.e. 6 times (and then rolls back 5 steps). Thus, in terms of computation time, training 1 single unrolled GAN roughly corresponds to doing 3 steps of AdaGAN with a usual GAN. In that sense, Figure~\ref{fig:unrolled} shows that AdaGAN (with a usual GAN) significantly outperforms a single unrolled GAN. Additionally, we note that using the Jensen-Shannon objective (rather than the modified version) seems to have some mode-regularizing effect. Surprisingly, using unrolling steps makes no significant difference.
\begin{figure}[hb]
    \centering
    \includegraphics[width=\linewidth]{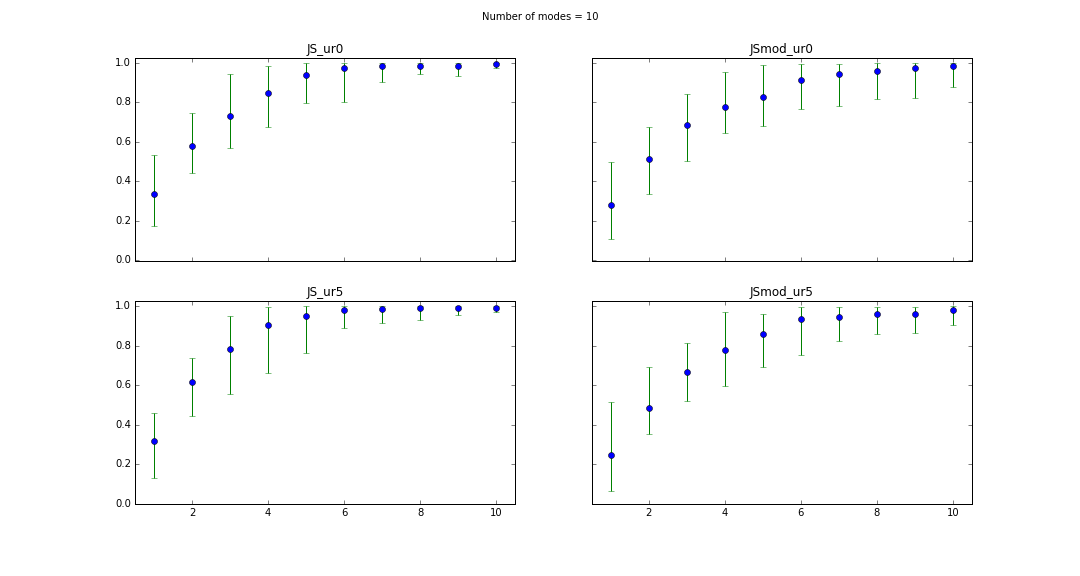}
    \caption{Comparison of AdaGAN ran with a GAN (top row) and with an unrolled GAN \cite{MPPS2017} (bottom).
    	     Coverage $C$ of the true data by the model distribution $P_{model}^T$, as a function of  iterations $T$.
    	     Experiments are similar to those of Figure~\ref{fig:coverage_per_iteration}, but with 10 modes. Top and bottom
	     rows correspond to the usual and the unrolled GAN (with 5 unrolling steps) respectively, trained with the
	     Jensen-Shannon objective~\eqref{eq:gan} on the left, and with the modified objective originally proposed
	     by \cite{goodfellow2014generative} on the right. In terms of computation time, one step of AdaGAN with unrolled GAN
	     corresponds to roughly 3 steps of AdaGAN with a usual GAN.
	     On all images $T=1$ corresponds to vanilla unrolled GAN.
	     \label{fig:unrolled}
    }
\end{figure}

\section{Further details on MNIST/MNIST3 experiments}
\label{app:experiments-mnist}
\paragraph{GAN Architecture}
We ran AdaGAN on MNIST (28x28 pixel images) using (de)convolutional networks with batch normalizations and leaky ReLu.
The latent space has dimension 100.
We used the following architectures: 
\begin{align*}
	\text{Generator:} \quad \text{100 x 1 x 1 } &\rightarrow \text{ fully connected $\rightarrow$ 7 x 7 x 16 $\rightarrow$ deconv $\rightarrow$ 14 x 14 x 8 }\rightarrow \\
		 &\rightarrow \text{ deconv $\rightarrow$ 28 x 28 x 4 $\rightarrow$ deconv $\rightarrow$ 28 x 28 x 1} \\
	\text{Discriminator:} \quad \text{28 x 28 x 1 } & \rightarrow\text{ conv $\rightarrow$ 14 x 14 x 16 $\rightarrow$ conv $\rightarrow$ 7 x 7 x 32 }\rightarrow \\
		 &\rightarrow\text{ fully connected $\rightarrow$ 1} \,
\end{align*}
where each arrow consists of a leaky ReLu (with 0.3 leak) followed by a batch normalization, conv and deconv are convolutions and
transposed convolutions with 5x5 filters, and fully connected are linear layers with bias. The distribution over $\Z$ is uniform over the unit box.
We use the Adam optimizer with $\beta_1 = 0.5$, with 2 G steps for 1 D step and learning rates 0.005 for G, 0.001 for D, and 0.0001 for the 
classifier C that does the reweighting of digits. We optimized D and G over 200 epochs and C over 5 epochs, using the original Jensen-Shannon
objective \eqref{eq:gan}, without the log trick, with no unrolling and with minibatches of size 128.

\paragraph{Empirical observations}
Although we could not find any appropriate metric to measure the increase of diversity promoted by AdaGAN, we observed that the 
re-weighting scheme indeed focuses on digits with very specific strokes. In Figure~\ref{fig:mnist} for example, we see that after one AdaGAN step,
the generator produces overly thick digits (top left image). Thus AdaGAN puts small weights on the thick digits of the dataset (bottom left)
and high weights on the thin ones (bottom right). After the next step, the new GAN produces both thick and thin digits.

    \begin{figure}[ht]
        \centering
        \begin{minipage}{0.475\linewidth}
            \includegraphics[width=\linewidth]{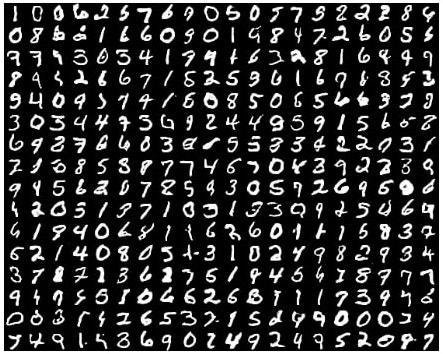}%
        \end{minipage}
        \begin{minipage}{0.475\linewidth}
            \includegraphics[width=\linewidth]{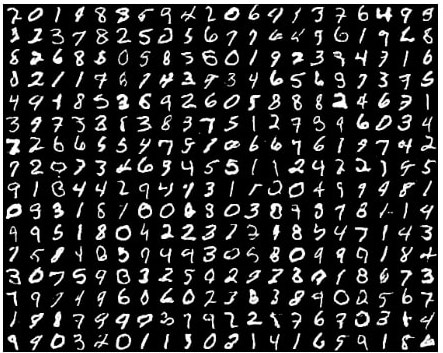}%
        \end{minipage}
        
        \begin{minipage}{0.475\linewidth}
            \includegraphics[width=\linewidth]{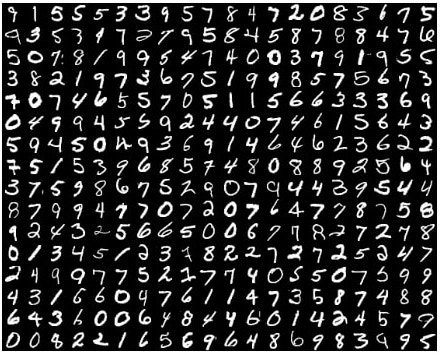}%
        \end{minipage}
        \begin{minipage}{0.475\linewidth}
            \includegraphics[width=\linewidth]{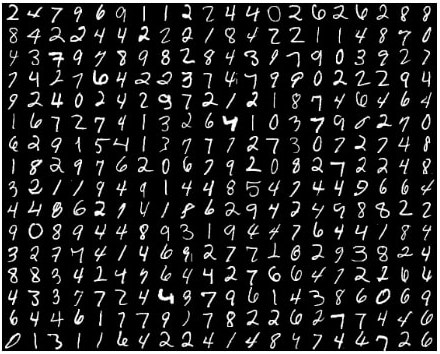}%
        \end{minipage}
        \caption{AdaGAN on MNIST. 
        		Bottom row are true MNIST digits with smallest (left) and highest (right) weights after re-weighting at the
		end of the first AdaGAN step. Those with small weight are thick and resemble those generated by the GAN 
		after the first AdaGAN step (top left). After training with the re-weighted dataset during the second iteration
		of AdaGAN, the new mixture produces more thin digits (top right).
		} 
        \label{fig:mnist}
    \end{figure}

\section{Proofs}
\subsection{Proof of Theorem \ref{main}}
\label{sec:proof-main}
Before proving Theorem \ref{main}, we introduce two lemmas. The first one
is about the determination of the constant $\lambda$, the second one is
about comparing the divergences of mixtures.

\begin{lemma}\label{le:lambda}
Let $P$ and $Q$ be two distributions, $\gamma\in [0,1]$ and $\lambda \in \R$.
The function
\[
g(\lambda)\bydef \int \paren*{\lambda - \gamma \frac{dQ}{dP}}_+dP
\]
is nonnegative, convex, nondecreasing, satisfies $g(\lambda) \leq \lambda$, and its right derivative is given by
\[
{g'_+(\lambda) = P\left(\lambda \cdot dP \ge \gamma \cdot dQ\right)}.
\] 
The equation
\[
g(\lambda) = 1-\gamma
\]
has a solution $\lambda^*$ (unique when $\gamma<1$)
with $\lambda^*\in [1-\gamma,1]$.
Finally, if $P(dQ = 0) \geq \delta$ for a strictly positive constant $\delta$ then $\lambda^*\leq (1-\gamma)\delta^{-1}$.
\end{lemma}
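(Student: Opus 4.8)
The plan is to first read off the elementary analytic properties of $g$ from the pointwise behaviour of the map $\lambda\mapsto(\lambda-t)_+$, then compute the right derivative by dominated convergence, and finally combine monotonicity, continuity and this derivative formula to pin down $\lambda^*$.

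For the elementary part, I would fix $x$ and set $t(x):=\gamma\,\frac{dQ}{dP}(x)\ge 0$. The function $\lambda\mapsto(\lambda-t(x))_+=\max(\lambda-t(x),0)$ is the maximum of an affine map and $0$, hence nonnegative, nondecreasing and convex, and since $t(x)\ge0$ it is bounded above by $\max(\lambda,0)$. Integrating against $P$ preserves all of these, so $g$ is nonnegative, nondecreasing, convex, and $g(\lambda)\le\max(\lambda,0)$, which gives $g(\lambda)\le\lambda$ on the range $\lambda\ge0$ that is the relevant one here; finiteness of $g$ on all of $\R$ also makes it continuous. For the right derivative, I would write the difference quotient
\[
\frac{(\lambda+h-t(x))_+-(\lambda-t(x))_+}{h},
\]
observe that it is bounded by $1$ and converges pointwise as $h\downarrow0$ to $\oo{\lambda\ge t(x)}$ (it equals $1$ once $\lambda\ge t(x)$ and equals $0$ as soon as $h<t(x)-\lambda$ when $\lambda<t(x)$), and conclude by dominated convergence that $g'_+(\lambda)=\int\oo{\lambda\ge t(x)}\,dP=P(\lambda\,dP\ge\gamma\,dQ)$.

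Next I would establish existence and location of $\lambda^*$. Since $\gamma\le1$, $g(0)=0\le1-\gamma$; and dropping the positive part and using $\int\frac{dQ}{dP}\,dP=Q(dP>0)\le1$ gives $g(1)\ge\int(1-\gamma\,\frac{dQ}{dP})\,dP=1-\gamma\,Q(dP>0)\ge1-\gamma$. Continuity and the intermediate value theorem then yield a solution $\lambda^*\in[0,1]$ of $g(\lambda)=1-\gamma$, and the bound $g(\lambda^*)\le\lambda^*$ forces $\lambda^*\ge1-\gamma$. For uniqueness when $\gamma<1$: if $\lambda_1<\lambda_2$ were two solutions, monotonicity would make $g$ constant on $[\lambda_1,\lambda_2]$, so $g'_+\equiv0$ on $[\lambda_1,\lambda_2)$; by the derivative formula this means $P(\lambda\,dP\ge\gamma\,dQ)=0$, i.e.\ $(\lambda-\gamma\,\frac{dQ}{dP})_+=0$ $P$-almost surely, whence $g(\lambda)=0$, contradicting $g(\lambda)=1-\gamma>0$. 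Finally, for the quantitative bound, on $\{dQ=0\}$ (which lies in $\{dP>0\}$ up to a $P$-null set) we have $\frac{dQ}{dP}=0$, so $(\lambda^*-\gamma\,\frac{dQ}{dP})_+=\lambda^*$ there, and restricting the integral gives $1-\gamma=g(\lambda^*)\ge\lambda^*\,P(dQ=0)\ge\lambda^*\delta$, i.e.\ $\lambda^*\le(1-\gamma)\delta^{-1}$.

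The only genuinely delicate points are measure-theoretic bookkeeping: handling $dQ/dP$ correctly on $\{dP=0\}$ and $\{dQ=0\}$ (using $\int\frac{dQ}{dP}\,dP=Q(dP>0)$ and that $P$ charges only $\{dP>0\}$), and justifying the interchange of limit and integral in the difference quotient. These are routine, and the rest is just the convexity and monotonicity algebra of $t\mapsto(\lambda-t)_+$; I do not anticipate a real obstacle.
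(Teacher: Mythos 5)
Your proof is correct. It establishes the same facts but by somewhat different routes, and the differences are worth recording. For the right derivative, you pass to the pointwise limit of the difference quotient and invoke dominated convergence (with the dominating bound $1$, using $1$-Lipschitzness of $(\cdot)_+$), whereas the paper works directly with the integral, splitting the domain into the level sets $\I(\lambda)$ and $\I(\lambda+\epsilon)\setminus\I(\lambda)$ and squeezing $g(\lambda+\epsilon)-g(\lambda)$ between $\epsilon P(\I(\lambda))$ and $\epsilon P(\I(\lambda+\epsilon))$. Both are standard; yours is shorter once DCT is granted, the paper's is self-contained. For the lower bound $\lambda^*\ge 1-\gamma$ you use the pointwise bound $g(\lambda)\le\lambda$ directly, which is cleaner than the paper's convexity-subgradient inequality $g(0)-g(\lambda^*)\ge -\lambda^* g'_+(\lambda^*)$ combined with $g'_+\le 1$. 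For the upper bound $\lambda^*\le(1-\gamma)\delta^{-1}$ you simply restrict the defining integral to $\{dQ=0\}$, obtaining $1-\gamma=g(\lambda^*)\ge\lambda^*P(dQ=0)\ge\lambda^*\delta$ directly, whereas the paper derives it from $g'_+(0)\ge\delta$ and convexity ($g(\lambda^*)\ge g(0)+\lambda^* g'_+(0)$); your version avoids referring to the derivative formula at all. Finally, for uniqueness when $\gamma<1$, you argue that if $g$ were constant on an interval then $g'_+\equiv 0$ there, which via the derivative formula forces $g\equiv 0$ on that interval, contradicting $g=1-\gamma>0$; the paper instead uses the qualitative observation that a convex, nondecreasing function vanishing on $\R^-$ cannot be constant on any interval away from $0$. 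Both are sound; yours leans a little harder on the derivative computation, the paper's on the convex shape. The measure-theoretic caveats you flag at the end (behavior of $dQ/dP$ on $\{dP=0\}$ and $\{dQ=0\}$, and the interchange of limit and integral) are indeed the only places requiring care, and you handle them correctly.
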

\begin{proof}
The convexity of $g$ follows immediately from the convexity of $x\mapsto (x)_+$
and the linearity of the integral. Similarly, since $x\mapsto (x)_+$ is
non-decreasing, $g$ is non-decreasing.

We define the set $\I(\lambda)$ as follows:
\[
\I(\lambda)\bydef \set*{x\in \X: \lambda \cdot dP(x) \ge \gamma \cdot dQ(x)}.
\]
Now let us consider $g(\lambda+\epsilon)-g(\lambda)$ for some small $\epsilon>0$.
This can also be written:
\begin{align*}
g(\lambda+\epsilon)-g(\lambda)
&= \int_{\I(\lambda)} \epsilon dP +
\int_{\I(\lambda+\epsilon) \backslash \I(\lambda)} (\lambda+\epsilon) dP - \int_{\I(\lambda+\epsilon) \backslash \I(\lambda)}\gamma dQ\\
&= \epsilon P(\I(\lambda)) +
\int_{\I(\lambda+\epsilon) \backslash \I(\lambda)} (\lambda+\epsilon) dP -\int_{\I(\lambda+\epsilon) \backslash \I(\lambda)} \gamma dQ.
\end{align*}
On the set $\I(\lambda+\epsilon) \backslash \I(\lambda)$, we have
\[
(\lambda+\epsilon)dP - \gamma dQ \in [0, \epsilon].
\]
So that
\[
\epsilon P(\I(\gamma)) \le g(\lambda+\epsilon)-g(\lambda) \le \epsilon
P(\I(\gamma)) + \epsilon P\bigl(\I(\lambda+\epsilon) \backslash \I(\lambda)\bigr) =
\epsilon P(\I(\lambda+\epsilon))
\]
and thus
\[
\lim_{\epsilon\to 0^+} \frac{g(\lambda+\epsilon)-g(\lambda)}{\epsilon} =
\lim_{\epsilon\to 0^+} P(\I(\lambda+\epsilon)) = P(\I(\lambda)).
\]
This gives the expression of the right derivative of $g$.
Moreover, notice that for $\lambda, \gamma > 0$
\[
g'_+(\lambda)
=
P\left(\lambda \cdot dP \ge \gamma \cdot dQ\right)
=
P\left(\frac{dQ}{dP} \leq \frac{\lambda}{\gamma}\right)
=
1 - P\left(\frac{dQ}{dP} > \frac{\lambda}{\gamma}\right)
\geq
1 - \gamma/\lambda
\]
by Markov's inequality.

It is obvious that $g(0)=0$.
By Jensen's inequality applied to the convex function $x\mapsto (x)_+$,
we have $g(\lambda)\ge \paren*{\lambda - \gamma}_+$. So $g(1)\ge 1-\gamma$.
Also, $g=0$ on $\R^-$ and $g\le \lambda$. This means $g$ is continuous on $\R$ and thus reaches
the value $1-\gamma$ on the interval $(0,1]$ which shows the existence
of $\lambda^*\in (0,1]$. To show that $\lambda^*$ is unique we notice that since $g(x)=0$
on $\R^-$, $g$ is convex and non-decreasing, $g$ cannot be constant on
an interval not containing $0$, and thus $g(x)=1-\gamma$ has a unique solution
for $\gamma<1$.

Also by convexity of $g$,
\[
g(0)-g(\lambda^*)\ge -\lambda^* g'_+(\lambda^*),
\]
which gives $\lambda^*\ge (1-\gamma)/g'_+(\lambda^*) \ge 1-\gamma$
since $g'_+\le 1$.
If $P(dQ = 0) \geq \delta > 0$ then also $g'_+(0) \geq \delta > 0$.
Using the fact that $g'_+$ is increasing we conclude that $\lambda^* \leq (1-\gamma)\delta^{-1}$.
\end{proof}

Next we introduce some simple convenience lemma for comparing convex functions
of random variables.
\begin{lemma}\label{le:imp}
Let $f$ be a convex function, $X,Y$ be real-valued random variables and
$c\in \R$ be a constant such that
\[
\e{\max(c,Y)}=\e{X+Y}.
\]
Then we have the following bound:
\begin{equation}\label{eq:max}
\e{f(\max(c,Y))} \le \e{f(X+Y)} - \e{X(f'(Y)-f'(c))_+} \le \e{f(X+Y)}.
\end{equation}
If in addition, $Y\le M$ a.s.\:for $M\ge c$, then
\begin{equation}\label{eq:max2}
\e{f(\max(c,Y))} \le f(c) + \frac{f(M)-f(c)}{M-c}\paren*{\e{X+Y}-c}.
\end{equation}
\end{lemma}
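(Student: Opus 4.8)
The plan is to obtain each of the two displays from a single pointwise convexity inequality, integrate it, and then use the hypothesis $\e{\max(c,Y)}=\e{X+Y}$ to cancel the resulting linear term. Throughout I fix a \emph{nondecreasing} selection $f'$ of the subderivative of $f$ (say, the right derivative), so that the subgradient inequality $f(b)\ge f(a)+f'(a)(b-a)$ holds for all $a,b$; I also assume, as is the case in every application in this paper, that the expectations below are finite and that $X\ge 0$ (the latter is needed only for the rightmost bound in \eqref{eq:max}).

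For \eqref{eq:max}, write $Z:=\max(c,Y)$ and $W:=X+Y$, so the hypothesis reads $\e{Z}=\e{W}$. The subgradient inequality at the point $Z$ gives $f(Z)\le f(W)+f'(Z)(Z-W)$ pointwise; taking expectations yields $\e{f(Z)}\le\e{f(W)}+\e{f'(Z)(Z-W)}$. Since $f'(c)$ is a constant and $\e{Z-W}=0$, I may subtract $f'(c)\,\e{Z-W}=0$ and rewrite the error term as $\e{(f'(Z)-f'(c))(Z-W)}$. Now split on the event $\{Y\ge c\}$: where $Y<c$ we have $Z=c$, hence $f'(Z)-f'(c)=0$ and this region contributes nothing; where $Y\ge c$ we have $Z=Y$ and $Z-W=-X$, so the term equals $-\e{X\,(f'(Y)-f'(c))\oo{Y\ge c}}$. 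Finally, monotonicity of $f'$ shows that $(f'(Y)-f'(c))\oo{Y\ge c}=(f'(Y)-f'(c))_+$ (the positive part is already supported on $\{Y\ge c\}$), which is exactly the left inequality of \eqref{eq:max}. The right inequality is then immediate, because $X\ge 0$ and $(f'(Y)-f'(c))_+\ge 0$ make the subtracted quantity nonnegative.

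For \eqref{eq:max2}, observe that $Y\le M$ together with $M\ge c$ forces $\max(c,Y)\in[c,M]$ almost surely. On the interval $[c,M]$ convexity gives the chord bound $f(t)\le f(c)+\frac{f(M)-f(c)}{M-c}(t-c)$; applying it with $t=\max(c,Y)$ and taking expectations gives $\e{f(\max(c,Y))}\le f(c)+\frac{f(M)-f(c)}{M-c}\bigl(\e{\max(c,Y)}-c\bigr)$, and replacing $\e{\max(c,Y)}$ by $\e{X+Y}$ via the hypothesis produces \eqref{eq:max2}. (The degenerate case $M=c$ is vacuous, since then $\max(c,Y)=c$ a.s.)

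The only step that needs care is the bookkeeping in the second paragraph: one must pin down a single nondecreasing version of $f'$ so that the event-splitting is legitimate even when $f$ is nondifferentiable, and the cancellation $\e{f'(Z)(Z-W)}=\e{(f'(Z)-f'(c))(Z-W)}$ must be traced back to $\e{Z-W}=0$, which is precisely the stated hypothesis. Everything else is a routine computation.
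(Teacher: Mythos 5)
Your proof is correct and takes essentially the same route as the paper's: both apply the subgradient inequality of $f$ at the point $\max(c,Y)$, use $\e{\max(c,Y)}=\e{X+Y}$ to annihilate the $f'(c)$ term, and invoke monotonicity of $f'$ to identify the residual as $X(f'(Y)-f'(c))_+$, with the identical chord bound for \eqref{eq:max2}. Your packaging via the single pointwise inequality $f(Z)\le f(W)+f'(Z)(Z-W)$ followed by subtracting $f'(c)\e{Z-W}=0$ is simply a cleaner presentation of the paper's indicator-by-indicator algebra, and you are right that the rightmost inequality in \eqref{eq:max} tacitly needs $X\ge 0$ (which holds in every application).
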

\begin{proof}
We decompose the expectation with respect to the value of the max, and use the
convexity of $f$:
\begin{eqnarray*}
f(X+Y)-f(\max(c,Y))
&=& \oo{Y\le c}\paren*{f(X+Y)-f(c)} + \oo{Y>c}\paren*{f(X+Y)-f(Y)}\\
&\ge& \oo{Y\le c}f'(c)\paren*{X+Y-c} + \oo{Y>c}Xf'(Y)\\
&=& (1-\oo{Y> c})Xf'(c) + f'(c)\paren*{Y-\max(c,Y)} + \oo{Y>c}Xf'(Y)\\
&=& f'(c)\paren*{X+Y-\max(c,Y)} + \oo{Y>c}X(f'(Y)-f'(c))\\
&=& f'(c)\paren*{X+Y-\max(c,Y)} + X(f'(Y)-f'(c))_+,
\end{eqnarray*}
where we used that $f'$ is non-decreasing in the last step.
Taking the expectation gives the first inequality.

For the second inequality, we use the convexity of $f$ on the interval $[c,M]$:
\[
f(\max(c,Y))
\le f(c) + \frac{f(M)-f(c)}{M-c}\paren*{\max(c,Y)-c}.
\]
Taking an expectation on both sides gives the second inequality.
\end{proof}

\begin{proof}[Theorem \ref{main}]
We first apply Lemma \ref{le:lambda} with $\gamma = 1-\beta$ and this proves the
existence of $\lambda^*$ in the interval $(\beta,1]$, which shows that $Q^*_{\beta}$
is indeed well-defined as a distribution.

Then we use Inequality \eqref{eq:max} of Lemma \ref{le:imp} with $X=\beta dQ/dP_d$,
$Y=(1-\beta)dP_g/dP_d$, and
$c=\lambda^*$.
We easily verify that $X+Y=((1-\beta)dP_g+\beta dQ)/dP_d$ and
$\max(c,Y)=((1-\beta)dP_g+\beta dQ_{\beta}^*)/dP_d$ and both have expectation $1$ with
respect to $P_d$.
We thus obtain for any distribution $Q$,
\[
D_f((1-\beta)P_g + \beta Q_{\beta}^* \, \| \, P_d) \le D_f((1-\beta)P_g + \beta Q \, \| \, P_d)\,.
\]
This proves the optimality of $Q^*_{\beta}$.
\end{proof}

\subsection{Proof of Theorem \ref{th:opt2}}
\label{sec:proof-opt2}

\begin{lemma}\label{le:lambda2}
Let $P$ and $Q$ be two distributions, $\gamma\in (0,1)$, and $\lambda \geq 0$.
The function
\[
h(\lambda)\bydef \int \paren*{\frac{1}{\gamma} -\lambda \frac{dQ}{dP}}_+dP
\]
is convex, non-increasing, and its right derivative is given by
$h'_+(\lambda) = -Q(1/\gamma \geq \lambda dQ(X)/dP(X))$. 
Denote $\Delta := P( dQ(X)/dP(X)= 0)$.
Then the equation
\[
h(\lambda) = \frac{1-\gamma}{\gamma}
\]
has no solutions if $\Delta > 1 - \gamma$,
has a single solution $\lambda^\dagger \geq 1$ if $\Delta < 1 - \gamma$,
and has infinitely many or no solutions when $\Delta = 1 - \gamma$.
\end{lemma}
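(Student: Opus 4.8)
The plan is to treat $h$ exactly as the one-dimensional convex function it is, mirroring the proof of Lemma~\ref{le:lambda}: its shape is completely pinned down by the value $h(0)$, the value $h(1)$, and the limit $\lim_{\lambda\to\infty}h(\lambda)$, and the whole statement then falls out of convexity together with the intermediate value theorem. Write $r:=dQ/dP$. For fixed $x$, $\lambda\mapsto\bigl(1/\gamma-\lambda r(x)\bigr)_+$ is the composition of an affine map with the convex nondecreasing map $t\mapsto(t)_+$, hence convex; since $r\ge0$ this affine map is nonincreasing, and $(\cdot)_+$ is nondecreasing, so the composition is nonincreasing in $\lambda$. Integrating against $P$ preserves both, so $h$ is convex and nonincreasing, and since $0\le h\le 1/\gamma$ it is finite, hence continuous on $[0,\infty)$.

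For the right derivative I would run the same $\epsilon$-set argument as in Lemma~\ref{le:lambda}. Set $J(\lambda):=\{x:\,dP(x)\ge\gamma\lambda\,dQ(x)\}$, so that on $J(\lambda)$ the integrand equals $1/\gamma-\lambda r$ and vanishes elsewhere, and note $J$ is nonincreasing in $\lambda$. Decomposing $h(\lambda+\epsilon)-h(\lambda)$ over $J(\lambda+\epsilon)$ and $J(\lambda)\setminus J(\lambda+\epsilon)$, and using that $0\le 1/\gamma-\lambda r\le\epsilon r$ on the latter set, gives the sandwich $-\epsilon\int_{J(\lambda)}r\,dP\le h(\lambda+\epsilon)-h(\lambda)\le-\epsilon\int_{J(\lambda+\epsilon)}r\,dP$; dividing by $\epsilon$ and letting $\epsilon\downarrow0$ (with $\int_{J(\lambda+\epsilon)}r\,dP\to\int_{\{\lambda r<1/\gamma\}}r\,dP$) yields $h'_+(\lambda)=-Q\bigl(1/\gamma\ge\lambda\,dQ/dP\bigr)$; the set $\{\lambda\,dQ/dP=1/\gamma\}$ on which the closed and open versions differ carries no $Q$-mass for all but countably many $\lambda$ and is irrelevant to what follows.

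Three computations then drive the trichotomy. First, $h(0)=\int(1/\gamma)\,dP=1/\gamma$. Second, by dominated convergence (dominated by the constant $1/\gamma\in L^1(P)$), $\bigl(1/\gamma-\lambda r\bigr)_+\downarrow\frac{1}{\gamma}\mathbf{1}[r=0]$ pointwise, so $h(\lambda)\to\frac{1}{\gamma}P(dQ/dP=0)=\Delta/\gamma$. Third, since $(t)_+\ge t$, $h(1)\ge\int(1/\gamma-r)\,dP=(1-\gamma)/\gamma$, and as $\gamma\in(0,1)$ also $(1-\gamma)/\gamma<1/\gamma=h(0)$. The key rigidity fact is that a convex nonincreasing function constant on a nondegenerate interval $[a,b]$ is constant on all of $[a,\infty)$, hence equals $\Delta/\gamma$ there; contrapositively, $h$ is \emph{strictly} decreasing at every point where $h>\Delta/\gamma$. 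If $\Delta>1-\gamma$ then $h(\lambda)\ge\Delta/\gamma>(1-\gamma)/\gamma$ everywhere, so $h(\lambda)=(1-\gamma)/\gamma$ has no solution. If $\Delta<1-\gamma$ then $\Delta/\gamma<(1-\gamma)/\gamma<h(0)$, so by continuity a solution $\lambda^\dagger$ exists; there $h=(1-\gamma)/\gamma>\Delta/\gamma$, so $h$ is strictly decreasing at $\lambda^\dagger$, which together with monotonicity forces uniqueness; and $h(1)\ge(1-\gamma)/\gamma=h(\lambda^\dagger)$ with $h$ nonincreasing and the solution unique forces $\lambda^\dagger\ge1$. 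If $\Delta=1-\gamma$ then $(1-\gamma)/\gamma=\Delta/\gamma=\inf_\lambda h(\lambda)$, which is attained precisely when $h$ is eventually constant (in which case it is attained on an entire half-line, giving infinitely many solutions) and is not attained otherwise.

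The only genuinely delicate points are the interchange of the one-sided limit with the integral in the derivative computation — which I would handle by the explicit two-sided sandwich above rather than by a generic convergence theorem — and making sure the boundary set $\{\lambda\,dQ/dP=1/\gamma\}$ does not spoil the stated form of $h'_+$. Everything else reduces to the single observation that a convex nonincreasing function can be flat only on a terminal half-line, which simultaneously gives uniqueness of $\lambda^\dagger$ and the clean split into the three cases.
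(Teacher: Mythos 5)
Your proof follows essentially the same route as the paper's: same convexity-and-monotonicity argument, the same $\epsilon$-set decomposition to sandwich the difference quotient, the same anchor values $h(0)=1/\gamma$, $h(1)\ge(1-\gamma)/\gamma$, $\lim_{\lambda\to\infty}h(\lambda)=\Delta/\gamma$, and the same trichotomy via continuity and the rigidity of convex nonincreasing functions. Your treatment of the right derivative is actually slightly more careful than the paper's --- the correct limit of $Q(\J(\lambda+\epsilon))$ is $Q(\lambda\,dQ/dP<1/\gamma)$ with strict inequality rather than $Q(\J(\lambda))$ as the paper asserts, a discrepancy that is harmless here but that you correctly flag --- and you make explicit the ``flat only on a terminal half-line'' fact that the paper invokes silently when claiming uniqueness from convexity.
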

\begin{proof}
The convexity of $h$ follows immediately from the convexity of $x\mapsto (a - x)_+$
and the linearity of the integral. Similarly, since $x\mapsto (a-x)_+$ is
non-increasing, $h$ is non-increasing as well.

We define the set $\J(\lambda)$ as follows:
\[
\J(\lambda)\bydef \left\{x\in \X\colon \frac{1}{\gamma} \geq \lambda \frac{dQ}{dP}(x)\right\}.
\]
Now let us consider $h(\lambda)-h(\lambda+\epsilon)$ for any $\epsilon>0$.
Note that $\J(\lambda+\epsilon) \subseteq \J(\lambda)$.
We can write:
\begin{align*}
h(\lambda)-h(\lambda+\epsilon)
&=
\int_{\J(\lambda)}\paren*{\frac{1}{\gamma} -\lambda \frac{dQ}{dP}}dP
-
\int_{\J(\lambda+\epsilon)}\paren*{\frac{1}{\gamma} -(\lambda+\epsilon) \frac{dQ}{dP}}dP\\
&=
\int_{\J(\lambda)\setminus \J(\lambda+\epsilon)}\paren*{\frac{1}{\gamma} -\lambda \frac{dQ}{dP}}dP
+
\int_{\J(\lambda+\epsilon)}\paren*{\epsilon \frac{dQ}{dP}}dP\\
&=
\int_{\J(\lambda)\setminus \J(\lambda+\epsilon)}\paren*{\frac{1}{\gamma} -\lambda \frac{dQ}{dP}}dP
+
\epsilon \cdot Q(\J(\lambda+\epsilon)).
\end{align*}
Note that for $x \in \J(\lambda)\setminus \J(\lambda+\epsilon)$ we have
\[
0 \leq\frac{1}{\gamma} -\lambda \frac{dQ}{dP}(x) < \epsilon \frac{dQ}{dP}(x).
\]
This gives the following:
\[
\epsilon \cdot Q(\J(\lambda+\epsilon))
\leq 
h(\lambda)-h(\lambda+\epsilon)
\leq 
\epsilon \cdot Q(\J(\lambda+\epsilon))
+
\epsilon \cdot Q(\J(\lambda)\setminus \J(\lambda+\epsilon))
=
\epsilon \cdot Q(\J(\lambda)),
\]
which shows that $h$ is continuous.
Also
\[
\lim_{\epsilon\to 0^+} \frac{h(\lambda+\epsilon)-h(\lambda)}{\epsilon} =
\lim_{\epsilon\to 0^+} -Q(\J(\lambda+\epsilon)) = -Q(\J(\lambda)).
\]

It is obvious that $h(0)=1/\gamma$ and $h\le \gamma^{-1}$ for $\lambda \geq 0$. 
By Jensen's inequality applied to the convex function $x\mapsto (a - x)_+$,
we have $h(\lambda)\geq \paren*{\gamma^{-1} - \lambda}_+$. So $h(1)\geq \gamma^{-1}-1$.
We conclude that $h$ may reach the value $(1-\gamma)/\gamma = \gamma^{-1} - 1$ only on $[1,+\infty)$.
Note that 
\[
h(\lambda) \to \frac{1}{\gamma}P\left( \frac{dQ}{dP}(X)= 0\right) = \frac{\Delta}{\gamma}\geq 0\quad\text{as}\quad \lambda \to \infty.
\]
Thus if $\Delta/\gamma > \gamma^{-1} - 1$ the equation $h(\lambda) = \gamma^{-1} - 1$ has no solutions, as $h$ is non-increasing.
If $\Delta/\gamma = \gamma^{-1} - 1$ then either $h(\lambda) > \gamma^{-1} - 1$ for all $\lambda\geq 0$ and we have no solutions or there is a finite $\lambda'\geq 1$ such that $h(\lambda') = \gamma^{-1} - 1$, which means that the equation is also satisfied by all $\lambda \geq \lambda'$, as $h$ is continuous and non-increasing.
Finally, if $\Delta/\gamma < \gamma^{-1} - 1$ then there is a unique $\lambda^\dagger$ such that $h(\lambda^\dagger) = \gamma^{-1} - 1$, which follows from the convexity of $h$.
\end{proof}

Next we introduce some simple convenience lemma for comparing convex functions
of random variables.
\begin{lemma}\label{le:imp2}
Let $f$ be a convex function, $X,Y$ be real-valued random variables such that
$X\le Y$ a.s., and $c\in \R$ be a constant such that\footnote{Generally it is not guaranteed that such a constant $c$ always exists. In this result we assume this is the case.}
\[
\e{\min(c,Y)}=\e{X}.
\]
Then we have the following lower bound:
\[
\e{f(X)-f(\min(c,Y))} \ge 0.
\]
\end{lemma}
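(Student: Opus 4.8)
The plan is to mirror the proof of Lemma~\ref{le:imp}: use the convexity of $f$ through a subgradient inequality, and then control the sign of the resulting cross term by splitting the sample space according to whether $Y\le c$ or $Y>c$.

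First I would set $W:=\min(c,Y)$, so that the claim becomes $\E[f(X)]\ge\E[f(W)]$, and recall that the hypothesis gives $\E[W]=\E[X]$. Let $f'$ denote the right derivative of $f$, which is non-decreasing and provides a subgradient, so that pointwise $f(X)\ge f(W)+f'(W)(X-W)$. Taking expectations, the lemma reduces to showing $\E[f'(W)(X-W)]\ge 0$.

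Next I would write
\[
\E\bigl[f'(W)(X-W)\bigr]=\E\bigl[(f'(W)-f'(c))(X-W)\bigr]+f'(c)\bigl(\E[X]-\E[W]\bigr),
\]
where the second term vanishes by hypothesis. It then suffices to check that the integrand $(f'(W)-f'(c))(X-W)$ is non-negative almost surely. On $\{Y>c\}$ we have $W=c$, so $f'(W)-f'(c)=0$ and the product is zero. On $\{Y\le c\}$ we have $W=Y$, hence $f'(W)=f'(Y)\le f'(c)$ since $f'$ is non-decreasing, while $X-W=X-Y\le 0$ since $X\le Y$ almost surely; the product of two non-positive quantities is non-negative. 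Therefore $\E[f'(W)(X-W)]\ge 0$, which yields $\E[f(X)-f(\min(c,Y))]\ge 0$.

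The only point requiring a word of care — and, exactly as in Lemma~\ref{le:imp}, one we take for granted — is integrability, needed to take expectations and the subgradient inequality termwise. Beyond that there is no real obstacle: the argument is a routine sign analysis once the term $f'(c)$ is added and subtracted to exploit the constraint $\E[\min(c,Y)]=\E[X]$.
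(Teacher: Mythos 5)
Your proof is correct and follows essentially the same route as the paper: applying the subgradient inequality at $W=\min(c,Y)$, using monotonicity of $f'$ together with $X-Y\le 0$ on $\{Y\le c\}$ to control the cross term, and invoking $\E[\min(c,Y)]=\E[X]$ to kill the remaining $f'(c)\,\E[X-W]$ term. The paper phrases this by splitting into the two cases first and collapsing to $f'(c)\bigl(X-\min(c,Y)\bigr)$, whereas you add and subtract $f'(c)$ after the subgradient step, but the underlying computation is identical.
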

\begin{proof}
We decompose the expectation with respect to the value of the min, and use the
convexity of $f$:
\begin{eqnarray*}
f(X)-f(\min(c,Y))
&=& \oo{Y\le c}\paren*{f(X)-f(Y)} + \oo{Y>c}\paren*{f(X)-f(c)}\\
&\ge& \oo{Y\le c}f'(Y)\paren*{X-Y} + \oo{Y>c}(X-c)f'(c)\\
&\ge& \oo{Y\le c}f'(c)\paren*{X-Y} + \oo{Y>c}(X-c)f'(c)\\
&=& Xf'(c) - \min(Y,c)f'(c),
\end{eqnarray*}
where we used the fact that $f'$ is non-decreasing in the previous to last step.
Taking the expectation we get the result.
\end{proof}

\begin{lemma}\label{le:aux}
Let $P_g,P_d$ be two fixed distributions and $\beta\in(0,1)$. 
Assume
\[
P_d\left( \frac{dP_g}{dP_d}= 0\right) < \beta.
\]
Let $\M(P_d,\beta)$
be the set of all probability distributions $T$ such that $(1-\beta)dT \le dP_d$.
Then the following minimization problem:
\[
\min_{T\in \M(P_d,\beta)} D_f(T\,\|\,P_g)
\]
has the solution $T^*$ with density
\[
dT^*\bydef \min(dP_d/(1-\beta), \lambda^\dagger dP_g),
\]
where $\lambda^\dagger$ is the unique value in $[1,\infty)$ such that $\int dT^*=1$.
\end{lemma}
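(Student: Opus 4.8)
The plan is to treat this as a constrained $f$-projection of $P_g$ onto the convex set $\M(P_d,\beta)$ and to reduce it to the two auxiliary lemmas, exactly in the spirit of the proof of Theorem~\ref{main}: Lemma~\ref{le:lambda2} produces the normalizing constant $\lambda^\dagger$, and Lemma~\ref{le:imp2} delivers the optimality of $T^{*}$ once the membership constraint and the normalization have been translated into its hypotheses.

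First I would establish the existence and uniqueness of $\lambda^\dagger$. Using $\min(a,b)=a-(a-b)_+$ and integrating against $dP_d$ (mass on $\{dP_d=0\}$ being irrelevant), one has
\[
\int dT^{*} \;=\; \tfrac{1}{1-\beta}-\int\paren*{\tfrac{1}{1-\beta}-\lambda\,\tfrac{dP_g}{dP_d}}_{+}\,dP_d \;=\; \tfrac{1}{1-\beta}-h(\lambda),
\]
where $h$ is the function of Lemma~\ref{le:lambda2} instantiated with $\gamma=1-\beta$, $P=P_d$, $Q=P_g$. Hence $\int dT^{*}=1$ is equivalent to $h(\lambda)=\tfrac{\beta}{1-\beta}=\tfrac{1-\gamma}{\gamma}$, and the quantity $\Delta=P_d(dP_g/dP_d=0)$ is exactly the $\Delta$ of that lemma; the hypothesis $P_d(dP_g/dP_d=0)<\beta$ reads $\Delta<1-\gamma$, so Lemma~\ref{le:lambda2} yields a unique $\lambda^\dagger\ge 1$ with $\int dT^{*}=1$. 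This makes $T^{*}$ a genuine probability distribution, and since $dT^{*}$ vanishes wherever $dP_g=0$ we obtain $T^{*}\ll P_g$.

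For optimality, fix any $T\in\M(P_d,\beta)$ with $T\ll P_g$ and apply Lemma~\ref{le:imp2} with the convex function $f$, the random variables $X=dT/dP_g$ and $Y=dP_d/\bigl((1-\beta)\,dP_g\bigr)$, the constant $c=\lambda^\dagger$, and expectations taken under $P_g$ (both $X$ and $Y$ are finite $P_g$-a.s., since $\ee{P_g}{X}=1$ and $\ee{P_g}{Y}\le(1-\beta)^{-1}$). The constraint $(1-\beta)\,dT\le dP_d$ is precisely $X\le Y$ a.s.\ $[P_g]$; moreover $\min(c,Y)=dT^{*}/dP_g$, so the defining normalization of $\lambda^\dagger$ gives $\ee{P_g}{\min(c,Y)}=\int dT^{*}=1=\int dT=\ee{P_g}{X}$, which is exactly the hypothesis that a balancing constant $c$ exists. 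Lemma~\ref{le:imp2} then yields $\ee{P_g}{f(X)}\ge\ee{P_g}{f(\min(c,Y))}$, i.e.\ $D_f(T\,\|\,P_g)\ge D_f(T^{*}\,\|\,P_g)$, which is the claim.

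The one delicate point, and what I expect to be the main obstacle, is reducing to the case $T\ll P_g$. If $T$ places mass on $\{dP_g=0\}$, then since $T\in\M(P_d,\beta)$ that mass lies on $\{dP_g=0,\ dP_d>0\}$ and contributes a nonnegative term $f_0^{\circ}(0)\,T(dP_g=0)$ to $D_f(T\,\|\,P_g)=D_{f_0}(T\,\|\,P_g)$; when this constant is infinite the inequality is trivial, and otherwise one argues, using once more $P_d(dP_g=0)<\beta$ to guarantee enough unused capacity of $dP_d/(1-\beta)$ over $\{dP_g>0\}$, that such a $T$ is dominated in $D_f(\cdot\,\|\,P_g)$ by some feasible $\widetilde T\ll P_g$, so the previous paragraph applies. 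Apart from this measure-theoretic bookkeeping near $\{dP_g=0\}$, the argument is immediate: the two substitutions, $(1-\beta)\,dT\le dP_d\Leftrightarrow X\le Y$ and $\min(\lambda^\dagger,Y)=dT^{*}/dP_g$, are the whole idea, after which Lemmas~\ref{le:lambda2} and~\ref{le:imp2} do all the work.
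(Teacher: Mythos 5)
Your proof matches the paper's almost step for step: you reduce the normalization to Lemma~\ref{le:lambda2} via the identity $\min(a,b)=a-(a-b)_+$ and the hypothesis $P_d(dP_g/dP_d=0)<\beta$ (which becomes $\Delta<1-\gamma$ with $\gamma=1-\beta$), and you then invoke Lemma~\ref{le:imp2} with $X=dT/dP_g$, $Y=dP_d/((1-\beta)dP_g)$, $c=\lambda^\dagger$, and $Z\sim P_g$, observing that the constraint $(1-\beta)dT\le dP_d$ is exactly $X\le Y$ and that $\min(c,Y)=dT^*/dP_g$. The one place you go beyond the paper is in explicitly flagging the reduction to $T\ll P_g$: the paper's proof applies Lemma~\ref{le:imp2} silently, but as you note, if $T$ charges $\{dP_g=0\}$ then $\ee{P_g}{X}<1$ and the balancing hypothesis of Lemma~\ref{le:imp2} fails with $c=\lambda^\dagger$, so one does need the extra argument (via the nonnegative $f^{\circ}(0)\,T(dP_g=0)$ contribution and redistribution of that mass onto $\{dP_g>0\}$, using the slack guaranteed by $P_d(dP_g=0)<\beta$) that you sketch.
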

\begin{proof}
We will use Lemma \ref{le:imp2} with $X=dT(Z)/dP_g(Z)$, $Y=dP_d(Z)/\bigl((1-\beta)dP_g(Z)\bigr)$, and $c=\lambda^*$, $Z\sim P_g$.
We need to verify that assumptions of Lemma \ref{le:imp2} are satisfied.
Obviously, $Y\geq X$.
We need to show that there is a constant $c$ such that
\[
\int \min\left(c, \frac{dP_d}{(1-\beta) dP_g}\right)dP_g
=1.
\]
Rewriting this equation we get the following equivalent one:
\begin{equation}
\label{eq:middle-proof-normal}
\beta = 
\int \left( dP_d - \min\left(c(1-\beta)P_g, dP_d\right)\right)
=
(1-\beta)\int \left( \frac{1}{1-\beta} - c\frac{dP_g}{dP_d}\right)_+dP_d.
\end{equation}
Using the fact that
\[
P_d\left( \frac{dP_g}{dP_d}= 0\right) < \beta
\] we may apply Lemma \ref{le:lambda2} and conclude that there is a unique $c\in[1,\infty)$ satisfying \eqref{eq:middle-proof-normal}, which we denote $\lambda^\dagger$.
\end{proof}

To conclude the proof of Theorem \ref{th:opt2}, observe that
from Lemma \ref{le:aux}, by making the change of variable
$T=(P_d-\beta Q)/(1-\beta)$ we can rewrite the minimization problem as follows:
\[
\min_{Q:\; \beta dQ  \leq dP_d } D_{f^\circ}\paren*{P_g \,\|\, \frac{P_d-\beta Q}{1-\beta}}
\]
and we verify that the solution has the form
$dQ^{\dagger}_{\beta}=\frac{1}{\beta}\paren*{dP_d-\lambda^\dagger (1-\beta)dP_g}_+$.
Since this solution does not depend on $f$, the fact that we optimized $D_{f^\circ}$
is irrelevant and we get the same solution for $D_f$.

\section{$f$-Divergences}
\label{appendix:f-div}
\paragraph{Jensen-Shannon} 
This divergence corresponds to 
\[
D_f(P\|Q) = \mathrm{JS}(P,Q) = \int_{\X} f\left(\frac{dP}{dQ}(x)\right) dQ(x)
\]
with
\[
f(u) = -(u+1)\log \frac{u+1}{2} + u\log u.
\]
Indeed,
\begin{align*}
&JS(P,Q)
:=
\int_{\X} q(x)\left(
-\left(\frac{p(x)}{q(x)} + 1\right)\log\left(\frac{\frac{p(x)}{q(x)} + 1}{2}\right)
+
\frac{p(x)}{q(x)}\log\frac{p(x)}{q(z)}
\right)dx\\
&=
\int_{\X} q(x)\left(
\frac{p(x)}{q(x)}\log\frac{2q(x)}{p(x) + q(x)}
+
\log\frac{2q(x)}{p(x) + q(x)}
+
\frac{p(x)}{q(z)}\log\frac{p(x)}{q(z)}
\right)dx\\
&=\int_{\X} 
p(x)\log\frac{2q(x)}{p(x) + q(x)}
+
q(x)\log\frac{2q(x)}{p(x) + q(x)}
+
p(x)\log\frac{p(x)}{q(z)}
dx\\
&=
\mathrm{KL}\left(Q,\frac{P+Q}{2}\right)
+
\mathrm{KL}\left(P,\frac{P+Q}{2}\right).
\end{align*}

\section{Additional experimental results}
\label{sec:additional_experiments}

At each iteration of the boosting approach, different reweighting heuristics are possible. This section contains more complete results about the following three heuristics:
\begin{itemize}
  \item Constant $\beta$, and using the proposed reweighting scheme given $\beta$. See Table \ref{table:constant_beta}.
  \item Reweighting similar to ``Cascade GAN'' from \cite{wang2016ensembles}, i.e.
keep the top $x\%$ of examples, based on the discriminator corresponding to the \emph{previous} generator. See Table \ref{table:top_k_last}.
  \item Keep the top $x\%$ of examples, based on the discriminator corresponding to
the mixture of \emph{all previous} generators. See Table \ref{table:top_k_all}.
\end{itemize}

Note that when properly tuned, each reweighting scheme outperforms the baselines, and have similar performances when used with few iterations.
However, they require an additional parameter to tune, and are worse than the simple $\beta=1/t$ heuristic proposed above.

\begin{table}

\begin{center}
\ \hspace*{-1.5cm}
{\renewcommand{\arraystretch}{1.7}
\begin{tabular}{ | l | c|c|c|c|c|c | }
  \hline
     &   $Modes: 1$  & $Modes: 2$  & $Modes: 3$  & $Modes: 5$  & $Modes: 7$  & $Modes: 10$  \\ \hline
 Vanilla  &  \begin{minipage}{2cm}$0.97$ \footnotesize{$(0.9;1.0)$}\end{minipage}  &  \begin{minipage}{2cm}$0.88$ \footnotesize{$(0.4;1.0)$}\end{minipage}  &  \begin{minipage}{2cm}$0.63$ \footnotesize{$(0.5;1.0)$}\end{minipage}  &  \begin{minipage}{2cm}$0.72$ \footnotesize{$(0.5;0.8)$}\end{minipage}  &  \begin{minipage}{2cm}$0.58$ \footnotesize{$(0.4;0.8)$}\end{minipage}  &  \begin{minipage}{2cm}$0.59$ \footnotesize{$(0.2;0.7)$}\end{minipage}  \\ \hline
 Best of T (T=3)  &  \begin{minipage}{2cm}$0.99$ \footnotesize{$(1.0;1.0)$}\end{minipage}  &  \begin{minipage}{2cm}$0.96$ \footnotesize{$(0.9;1.0)$}\end{minipage}  &  \begin{minipage}{2cm}$0.91$ \footnotesize{$(0.7;1.0)$}\end{minipage}  &  \begin{minipage}{2cm}$0.80$ \footnotesize{$(0.7;0.9)$}\end{minipage}  &  \begin{minipage}{2cm}$0.84$ \footnotesize{$(0.7;0.9)$}\end{minipage}  &  \begin{minipage}{2cm}$0.70$ \footnotesize{$(0.6;0.8)$}\end{minipage}  \\ \hline
 Best of T (T=10)  &  \begin{minipage}{2cm}$0.99$ \footnotesize{$(1.0;1.0)$}\end{minipage}  &  \begin{minipage}{2cm}$0.99$ \footnotesize{$(1.0;1.0)$}\end{minipage}  &  \begin{minipage}{2cm}$0.98$ \footnotesize{$(0.8;1.0)$}\end{minipage}  &  \begin{minipage}{2cm}$0.80$ \footnotesize{$(0.8;0.9)$}\end{minipage}  &  \begin{minipage}{2cm}$0.87$ \footnotesize{$(0.8;0.9)$}\end{minipage}  &  \begin{minipage}{2cm}$0.71$ \footnotesize{$(0.7;0.8)$}\end{minipage}  \\ \hline
 Ensemble (T=3)  &  \begin{minipage}{2cm}$0.99$ \footnotesize{$(1.0;1.0)$}\end{minipage}  &  \begin{minipage}{2cm}$0.98$ \footnotesize{$(0.9;1.0)$}\end{minipage}  &  \begin{minipage}{2cm}$0.93$ \footnotesize{$(0.8;1.0)$}\end{minipage}  &  \begin{minipage}{2cm}$0.78$ \footnotesize{$(0.6;1.0)$}\end{minipage}  &  \begin{minipage}{2cm}$0.85$ \footnotesize{$(0.6;1.0)$}\end{minipage}  &  \begin{minipage}{2cm}$0.80$ \footnotesize{$(0.6;1.0)$}\end{minipage}  \\ \hline
 Ensemble (T=10)  &  \begin{minipage}{2cm}$1.00$ \footnotesize{$(1.0;1.0)$}\end{minipage}  &  \begin{minipage}{2cm}$0.99$ \footnotesize{$(1.0;1.0)$}\end{minipage}  &  \begin{minipage}{2cm}$1.00$ \footnotesize{$(1.0;1.0)$}\end{minipage}  &  \begin{minipage}{2cm}$0.91$ \footnotesize{$(0.8;1.0)$}\end{minipage}  &  \begin{minipage}{2cm}$0.88$ \footnotesize{$(0.8;1.0)$}\end{minipage}  &  \begin{minipage}{2cm}$0.89$ \footnotesize{$(0.7;1.0)$}\end{minipage}  \\ \hline
 Boosted (T=3)  &  \begin{minipage}{2cm}$0.99$ \footnotesize{$(1.0;1.0)$}\end{minipage}  &  \begin{minipage}{2cm}$0.99$ \footnotesize{$(0.9;1.0)$}\end{minipage}  &  \begin{minipage}{2cm}$0.98$ \footnotesize{$(0.9;1.0)$}\end{minipage}  &  \begin{minipage}{2cm}$0.91$ \footnotesize{$(0.8;1.0)$}\end{minipage}  &  \begin{minipage}{2cm}$0.91$ \footnotesize{$(0.8;1.0)$}\end{minipage}  &  \begin{minipage}{2cm}$0.86$ \footnotesize{$(0.7;1.0)$}\end{minipage}  \\ \hline
 Boosted (T=10)  &  \begin{minipage}{2cm}$1.00$ \footnotesize{$(1.0;1.0)$}\end{minipage}  &  \begin{minipage}{2cm}$1.00$ \footnotesize{$(1.0;1.0)$}\end{minipage}  &  \begin{minipage}{2cm}$1.00$ \footnotesize{$(1.0;1.0)$}\end{minipage}  &  \begin{minipage}{2cm}$1.00$ \footnotesize{$(1.0;1.0)$}\end{minipage}  &  \begin{minipage}{2cm}$1.00$ \footnotesize{$(1.0;1.0)$}\end{minipage}  &  \begin{minipage}{2cm}$1.00$ \footnotesize{$(1.0;1.0)$}\end{minipage}  \\ \hline
  \end{tabular}}
\end{center}

\begin{center}
\ \hspace*{-1.5cm}
{\renewcommand{\arraystretch}{1.7}
\begin{tabular}{ | l | c|c|c|c|c|c | }
  \hline
     &   $Modes: 1$  & $Modes: 2$  & $Modes: 3$  & $Modes: 5$  & $Modes: 7$  & $Modes: 10$  \\ \hline
 Vanilla  &  \begin{minipage}{2cm}$-4.49$ \footnotesize{$(-5.4;-4.4)$}\end{minipage}  &  \begin{minipage}{2cm}$-6.02$ \footnotesize{$(-86.8;-5.3)$}\end{minipage}  &  \begin{minipage}{2cm}$-16.03$ \footnotesize{$(-59.6;-5.5)$}\end{minipage}  &  \begin{minipage}{2cm}$-23.65$ \footnotesize{$(-118.8;-5.7)$}\end{minipage}  &  \begin{minipage}{2cm}$-126.87$ \footnotesize{$(-250.4;-12.8)$}\end{minipage}  &  \begin{minipage}{2cm}$-55.51$ \footnotesize{$(-185.2;-11.2)$}\end{minipage}  \\ \hline
 Best of T (T=3)  &  \begin{minipage}{2cm}$-4.39$ \footnotesize{$(-4.6;-4.3)$}\end{minipage}  &  \begin{minipage}{2cm}$-5.40$ \footnotesize{$(-24.3;-5.2)$}\end{minipage}  &  \begin{minipage}{2cm}$-5.57$ \footnotesize{$(-23.5;-5.4)$}\end{minipage}  &  \begin{minipage}{2cm}$-9.91$ \footnotesize{$(-35.8;-5.1)$}\end{minipage}  &  \begin{minipage}{2cm}$-36.94$ \footnotesize{$(-90.0;-9.7)$}\end{minipage}  &  \begin{minipage}{2cm}$-19.12$ \footnotesize{$(-59.2;-9.7)$}\end{minipage}  \\ \hline
 Best of T (T=10)  &  \begin{minipage}{2cm}$-4.34$ \footnotesize{$(-4.4;-4.3)$}\end{minipage}  &  \begin{minipage}{2cm}$-5.24$ \footnotesize{$(-5.4;-5.2)$}\end{minipage}  &  \begin{minipage}{2cm}$-5.45$ \footnotesize{$(-5.6;-5.3)$}\end{minipage}  &  \begin{minipage}{2cm}$-5.49$ \footnotesize{$(-9.4;-5.0)$}\end{minipage}  &  \begin{minipage}{2cm}$-9.72$ \footnotesize{$(-17.3;-6.5)$}\end{minipage}  &  \begin{minipage}{2cm}$-9.12$ \footnotesize{$(-16.8;-6.6)$}\end{minipage}  \\ \hline
 Ensemble (T=3)  &  \begin{minipage}{2cm}$-4.46$ \footnotesize{$(-4.8;-4.4)$}\end{minipage}  &  \begin{minipage}{2cm}$-5.59$ \footnotesize{$(-6.6;-5.2)$}\end{minipage}  &  \begin{minipage}{2cm}$-4.78$ \footnotesize{$(-5.5;-4.6)$}\end{minipage}  &  \begin{minipage}{2cm}$-14.71$ \footnotesize{$(-51.9;-5.4)$}\end{minipage}  &  \begin{minipage}{2cm}$-6.70$ \footnotesize{$(-28.7;-5.5)$}\end{minipage}  &  \begin{minipage}{2cm}$-8.59$ \footnotesize{$(-25.4;-6.1)$}\end{minipage}  \\ \hline
 Ensemble (T=10)  &  \begin{minipage}{2cm}$-4.52$ \footnotesize{$(-4.7;-4.4)$}\end{minipage}  &  \begin{minipage}{2cm}$-5.49$ \footnotesize{$(-6.6;-5.2)$}\end{minipage}  &  \begin{minipage}{2cm}$-4.98$ \footnotesize{$(-6.5;-4.6)$}\end{minipage}  &  \begin{minipage}{2cm}$-5.44$ \footnotesize{$(-6.0;-5.2)$}\end{minipage}  &  \begin{minipage}{2cm}$-5.82$ \footnotesize{$(-6.4;-5.5)$}\end{minipage}  &  \begin{minipage}{2cm}$-6.08$ \footnotesize{$(-6.3;-5.7)$}\end{minipage}  \\ \hline
 Boosted (T=3)  &  \begin{minipage}{2cm}$-4.50$ \footnotesize{$(-4.8;-4.4)$}\end{minipage}  &  \begin{minipage}{2cm}$-5.32$ \footnotesize{$(-5.8;-5.2)$}\end{minipage}  &  \begin{minipage}{2cm}$-4.80$ \footnotesize{$(-5.8;-4.6)$}\end{minipage}  &  \begin{minipage}{2cm}$-5.39$ \footnotesize{$(-19.3;-5.1)$}\end{minipage}  &  \begin{minipage}{2cm}$-5.56$ \footnotesize{$(-12.4;-5.2)$}\end{minipage}  &  \begin{minipage}{2cm}$-8.03$ \footnotesize{$(-28.7;-6.1)$}\end{minipage}  \\ \hline
 Boosted (T=10)  &  \begin{minipage}{2cm}$-4.55$ \footnotesize{$(-4.6;-4.4)$}\end{minipage}  &  \begin{minipage}{2cm}$-5.30$ \footnotesize{$(-5.5;-5.2)$}\end{minipage}  &  \begin{minipage}{2cm}$-5.07$ \footnotesize{$(-5.6;-4.7)$}\end{minipage}  &  \begin{minipage}{2cm}$-5.25$ \footnotesize{$(-5.5;-4.6)$}\end{minipage}  &  \begin{minipage}{2cm}$-5.03$ \footnotesize{$(-5.5;-4.8)$}\end{minipage}  &  \begin{minipage}{2cm}$-5.92$ \footnotesize{$(-6.2;-5.6)$}\end{minipage}  \\ \hline
  \end{tabular}}
\end{center}

 \caption{Performance of the different algorithms on varying number of mixtures of Gaussians.
   The reported scores are the median and interval defined by the 5\% and 95\% percentile (in parenthesis) (see Section \ref{sec:metrics}), over 35 runs for each setting.
   The top table reports the coverage $C$, probability mass of $P_d$ covered
   by the $5th$ percentile of $P_g$ defined in Section \ref{experiments}. The bottom table reports the 
   log likelihood of the true data under the model~$P_g$.
   Note that the $95\%$ interval is not the usual confidence interval measuring the
   variance of the experiment itself, but rather measures the stability of the
   different algorithms (would remain even if each experiment was run an infinite
   number of times).
   Both the ensemble and the boosting approaches significantly outperform the vanilla GAN even with just three iterations (i.e. just
   two additional components). The boosting approach converges faster to the optimal coverage.
 }

\label{table:simple_comparison_complete}

\end{table}

\begin{table}

\begin{center}
\ \hspace*{-1.5cm}
{\renewcommand{\arraystretch}{1.7}
\begin{tabular}{ | l | c|c|c|c|c|c | }
  \hline
     &   $Modes: 1$  & $Modes: 2$  & $Modes: 3$  & $Modes: 5$  & $Modes: 7$  & $Modes: 10$  \\ \hline
 Vanilla  &  \begin{minipage}{2cm}$0.98$ \footnotesize{$(0.9;1.0)$}\end{minipage}  &  \begin{minipage}{2cm}$0.86$ \footnotesize{$(0.5;1.0)$}\end{minipage}  &  \begin{minipage}{2cm}$0.66$ \footnotesize{$(0.5;1.0)$}\end{minipage}  &  \begin{minipage}{2cm}$0.61$ \footnotesize{$(0.5;0.8)$}\end{minipage}  &  \begin{minipage}{2cm}$0.55$ \footnotesize{$(0.4;0.7)$}\end{minipage}  &  \begin{minipage}{2cm}$0.58$ \footnotesize{$(0.3;0.8)$}\end{minipage}  \\ \hline
 Boosted (T=3)  &  \begin{minipage}{2cm}$0.99$ \footnotesize{$(1.0;1.0)$}\end{minipage}  &  \begin{minipage}{2cm}$0.98$ \footnotesize{$(0.9;1.0)$}\end{minipage}  &  \begin{minipage}{2cm}$0.98$ \footnotesize{$(0.9;1.0)$}\end{minipage}  &  \begin{minipage}{2cm}$0.93$ \footnotesize{$(0.8;1.0)$}\end{minipage}  &  \begin{minipage}{2cm}$0.97$ \footnotesize{$(0.8;1.0)$}\end{minipage}  &  \begin{minipage}{2cm}$0.87$ \footnotesize{$(0.6;1.0)$}\end{minipage}  \\ \hline
 Boosted (T=10)  &  \begin{minipage}{2cm}$1.00$ \footnotesize{$(1.0;1.0)$}\end{minipage}  &  \begin{minipage}{2cm}$0.99$ \footnotesize{$(1.0;1.0)$}\end{minipage}  &  \begin{minipage}{2cm}$1.00$ \footnotesize{$(1.0;1.0)$}\end{minipage}  &  \begin{minipage}{2cm}$0.99$ \footnotesize{$(0.9;1.0)$}\end{minipage}  &  \begin{minipage}{2cm}$0.99$ \footnotesize{$(0.8;1.0)$}\end{minipage}  &  \begin{minipage}{2cm}$0.97$ \footnotesize{$(0.8;1.0)$}\end{minipage}  \\ \hline
 Beta0.2 (T=3)  &  \begin{minipage}{2cm}$0.99$ \footnotesize{$(1.0;1.0)$}\end{minipage}  &  \begin{minipage}{2cm}$0.97$ \footnotesize{$(0.9;1.0)$}\end{minipage}  &  \begin{minipage}{2cm}$0.97$ \footnotesize{$(0.9;1.0)$}\end{minipage}  &  \begin{minipage}{2cm}$0.95$ \footnotesize{$(0.8;1.0)$}\end{minipage}  &  \begin{minipage}{2cm}$0.96$ \footnotesize{$(0.7;1.0)$}\end{minipage}  &  \begin{minipage}{2cm}$0.88$ \footnotesize{$(0.7;1.0)$}\end{minipage}  \\ \hline
 Beta0.2 (T=10)  &  \begin{minipage}{2cm}$0.99$ \footnotesize{$(1.0;1.0)$}\end{minipage}  &  \begin{minipage}{2cm}$0.99$ \footnotesize{$(1.0;1.0)$}\end{minipage}  &  \begin{minipage}{2cm}$0.99$ \footnotesize{$(1.0;1.0)$}\end{minipage}  &  \begin{minipage}{2cm}$1.00$ \footnotesize{$(1.0;1.0)$}\end{minipage}  &  \begin{minipage}{2cm}$1.00$ \footnotesize{$(0.9;1.0)$}\end{minipage}  &  \begin{minipage}{2cm}$1.00$ \footnotesize{$(0.9;1.0)$}\end{minipage}  \\ \hline
 Beta0.3 (T=3)  &  \begin{minipage}{2cm}$0.99$ \footnotesize{$(1.0;1.0)$}\end{minipage}  &  \begin{minipage}{2cm}$0.98$ \footnotesize{$(0.9;1.0)$}\end{minipage}  &  \begin{minipage}{2cm}$0.98$ \footnotesize{$(0.9;1.0)$}\end{minipage}  &  \begin{minipage}{2cm}$0.96$ \footnotesize{$(0.8;1.0)$}\end{minipage}  &  \begin{minipage}{2cm}$0.96$ \footnotesize{$(0.6;1.0)$}\end{minipage}  &  \begin{minipage}{2cm}$0.88$ \footnotesize{$(0.7;1.0)$}\end{minipage}  \\ \hline
 Beta0.3 (T=10)  &  \begin{minipage}{2cm}$1.00$ \footnotesize{$(1.0;1.0)$}\end{minipage}  &  \begin{minipage}{2cm}$0.99$ \footnotesize{$(1.0;1.0)$}\end{minipage}  &  \begin{minipage}{2cm}$0.99$ \footnotesize{$(1.0;1.0)$}\end{minipage}  &  \begin{minipage}{2cm}$1.00$ \footnotesize{$(1.0;1.0)$}\end{minipage}  &  \begin{minipage}{2cm}$1.00$ \footnotesize{$(0.9;1.0)$}\end{minipage}  &  \begin{minipage}{2cm}$0.99$ \footnotesize{$(0.9;1.0)$}\end{minipage}  \\ \hline
 Beta0.4 (T=3)  &  \begin{minipage}{2cm}$0.99$ \footnotesize{$(1.0;1.0)$}\end{minipage}  &  \begin{minipage}{2cm}$0.98$ \footnotesize{$(0.9;1.0)$}\end{minipage}  &  \begin{minipage}{2cm}$0.95$ \footnotesize{$(0.9;1.0)$}\end{minipage}  &  \begin{minipage}{2cm}$0.94$ \footnotesize{$(0.8;1.0)$}\end{minipage}  &  \begin{minipage}{2cm}$0.89$ \footnotesize{$(0.7;1.0)$}\end{minipage}  &  \begin{minipage}{2cm}$0.89$ \footnotesize{$(0.7;1.0)$}\end{minipage}  \\ \hline
 Beta0.4 (T=10)  &  \begin{minipage}{2cm}$0.99$ \footnotesize{$(1.0;1.0)$}\end{minipage}  &  \begin{minipage}{2cm}$0.99$ \footnotesize{$(0.9;1.0)$}\end{minipage}  &  \begin{minipage}{2cm}$0.96$ \footnotesize{$(0.9;1.0)$}\end{minipage}  &  \begin{minipage}{2cm}$0.97$ \footnotesize{$(0.8;1.0)$}\end{minipage}  &  \begin{minipage}{2cm}$0.99$ \footnotesize{$(0.8;1.0)$}\end{minipage}  &  \begin{minipage}{2cm}$0.90$ \footnotesize{$(0.8;1.0)$}\end{minipage}  \\ \hline
 Beta0.5 (T=3)  &  \begin{minipage}{2cm}$0.99$ \footnotesize{$(1.0;1.0)$}\end{minipage}  &  \begin{minipage}{2cm}$0.98$ \footnotesize{$(0.9;1.0)$}\end{minipage}  &  \begin{minipage}{2cm}$0.97$ \footnotesize{$(0.8;1.0)$}\end{minipage}  &  \begin{minipage}{2cm}$0.82$ \footnotesize{$(0.8;1.0)$}\end{minipage}  &  \begin{minipage}{2cm}$0.86$ \footnotesize{$(0.7;1.0)$}\end{minipage}  &  \begin{minipage}{2cm}$0.81$ \footnotesize{$(0.6;1.0)$}\end{minipage}  \\ \hline
 Beta0.5 (T=10)  &  \begin{minipage}{2cm}$0.99$ \footnotesize{$(1.0;1.0)$}\end{minipage}  &  \begin{minipage}{2cm}$0.98$ \footnotesize{$(0.9;1.0)$}\end{minipage}  &  \begin{minipage}{2cm}$0.97$ \footnotesize{$(0.9;1.0)$}\end{minipage}  &  \begin{minipage}{2cm}$0.84$ \footnotesize{$(0.8;1.0)$}\end{minipage}  &  \begin{minipage}{2cm}$0.87$ \footnotesize{$(0.7;1.0)$}\end{minipage}  &  \begin{minipage}{2cm}$0.91$ \footnotesize{$(0.8;1.0)$}\end{minipage}  \\ \hline
  \end{tabular}}
\end{center}

\begin{center}
\ \hspace*{-1.5cm}
{\renewcommand{\arraystretch}{1.7}
\begin{tabular}{ | l | c|c|c|c|c|c | }
  \hline
     &   $Modes: 1$  & $Modes: 2$  & $Modes: 3$  & $Modes: 5$  & $Modes: 7$  & $Modes: 10$  \\ \hline
 Vanilla  &  \begin{minipage}{2cm}$-4.50$ \footnotesize{$(-5.0;-4.4)$}\end{minipage}  &  \begin{minipage}{2cm}$-5.65$ \footnotesize{$(-72.7;-5.1)$}\end{minipage}  &  \begin{minipage}{2cm}$-19.63$ \footnotesize{$(-62.1;-5.6)$}\end{minipage}  &  \begin{minipage}{2cm}$-28.16$ \footnotesize{$(-293.1;-16.3)$}\end{minipage}  &  \begin{minipage}{2cm}$-56.94$ \footnotesize{$(-248.1;-14.3)$}\end{minipage}  &  \begin{minipage}{2cm}$-71.11$ \footnotesize{$(-184.8;-12.5)$}\end{minipage}  \\ \hline
 Boosted (T=3)  &  \begin{minipage}{2cm}$-4.56$ \footnotesize{$(-4.9;-4.4)$}\end{minipage}  &  \begin{minipage}{2cm}$-5.55$ \footnotesize{$(-5.9;-5.2)$}\end{minipage}  &  \begin{minipage}{2cm}$-5.01$ \footnotesize{$(-6.7;-4.7)$}\end{minipage}  &  \begin{minipage}{2cm}$-5.49$ \footnotesize{$(-18.7;-4.9)$}\end{minipage}  &  \begin{minipage}{2cm}$-5.60$ \footnotesize{$(-14.5;-5.0)$}\end{minipage}  &  \begin{minipage}{2cm}$-6.86$ \footnotesize{$(-47.3;-5.6)$}\end{minipage}  \\ \hline
 Boosted (T=10)  &  \begin{minipage}{2cm}$-4.56$ \footnotesize{$(-4.7;-4.5)$}\end{minipage}  &  \begin{minipage}{2cm}$-5.46$ \footnotesize{$(-5.6;-5.3)$}\end{minipage}  &  \begin{minipage}{2cm}$-5.08$ \footnotesize{$(-5.8;-4.7)$}\end{minipage}  &  \begin{minipage}{2cm}$-5.04$ \footnotesize{$(-5.5;-4.6)$}\end{minipage}  &  \begin{minipage}{2cm}$-5.51$ \footnotesize{$(-5.9;-5.1)$}\end{minipage}  &  \begin{minipage}{2cm}$-5.51$ \footnotesize{$(-6.0;-5.2)$}\end{minipage}  \\ \hline
 Beta0.2 (T=3)  &  \begin{minipage}{2cm}$-4.52$ \footnotesize{$(-4.8;-4.4)$}\end{minipage}  &  \begin{minipage}{2cm}$-5.31$ \footnotesize{$(-5.6;-5.1)$}\end{minipage}  &  \begin{minipage}{2cm}$-4.85$ \footnotesize{$(-6.3;-4.6)$}\end{minipage}  &  \begin{minipage}{2cm}$-5.33$ \footnotesize{$(-14.4;-4.8)$}\end{minipage}  &  \begin{minipage}{2cm}$-5.68$ \footnotesize{$(-26.2;-5.2)$}\end{minipage}  &  \begin{minipage}{2cm}$-6.13$ \footnotesize{$(-32.7;-5.7)$}\end{minipage}  \\ \hline
 Beta0.2 (T=10)  &  \begin{minipage}{2cm}$-4.58$ \footnotesize{$(-4.8;-4.5)$}\end{minipage}  &  \begin{minipage}{2cm}$-5.30$ \footnotesize{$(-5.5;-5.2)$}\end{minipage}  &  \begin{minipage}{2cm}$-4.94$ \footnotesize{$(-6.6;-4.6)$}\end{minipage}  &  \begin{minipage}{2cm}$-5.23$ \footnotesize{$(-5.5;-4.7)$}\end{minipage}  &  \begin{minipage}{2cm}$-5.60$ \footnotesize{$(-6.0;-5.3)$}\end{minipage}  &  \begin{minipage}{2cm}$-5.98$ \footnotesize{$(-6.1;-5.7)$}\end{minipage}  \\ \hline
 Beta0.3 (T=3)  &  \begin{minipage}{2cm}$-4.60$ \footnotesize{$(-4.9;-4.4)$}\end{minipage}  &  \begin{minipage}{2cm}$-5.34$ \footnotesize{$(-5.7;-5.2)$}\end{minipage}  &  \begin{minipage}{2cm}$-5.41$ \footnotesize{$(-5.7;-5.1)$}\end{minipage}  &  \begin{minipage}{2cm}$-5.33$ \footnotesize{$(-12.9;-4.9)$}\end{minipage}  &  \begin{minipage}{2cm}$-5.68$ \footnotesize{$(-11.0;-5.4)$}\end{minipage}  &  \begin{minipage}{2cm}$-6.41$ \footnotesize{$(-29.2;-5.6)$}\end{minipage}  \\ \hline
 Beta0.3 (T=10)  &  \begin{minipage}{2cm}$-4.57$ \footnotesize{$(-4.8;-4.4)$}\end{minipage}  &  \begin{minipage}{2cm}$-5.37$ \footnotesize{$(-5.5;-5.2)$}\end{minipage}  &  \begin{minipage}{2cm}$-5.27$ \footnotesize{$(-5.6;-5.0)$}\end{minipage}  &  \begin{minipage}{2cm}$-5.26$ \footnotesize{$(-5.6;-5.0)$}\end{minipage}  &  \begin{minipage}{2cm}$-5.71$ \footnotesize{$(-6.0;-5.3)$}\end{minipage}  &  \begin{minipage}{2cm}$-5.82$ \footnotesize{$(-6.1;-5.4)$}\end{minipage}  \\ \hline
 Beta0.4 (T=3)  &  \begin{minipage}{2cm}$-4.62$ \footnotesize{$(-4.9;-4.4)$}\end{minipage}  &  \begin{minipage}{2cm}$-5.36$ \footnotesize{$(-5.6;-5.1)$}\end{minipage}  &  \begin{minipage}{2cm}$-4.74$ \footnotesize{$(-5.3;-4.6)$}\end{minipage}  &  \begin{minipage}{2cm}$-5.34$ \footnotesize{$(-26.2;-4.9)$}\end{minipage}  &  \begin{minipage}{2cm}$-5.77$ \footnotesize{$(-37.3;-5.1)$}\end{minipage}  &  \begin{minipage}{2cm}$-12.37$ \footnotesize{$(-75.9;-5.9)$}\end{minipage}  \\ \hline
 Beta0.4 (T=10)  &  \begin{minipage}{2cm}$-4.49$ \footnotesize{$(-4.7;-4.4)$}\end{minipage}  &  \begin{minipage}{2cm}$-5.40$ \footnotesize{$(-5.7;-5.3)$}\end{minipage}  &  \begin{minipage}{2cm}$-5.08$ \footnotesize{$(-6.9;-4.7)$}\end{minipage}  &  \begin{minipage}{2cm}$-5.49$ \footnotesize{$(-5.9;-5.2)$}\end{minipage}  &  \begin{minipage}{2cm}$-5.43$ \footnotesize{$(-6.0;-5.1)$}\end{minipage}  &  \begin{minipage}{2cm}$-5.68$ \footnotesize{$(-6.2;-5.2)$}\end{minipage}  \\ \hline
 Beta0.5 (T=3)  &  \begin{minipage}{2cm}$-4.60$ \footnotesize{$(-4.9;-4.4)$}\end{minipage}  &  \begin{minipage}{2cm}$-5.40$ \footnotesize{$(-5.7;-5.3)$}\end{minipage}  &  \begin{minipage}{2cm}$-4.77$ \footnotesize{$(-5.4;-4.6)$}\end{minipage}  &  \begin{minipage}{2cm}$-5.63$ \footnotesize{$(-24.5;-5.2)$}\end{minipage}  &  \begin{minipage}{2cm}$-6.05$ \footnotesize{$(-17.9;-5.5)$}\end{minipage}  &  \begin{minipage}{2cm}$-8.29$ \footnotesize{$(-23.1;-6.1)$}\end{minipage}  \\ \hline
 Beta0.5 (T=10)  &  \begin{minipage}{2cm}$-4.62$ \footnotesize{$(-4.8;-4.4)$}\end{minipage}  &  \begin{minipage}{2cm}$-5.43$ \footnotesize{$(-5.7;-5.2)$}\end{minipage}  &  \begin{minipage}{2cm}$-5.12$ \footnotesize{$(-6.6;-4.7)$}\end{minipage}  &  \begin{minipage}{2cm}$-5.48$ \footnotesize{$(-8.4;-5.1)$}\end{minipage}  &  \begin{minipage}{2cm}$-5.85$ \footnotesize{$(-6.1;-5.3)$}\end{minipage}  &  \begin{minipage}{2cm}$-6.31$ \footnotesize{$(-7.7;-6.0)$}\end{minipage}  \\ \hline
  \end{tabular}}
\end{center}

\caption{Performance with constant $\beta$, exploring a range of possible values.
   The reported scores are the median and interval defined by the 5\% and 95\% percentile (in parenthesis) (see Section  \ref{sec:metrics}), over 35 runs for each setting.
   The top table reports the coverage $C$, probability mass of $P_d$ covered
   by the $5th$ percentile of $P_g$ defined in Section \ref{experiments}. The bottom table reports the 
   log likelihood of the true data under $P_g$.
   }

\label{table:constant_beta}
\end{table}

\begin{table}

\begin{center}
\ \hspace*{-1.5cm}
{\renewcommand{\arraystretch}{1.7}
\begin{tabular}{ | l | c|c|c|c|c|c | }
  \hline
     &   $Modes: 1$  & $Modes: 2$  & $Modes: 3$  & $Modes: 5$  & $Modes: 7$  & $Modes: 10$  \\ \hline
 Vanilla  &  \begin{minipage}{2cm}$0.96$ \footnotesize{$(0.9;1.0)$}\end{minipage}  &  \begin{minipage}{2cm}$0.90$ \footnotesize{$(0.5;1.0)$}\end{minipage}  &  \begin{minipage}{2cm}$0.65$ \footnotesize{$(0.5;1.0)$}\end{minipage}  &  \begin{minipage}{2cm}$0.61$ \footnotesize{$(0.5;0.8)$}\end{minipage}  &  \begin{minipage}{2cm}$0.69$ \footnotesize{$(0.3;0.8)$}\end{minipage}  &  \begin{minipage}{2cm}$0.59$ \footnotesize{$(0.3;0.7)$}\end{minipage}  \\ \hline
 Boosted (T=3)  &  \begin{minipage}{2cm}$0.99$ \footnotesize{$(1.0;1.0)$}\end{minipage}  &  \begin{minipage}{2cm}$0.98$ \footnotesize{$(0.9;1.0)$}\end{minipage}  &  \begin{minipage}{2cm}$0.98$ \footnotesize{$(0.9;1.0)$}\end{minipage}  &  \begin{minipage}{2cm}$0.93$ \footnotesize{$(0.8;1.0)$}\end{minipage}  &  \begin{minipage}{2cm}$0.97$ \footnotesize{$(0.8;1.0)$}\end{minipage}  &  \begin{minipage}{2cm}$0.87$ \footnotesize{$(0.6;1.0)$}\end{minipage}  \\ \hline
 Boosted (T=10)  &  \begin{minipage}{2cm}$1.00$ \footnotesize{$(1.0;1.0)$}\end{minipage}  &  \begin{minipage}{2cm}$0.99$ \footnotesize{$(1.0;1.0)$}\end{minipage}  &  \begin{minipage}{2cm}$1.00$ \footnotesize{$(1.0;1.0)$}\end{minipage}  &  \begin{minipage}{2cm}$0.99$ \footnotesize{$(0.9;1.0)$}\end{minipage}  &  \begin{minipage}{2cm}$0.99$ \footnotesize{$(0.8;1.0)$}\end{minipage}  &  \begin{minipage}{2cm}$0.97$ \footnotesize{$(0.8;1.0)$}\end{minipage}  \\ \hline
 TopKLast0.1 (T=3)  &  \begin{minipage}{2cm}$0.98$ \footnotesize{$(0.9;1.0)$}\end{minipage}  &  \begin{minipage}{2cm}$0.93$ \footnotesize{$(0.8;1.0)$}\end{minipage}  &  \begin{minipage}{2cm}$0.89$ \footnotesize{$(0.6;1.0)$}\end{minipage}  &  \begin{minipage}{2cm}$0.72$ \footnotesize{$(0.5;1.0)$}\end{minipage}  &  \begin{minipage}{2cm}$0.68$ \footnotesize{$(0.5;0.9)$}\end{minipage}  &  \begin{minipage}{2cm}$0.51$ \footnotesize{$(0.4;0.7)$}\end{minipage}  \\ \hline
 TopKLast0.1 (T=10)  &  \begin{minipage}{2cm}$0.99$ \footnotesize{$(0.9;1.0)$}\end{minipage}  &  \begin{minipage}{2cm}$0.97$ \footnotesize{$(0.8;1.0)$}\end{minipage}  &  \begin{minipage}{2cm}$0.90$ \footnotesize{$(0.7;1.0)$}\end{minipage}  &  \begin{minipage}{2cm}$0.67$ \footnotesize{$(0.4;0.9)$}\end{minipage}  &  \begin{minipage}{2cm}$0.61$ \footnotesize{$(0.5;0.8)$}\end{minipage}  &  \begin{minipage}{2cm}$0.58$ \footnotesize{$(0.4;0.8)$}\end{minipage}  \\ \hline
 TopKLast0.3 (T=3)  &  \begin{minipage}{2cm}$0.99$ \footnotesize{$(0.9;1.0)$}\end{minipage}  &  \begin{minipage}{2cm}$0.97$ \footnotesize{$(0.9;1.0)$}\end{minipage}  &  \begin{minipage}{2cm}$0.93$ \footnotesize{$(0.7;1.0)$}\end{minipage}  &  \begin{minipage}{2cm}$0.81$ \footnotesize{$(0.7;1.0)$}\end{minipage}  &  \begin{minipage}{2cm}$0.84$ \footnotesize{$(0.7;1.0)$}\end{minipage}  &  \begin{minipage}{2cm}$0.78$ \footnotesize{$(0.5;1.0)$}\end{minipage}  \\ \hline
 TopKLast0.3 (T=10)  &  \begin{minipage}{2cm}$0.99$ \footnotesize{$(1.0;1.0)$}\end{minipage}  &  \begin{minipage}{2cm}$0.98$ \footnotesize{$(0.9;1.0)$}\end{minipage}  &  \begin{minipage}{2cm}$0.95$ \footnotesize{$(0.7;1.0)$}\end{minipage}  &  \begin{minipage}{2cm}$0.94$ \footnotesize{$(0.7;1.0)$}\end{minipage}  &  \begin{minipage}{2cm}$0.89$ \footnotesize{$(0.7;1.0)$}\end{minipage}  &  \begin{minipage}{2cm}$0.88$ \footnotesize{$(0.7;1.0)$}\end{minipage}  \\ \hline
 TopKLast0.5 (T=3)  &  \begin{minipage}{2cm}$0.98$ \footnotesize{$(0.9;1.0)$}\end{minipage}  &  \begin{minipage}{2cm}$0.98$ \footnotesize{$(0.9;1.0)$}\end{minipage}  &  \begin{minipage}{2cm}$0.95$ \footnotesize{$(0.9;1.0)$}\end{minipage}  &  \begin{minipage}{2cm}$0.95$ \footnotesize{$(0.8;1.0)$}\end{minipage}  &  \begin{minipage}{2cm}$0.86$ \footnotesize{$(0.7;1.0)$}\end{minipage}  &  \begin{minipage}{2cm}$0.86$ \footnotesize{$(0.6;0.9)$}\end{minipage}  \\ \hline
 TopKLast0.5 (T=10)  &  \begin{minipage}{2cm}$0.99$ \footnotesize{$(1.0;1.0)$}\end{minipage}  &  \begin{minipage}{2cm}$0.98$ \footnotesize{$(0.9;1.0)$}\end{minipage}  &  \begin{minipage}{2cm}$0.98$ \footnotesize{$(1.0;1.0)$}\end{minipage}  &  \begin{minipage}{2cm}$0.99$ \footnotesize{$(0.8;1.0)$}\end{minipage}  &  \begin{minipage}{2cm}$0.99$ \footnotesize{$(0.8;1.0)$}\end{minipage}  &  \begin{minipage}{2cm}$1.00$ \footnotesize{$(0.8;1.0)$}\end{minipage}  \\ \hline
 TopKLast0.7 (T=3)  &  \begin{minipage}{2cm}$0.98$ \footnotesize{$(1.0;1.0)$}\end{minipage}  &  \begin{minipage}{2cm}$0.98$ \footnotesize{$(0.9;1.0)$}\end{minipage}  &  \begin{minipage}{2cm}$0.94$ \footnotesize{$(0.9;1.0)$}\end{minipage}  &  \begin{minipage}{2cm}$0.83$ \footnotesize{$(0.7;1.0)$}\end{minipage}  &  \begin{minipage}{2cm}$0.87$ \footnotesize{$(0.6;1.0)$}\end{minipage}  &  \begin{minipage}{2cm}$0.82$ \footnotesize{$(0.7;1.0)$}\end{minipage}  \\ \hline
 TopKLast0.7 (T=10)  &  \begin{minipage}{2cm}$0.99$ \footnotesize{$(1.0;1.0)$}\end{minipage}  &  \begin{minipage}{2cm}$0.99$ \footnotesize{$(1.0;1.0)$}\end{minipage}  &  \begin{minipage}{2cm}$1.00$ \footnotesize{$(1.0;1.0)$}\end{minipage}  &  \begin{minipage}{2cm}$0.98$ \footnotesize{$(0.8;1.0)$}\end{minipage}  &  \begin{minipage}{2cm}$0.99$ \footnotesize{$(0.9;1.0)$}\end{minipage}  &  \begin{minipage}{2cm}$0.95$ \footnotesize{$(0.8;1.0)$}\end{minipage}  \\ \hline
  \end{tabular}}
\end{center}

\begin{center}
\ \hspace*{-1.5cm}
{\renewcommand{\arraystretch}{1.7}
\begin{tabular}{ | l | c|c|c|c|c|c | }
  \hline
     &   $Modes: 1$  & $Modes: 2$  & $Modes: 3$  & $Modes: 5$  & $Modes: 7$  & $Modes: 10$  \\ \hline
 Vanilla  &  \begin{minipage}{2cm}$-4.94$ \footnotesize{$(-5.5;-4.4)$}\end{minipage}  &  \begin{minipage}{2cm}$-6.18$ \footnotesize{$(-51.7;-5.6)$}\end{minipage}  &  \begin{minipage}{2cm}$-31.85$ \footnotesize{$(-100.3;-5.8)$}\end{minipage}  &  \begin{minipage}{2cm}$-47.73$ \footnotesize{$(-155.1;-14.2)$}\end{minipage}  &  \begin{minipage}{2cm}$-107.36$ \footnotesize{$(-390.8;-14.8)$}\end{minipage}  &  \begin{minipage}{2cm}$-59.19$ \footnotesize{$(-264.3;-18.8)$}\end{minipage}  \\ \hline
 Boosted (T=3)  &  \begin{minipage}{2cm}$-4.56$ \footnotesize{$(-4.9;-4.4)$}\end{minipage}  &  \begin{minipage}{2cm}$-5.55$ \footnotesize{$(-5.9;-5.2)$}\end{minipage}  &  \begin{minipage}{2cm}$-5.01$ \footnotesize{$(-6.7;-4.7)$}\end{minipage}  &  \begin{minipage}{2cm}$-5.49$ \footnotesize{$(-18.7;-4.9)$}\end{minipage}  &  \begin{minipage}{2cm}$-5.60$ \footnotesize{$(-14.5;-5.0)$}\end{minipage}  &  \begin{minipage}{2cm}$-6.86$ \footnotesize{$(-47.3;-5.6)$}\end{minipage}  \\ \hline
 Boosted (T=10)  &  \begin{minipage}{2cm}$-4.56$ \footnotesize{$(-4.7;-4.5)$}\end{minipage}  &  \begin{minipage}{2cm}$-5.46$ \footnotesize{$(-5.6;-5.3)$}\end{minipage}  &  \begin{minipage}{2cm}$-5.08$ \footnotesize{$(-5.8;-4.7)$}\end{minipage}  &  \begin{minipage}{2cm}$-5.04$ \footnotesize{$(-5.5;-4.6)$}\end{minipage}  &  \begin{minipage}{2cm}$-5.51$ \footnotesize{$(-5.9;-5.1)$}\end{minipage}  &  \begin{minipage}{2cm}$-5.51$ \footnotesize{$(-6.0;-5.2)$}\end{minipage}  \\ \hline
 TopKLast0.1 (T=3)  &  \begin{minipage}{2cm}$-4.98$ \footnotesize{$(-5.2;-4.7)$}\end{minipage}  &  \begin{minipage}{2cm}$-5.64$ \footnotesize{$(-6.1;-5.4)$}\end{minipage}  &  \begin{minipage}{2cm}$-5.70$ \footnotesize{$(-6.3;-5.2)$}\end{minipage}  &  \begin{minipage}{2cm}$-5.39$ \footnotesize{$(-38.4;-5.0)$}\end{minipage}  &  \begin{minipage}{2cm}$-7.00$ \footnotesize{$(-66.6;-5.4)$}\end{minipage}  &  \begin{minipage}{2cm}$-12.70$ \footnotesize{$(-44.2;-6.7)$}\end{minipage}  \\ \hline
 TopKLast0.1 (T=10)  &  \begin{minipage}{2cm}$-4.98$ \footnotesize{$(-5.3;-4.7)$}\end{minipage}  &  \begin{minipage}{2cm}$-5.57$ \footnotesize{$(-5.9;-5.3)$}\end{minipage}  &  \begin{minipage}{2cm}$-5.37$ \footnotesize{$(-6.0;-5.0)$}\end{minipage}  &  \begin{minipage}{2cm}$-5.57$ \footnotesize{$(-45.1;-4.7)$}\end{minipage}  &  \begin{minipage}{2cm}$-7.34$ \footnotesize{$(-16.1;-5.3)$}\end{minipage}  &  \begin{minipage}{2cm}$-8.86$ \footnotesize{$(-27.6;-5.5)$}\end{minipage}  \\ \hline
 TopKLast0.3 (T=3)  &  \begin{minipage}{2cm}$-4.73$ \footnotesize{$(-5.1;-4.5)$}\end{minipage}  &  \begin{minipage}{2cm}$-5.48$ \footnotesize{$(-6.0;-5.2)$}\end{minipage}  &  \begin{minipage}{2cm}$-5.22$ \footnotesize{$(-5.7;-4.8)$}\end{minipage}  &  \begin{minipage}{2cm}$-5.42$ \footnotesize{$(-21.6;-5.0)$}\end{minipage}  &  \begin{minipage}{2cm}$-5.76$ \footnotesize{$(-13.6;-5.1)$}\end{minipage}  &  \begin{minipage}{2cm}$-7.26$ \footnotesize{$(-36.2;-5.5)$}\end{minipage}  \\ \hline
 TopKLast0.3 (T=10)  &  \begin{minipage}{2cm}$-4.62$ \footnotesize{$(-4.8;-4.5)$}\end{minipage}  &  \begin{minipage}{2cm}$-5.41$ \footnotesize{$(-5.7;-5.2)$}\end{minipage}  &  \begin{minipage}{2cm}$-4.90$ \footnotesize{$(-5.2;-4.7)$}\end{minipage}  &  \begin{minipage}{2cm}$-5.24$ \footnotesize{$(-5.8;-4.9)$}\end{minipage}  &  \begin{minipage}{2cm}$-5.71$ \footnotesize{$(-6.2;-5.1)$}\end{minipage}  &  \begin{minipage}{2cm}$-5.75$ \footnotesize{$(-7.4;-5.1)$}\end{minipage}  \\ \hline
 TopKLast0.5 (T=3)  &  \begin{minipage}{2cm}$-4.59$ \footnotesize{$(-4.9;-4.4)$}\end{minipage}  &  \begin{minipage}{2cm}$-5.29$ \footnotesize{$(-5.7;-5.2)$}\end{minipage}  &  \begin{minipage}{2cm}$-5.41$ \footnotesize{$(-5.9;-4.9)$}\end{minipage}  &  \begin{minipage}{2cm}$-5.48$ \footnotesize{$(-18.5;-5.0)$}\end{minipage}  &  \begin{minipage}{2cm}$-5.82$ \footnotesize{$(-15.6;-5.2)$}\end{minipage}  &  \begin{minipage}{2cm}$-6.78$ \footnotesize{$(-18.7;-6.0)$}\end{minipage}  \\ \hline
 TopKLast0.5 (T=10)  &  \begin{minipage}{2cm}$-4.59$ \footnotesize{$(-4.8;-4.5)$}\end{minipage}  &  \begin{minipage}{2cm}$-5.35$ \footnotesize{$(-5.6;-5.2)$}\end{minipage}  &  \begin{minipage}{2cm}$-5.12$ \footnotesize{$(-5.5;-4.9)$}\end{minipage}  &  \begin{minipage}{2cm}$-5.35$ \footnotesize{$(-5.6;-4.8)$}\end{minipage}  &  \begin{minipage}{2cm}$-5.34$ \footnotesize{$(-5.8;-4.9)$}\end{minipage}  &  \begin{minipage}{2cm}$-6.00$ \footnotesize{$(-6.3;-5.6)$}\end{minipage}  \\ \hline
 TopKLast0.7 (T=3)  &  \begin{minipage}{2cm}$-4.56$ \footnotesize{$(-4.7;-4.4)$}\end{minipage}  &  \begin{minipage}{2cm}$-5.37$ \footnotesize{$(-5.5;-5.2)$}\end{minipage}  &  \begin{minipage}{2cm}$-5.05$ \footnotesize{$(-11.1;-4.7)$}\end{minipage}  &  \begin{minipage}{2cm}$-5.63$ \footnotesize{$(-43.1;-5.0)$}\end{minipage}  &  \begin{minipage}{2cm}$-5.99$ \footnotesize{$(-24.8;-5.4)$}\end{minipage}  &  \begin{minipage}{2cm}$-7.76$ \footnotesize{$(-25.2;-5.9)$}\end{minipage}  \\ \hline
 TopKLast0.7 (T=10)  &  \begin{minipage}{2cm}$-4.52$ \footnotesize{$(-4.7;-4.5)$}\end{minipage}  &  \begin{minipage}{2cm}$-5.29$ \footnotesize{$(-5.4;-5.2)$}\end{minipage}  &  \begin{minipage}{2cm}$-5.05$ \footnotesize{$(-6.6;-4.7)$}\end{minipage}  &  \begin{minipage}{2cm}$-5.38$ \footnotesize{$(-5.9;-5.1)$}\end{minipage}  &  \begin{minipage}{2cm}$-5.77$ \footnotesize{$(-6.3;-5.3)$}\end{minipage}  &  \begin{minipage}{2cm}$-6.10$ \footnotesize{$(-6.4;-6.0)$}\end{minipage}  \\ \hline
  \end{tabular}}
\end{center}

\caption{Reweighting similar to ``Cascade GAN'' from \cite{wang2016ensembles}, i.e. keep the top $r$ fraction of examples,
 based on the discriminator corresponding to the previous generator. The mixture weights are all equal (i.e. $\beta=1/t$). 
   The reported scores are the median and interval defined by the 5\% and 95\% percentile (in parenthesis) (see Section \ref{sec:metrics}), over 35 runs for each setting.
   The top table reports the coverage $C$, probability mass of $P_d$ covered
   by the $5th$ percentile of $P_g$ defined in Section \ref{experiments}. The bottom table reports the 
   log likelihood of the true data under $P_g$.
   }

\label{table:top_k_last}
\end{table}


\begin{table}

\begin{center}
\ \hspace*{-1.5cm}
{\renewcommand{\arraystretch}{1.7}
\begin{tabular}{ | l | c|c|c|c|c|c | }
  \hline
     &   $Modes: 1$  & $Modes: 2$  & $Modes: 3$  & $Modes: 5$  & $Modes: 7$  & $Modes: 10$  \\ \hline
 Vanilla  &  \begin{minipage}{2cm}$0.97$ \footnotesize{$(0.9;1.0)$}\end{minipage}  &  \begin{minipage}{2cm}$0.77$ \footnotesize{$(0.5;1.0)$}\end{minipage}  &  \begin{minipage}{2cm}$0.65$ \footnotesize{$(0.5;0.9)$}\end{minipage}  &  \begin{minipage}{2cm}$0.70$ \footnotesize{$(0.5;0.8)$}\end{minipage}  &  \begin{minipage}{2cm}$0.61$ \footnotesize{$(0.5;0.8)$}\end{minipage}  &  \begin{minipage}{2cm}$0.58$ \footnotesize{$(0.3;0.8)$}\end{minipage}  \\ \hline
 Boosted (T=3)  &  \begin{minipage}{2cm}$0.99$ \footnotesize{$(1.0;1.0)$}\end{minipage}  &  \begin{minipage}{2cm}$0.99$ \footnotesize{$(0.9;1.0)$}\end{minipage}  &  \begin{minipage}{2cm}$0.97$ \footnotesize{$(0.9;1.0)$}\end{minipage}  &  \begin{minipage}{2cm}$0.95$ \footnotesize{$(0.8;1.0)$}\end{minipage}  &  \begin{minipage}{2cm}$0.91$ \footnotesize{$(0.8;1.0)$}\end{minipage}  &  \begin{minipage}{2cm}$0.89$ \footnotesize{$(0.8;1.0)$}\end{minipage}  \\ \hline
 Boosted (T=10)  &  \begin{minipage}{2cm}$0.99$ \footnotesize{$(1.0;1.0)$}\end{minipage}  &  \begin{minipage}{2cm}$0.99$ \footnotesize{$(1.0;1.0)$}\end{minipage}  &  \begin{minipage}{2cm}$1.00$ \footnotesize{$(1.0;1.0)$}\end{minipage}  &  \begin{minipage}{2cm}$1.00$ \footnotesize{$(1.0;1.0)$}\end{minipage}  &  \begin{minipage}{2cm}$1.00$ \footnotesize{$(1.0;1.0)$}\end{minipage}  &  \begin{minipage}{2cm}$1.00$ \footnotesize{$(1.0;1.0)$}\end{minipage}  \\ \hline
 TopK0.1 (T=3)  &  \begin{minipage}{2cm}$0.98$ \footnotesize{$(0.9;1.0)$}\end{minipage}  &  \begin{minipage}{2cm}$0.98$ \footnotesize{$(0.8;1.0)$}\end{minipage}  &  \begin{minipage}{2cm}$0.91$ \footnotesize{$(0.7;1.0)$}\end{minipage}  &  \begin{minipage}{2cm}$0.84$ \footnotesize{$(0.7;1.0)$}\end{minipage}  &  \begin{minipage}{2cm}$0.80$ \footnotesize{$(0.5;0.9)$}\end{minipage}  &  \begin{minipage}{2cm}$0.60$ \footnotesize{$(0.4;0.7)$}\end{minipage}  \\ \hline
 TopK0.1 (T=10)  &  \begin{minipage}{2cm}$0.99$ \footnotesize{$(1.0;1.0)$}\end{minipage}  &  \begin{minipage}{2cm}$1.00$ \footnotesize{$(1.0;1.0)$}\end{minipage}  &  \begin{minipage}{2cm}$0.98$ \footnotesize{$(1.0;1.0)$}\end{minipage}  &  \begin{minipage}{2cm}$1.00$ \footnotesize{$(1.0;1.0)$}\end{minipage}  &  \begin{minipage}{2cm}$1.00$ \footnotesize{$(1.0;1.0)$}\end{minipage}  &  \begin{minipage}{2cm}$0.96$ \footnotesize{$(0.8;1.0)$}\end{minipage}  \\ \hline
 TopK0.3 (T=3)  &  \begin{minipage}{2cm}$0.98$ \footnotesize{$(0.9;1.0)$}\end{minipage}  &  \begin{minipage}{2cm}$0.98$ \footnotesize{$(0.9;1.0)$}\end{minipage}  &  \begin{minipage}{2cm}$0.95$ \footnotesize{$(0.9;1.0)$}\end{minipage}  &  \begin{minipage}{2cm}$0.95$ \footnotesize{$(0.8;1.0)$}\end{minipage}  &  \begin{minipage}{2cm}$0.84$ \footnotesize{$(0.6;1.0)$}\end{minipage}  &  \begin{minipage}{2cm}$0.79$ \footnotesize{$(0.5;1.0)$}\end{minipage}  \\ \hline
 TopK0.3 (T=10)  &  \begin{minipage}{2cm}$0.99$ \footnotesize{$(1.0;1.0)$}\end{minipage}  &  \begin{minipage}{2cm}$0.99$ \footnotesize{$(1.0;1.0)$}\end{minipage}  &  \begin{minipage}{2cm}$0.98$ \footnotesize{$(1.0;1.0)$}\end{minipage}  &  \begin{minipage}{2cm}$1.00$ \footnotesize{$(1.0;1.0)$}\end{minipage}  &  \begin{minipage}{2cm}$1.00$ \footnotesize{$(1.0;1.0)$}\end{minipage}  &  \begin{minipage}{2cm}$1.00$ \footnotesize{$(1.0;1.0)$}\end{minipage}  \\ \hline
 TopK0.5 (T=3)  &  \begin{minipage}{2cm}$0.99$ \footnotesize{$(0.9;1.0)$}\end{minipage}  &  \begin{minipage}{2cm}$0.99$ \footnotesize{$(1.0;1.0)$}\end{minipage}  &  \begin{minipage}{2cm}$0.96$ \footnotesize{$(0.9;1.0)$}\end{minipage}  &  \begin{minipage}{2cm}$0.98$ \footnotesize{$(0.8;1.0)$}\end{minipage}  &  \begin{minipage}{2cm}$0.88$ \footnotesize{$(0.7;1.0)$}\end{minipage}  &  \begin{minipage}{2cm}$0.88$ \footnotesize{$(0.6;1.0)$}\end{minipage}  \\ \hline
 TopK0.5 (T=10)  &  \begin{minipage}{2cm}$1.00$ \footnotesize{$(1.0;1.0)$}\end{minipage}  &  \begin{minipage}{2cm}$0.99$ \footnotesize{$(1.0;1.0)$}\end{minipage}  &  \begin{minipage}{2cm}$0.99$ \footnotesize{$(1.0;1.0)$}\end{minipage}  &  \begin{minipage}{2cm}$1.00$ \footnotesize{$(1.0;1.0)$}\end{minipage}  &  \begin{minipage}{2cm}$1.00$ \footnotesize{$(1.0;1.0)$}\end{minipage}  &  \begin{minipage}{2cm}$1.00$ \footnotesize{$(1.0;1.0)$}\end{minipage}  \\ \hline
 TopK0.7 (T=3)  &  \begin{minipage}{2cm}$0.98$ \footnotesize{$(1.0;1.0)$}\end{minipage}  &  \begin{minipage}{2cm}$0.98$ \footnotesize{$(0.9;1.0)$}\end{minipage}  &  \begin{minipage}{2cm}$0.94$ \footnotesize{$(0.8;1.0)$}\end{minipage}  &  \begin{minipage}{2cm}$0.84$ \footnotesize{$(0.8;1.0)$}\end{minipage}  &  \begin{minipage}{2cm}$0.86$ \footnotesize{$(0.7;1.0)$}\end{minipage}  &  \begin{minipage}{2cm}$0.81$ \footnotesize{$(0.7;1.0)$}\end{minipage}  \\ \hline
 TopK0.7 (T=10)  &  \begin{minipage}{2cm}$0.99$ \footnotesize{$(1.0;1.0)$}\end{minipage}  &  \begin{minipage}{2cm}$0.99$ \footnotesize{$(1.0;1.0)$}\end{minipage}  &  \begin{minipage}{2cm}$0.99$ \footnotesize{$(1.0;1.0)$}\end{minipage}  &  \begin{minipage}{2cm}$1.00$ \footnotesize{$(0.8;1.0)$}\end{minipage}  &  \begin{minipage}{2cm}$1.00$ \footnotesize{$(0.9;1.0)$}\end{minipage}  &  \begin{minipage}{2cm}$1.00$ \footnotesize{$(0.9;1.0)$}\end{minipage}  \\ \hline
  \end{tabular}}
\end{center}

\begin{center}
\ \hspace*{-1.5cm}
{\renewcommand{\arraystretch}{1.7}
\begin{tabular}{ | l | c|c|c|c|c|c | }
  \hline
     &   $Modes: 1$  & $Modes: 2$  & $Modes: 3$  & $Modes: 5$  & $Modes: 7$  & $Modes: 10$  \\ \hline
 Vanilla  &  \begin{minipage}{2cm}$-4.61$ \footnotesize{$(-5.5;-4.4)$}\end{minipage}  &  \begin{minipage}{2cm}$-5.92$ \footnotesize{$(-94.2;-5.2)$}\end{minipage}  &  \begin{minipage}{2cm}$-12.40$ \footnotesize{$(-53.1;-5.3)$}\end{minipage}  &  \begin{minipage}{2cm}$-59.62$ \footnotesize{$(-154.6;-9.8)$}\end{minipage}  &  \begin{minipage}{2cm}$-66.95$ \footnotesize{$(-191.5;-9.7)$}\end{minipage}  &  \begin{minipage}{2cm}$-63.49$ \footnotesize{$(-431.6;-14.5)$}\end{minipage}  \\ \hline
 Boosted (T=3)  &  \begin{minipage}{2cm}$-4.59$ \footnotesize{$(-4.9;-4.4)$}\end{minipage}  &  \begin{minipage}{2cm}$-5.32$ \footnotesize{$(-5.7;-5.2)$}\end{minipage}  &  \begin{minipage}{2cm}$-5.60$ \footnotesize{$(-5.8;-5.5)$}\end{minipage}  &  \begin{minipage}{2cm}$-5.40$ \footnotesize{$(-24.2;-4.5)$}\end{minipage}  &  \begin{minipage}{2cm}$-5.71$ \footnotesize{$(-14.0;-5.1)$}\end{minipage}  &  \begin{minipage}{2cm}$-6.96$ \footnotesize{$(-17.1;-5.9)$}\end{minipage}  \\ \hline
 Boosted (T=10)  &  \begin{minipage}{2cm}$-4.61$ \footnotesize{$(-4.7;-4.5)$}\end{minipage}  &  \begin{minipage}{2cm}$-5.30$ \footnotesize{$(-5.4;-5.2)$}\end{minipage}  &  \begin{minipage}{2cm}$-5.48$ \footnotesize{$(-5.6;-5.2)$}\end{minipage}  &  \begin{minipage}{2cm}$-4.84$ \footnotesize{$(-5.1;-4.3)$}\end{minipage}  &  \begin{minipage}{2cm}$-5.25$ \footnotesize{$(-5.9;-4.8)$}\end{minipage}  &  \begin{minipage}{2cm}$-5.95$ \footnotesize{$(-6.1;-5.5)$}\end{minipage}  \\ \hline
 TopK0.1 (T=3)  &  \begin{minipage}{2cm}$-4.93$ \footnotesize{$(-5.3;-4.7)$}\end{minipage}  &  \begin{minipage}{2cm}$-5.85$ \footnotesize{$(-6.2;-5.4)$}\end{minipage}  &  \begin{minipage}{2cm}$-5.38$ \footnotesize{$(-5.7;-5.0)$}\end{minipage}  &  \begin{minipage}{2cm}$-5.34$ \footnotesize{$(-5.8;-4.8)$}\end{minipage}  &  \begin{minipage}{2cm}$-5.79$ \footnotesize{$(-32.1;-5.2)$}\end{minipage}  &  \begin{minipage}{2cm}$-7.09$ \footnotesize{$(-20.7;-5.9)$}\end{minipage}  \\ \hline
 TopK0.1 (T=10)  &  \begin{minipage}{2cm}$-4.60$ \footnotesize{$(-4.8;-4.5)$}\end{minipage}  &  \begin{minipage}{2cm}$-5.47$ \footnotesize{$(-5.7;-5.3)$}\end{minipage}  &  \begin{minipage}{2cm}$-4.81$ \footnotesize{$(-5.1;-4.7)$}\end{minipage}  &  \begin{minipage}{2cm}$-4.90$ \footnotesize{$(-5.3;-4.2)$}\end{minipage}  &  \begin{minipage}{2cm}$-4.85$ \footnotesize{$(-5.6;-4.1)$}\end{minipage}  &  \begin{minipage}{2cm}$-4.57$ \footnotesize{$(-5.3;-4.2)$}\end{minipage}  \\ \hline
 TopK0.3 (T=3)  &  \begin{minipage}{2cm}$-4.65$ \footnotesize{$(-4.9;-4.4)$}\end{minipage}  &  \begin{minipage}{2cm}$-5.40$ \footnotesize{$(-5.9;-5.3)$}\end{minipage}  &  \begin{minipage}{2cm}$-4.98$ \footnotesize{$(-6.2;-4.7)$}\end{minipage}  &  \begin{minipage}{2cm}$-5.25$ \footnotesize{$(-11.4;-4.7)$}\end{minipage}  &  \begin{minipage}{2cm}$-5.96$ \footnotesize{$(-28.0;-5.5)$}\end{minipage}  &  \begin{minipage}{2cm}$-7.34$ \footnotesize{$(-25.4;-5.9)$}\end{minipage}  \\ \hline
 TopK0.3 (T=10)  &  \begin{minipage}{2cm}$-4.56$ \footnotesize{$(-4.8;-4.5)$}\end{minipage}  &  \begin{minipage}{2cm}$-5.32$ \footnotesize{$(-5.5;-5.2)$}\end{minipage}  &  \begin{minipage}{2cm}$-5.07$ \footnotesize{$(-5.9;-4.7)$}\end{minipage}  &  \begin{minipage}{2cm}$-5.08$ \footnotesize{$(-5.4;-4.5)$}\end{minipage}  &  \begin{minipage}{2cm}$-5.16$ \footnotesize{$(-5.9;-4.9)$}\end{minipage}  &  \begin{minipage}{2cm}$-5.82$ \footnotesize{$(-6.2;-5.3)$}\end{minipage}  \\ \hline
 TopK0.5 (T=3)  &  \begin{minipage}{2cm}$-4.60$ \footnotesize{$(-4.8;-4.5)$}\end{minipage}  &  \begin{minipage}{2cm}$-5.34$ \footnotesize{$(-5.6;-5.2)$}\end{minipage}  &  \begin{minipage}{2cm}$-5.34$ \footnotesize{$(-5.7;-5.0)$}\end{minipage}  &  \begin{minipage}{2cm}$-5.42$ \footnotesize{$(-19.0;-5.0)$}\end{minipage}  &  \begin{minipage}{2cm}$-5.59$ \footnotesize{$(-34.7;-4.9)$}\end{minipage}  &  \begin{minipage}{2cm}$-6.15$ \footnotesize{$(-14.8;-5.6)$}\end{minipage}  \\ \hline
 TopK0.5 (T=10)  &  \begin{minipage}{2cm}$-4.59$ \footnotesize{$(-4.7;-4.5)$}\end{minipage}  &  \begin{minipage}{2cm}$-5.31$ \footnotesize{$(-5.4;-5.2)$}\end{minipage}  &  \begin{minipage}{2cm}$-5.13$ \footnotesize{$(-5.5;-4.9)$}\end{minipage}  &  \begin{minipage}{2cm}$-5.35$ \footnotesize{$(-5.7;-4.8)$}\end{minipage}  &  \begin{minipage}{2cm}$-5.33$ \footnotesize{$(-5.8;-4.8)$}\end{minipage}  &  \begin{minipage}{2cm}$-5.72$ \footnotesize{$(-6.2;-5.3)$}\end{minipage}  \\ \hline
 TopK0.7 (T=3)  &  \begin{minipage}{2cm}$-4.60$ \footnotesize{$(-5.0;-4.4)$}\end{minipage}  &  \begin{minipage}{2cm}$-5.44$ \footnotesize{$(-5.6;-5.2)$}\end{minipage}  &  \begin{minipage}{2cm}$-5.62$ \footnotesize{$(-6.0;-5.4)$}\end{minipage}  &  \begin{minipage}{2cm}$-5.49$ \footnotesize{$(-22.2;-5.0)$}\end{minipage}  &  \begin{minipage}{2cm}$-5.64$ \footnotesize{$(-27.7;-5.3)$}\end{minipage}  &  \begin{minipage}{2cm}$-7.17$ \footnotesize{$(-22.5;-6.0)$}\end{minipage}  \\ \hline
 TopK0.7 (T=10)  &  \begin{minipage}{2cm}$-4.59$ \footnotesize{$(-4.7;-4.5)$}\end{minipage}  &  \begin{minipage}{2cm}$-5.34$ \footnotesize{$(-5.5;-5.2)$}\end{minipage}  &  \begin{minipage}{2cm}$-5.51$ \footnotesize{$(-5.6;-5.4)$}\end{minipage}  &  \begin{minipage}{2cm}$-5.35$ \footnotesize{$(-5.8;-5.0)$}\end{minipage}  &  \begin{minipage}{2cm}$-5.32$ \footnotesize{$(-6.0;-5.1)$}\end{minipage}  &  \begin{minipage}{2cm}$-6.11$ \footnotesize{$(-6.4;-5.9)$}\end{minipage}  \\ \hline
  \end{tabular}}
\end{center}

\caption{Reweighting using the top $r$ fraction of examples, based on the discriminator corresponding to
the mixture of all previous generators. The mixture weights are all equal (i.e. $\beta=1/t$).
   The reported scores are the median and interval defined by the 5\% and 95\% percentile (in parenthesis) (see Section \ref{sec:metrics}), over 35 runs for each setting.
   The top table reports the coverage $C$, probability mass of $P_d$ covered
   by the $5th$ percentile of $P_g$ defined in Section \ref{experiments}. The bottom table reports the 
   log likelihood of the true data under $P_g$.
   }

\label{table:top_k_all}
\end{table}

\end{document}